\def\eqref#1{equation~\ref{#1}}
\def\Eqref#1{Equation~\ref{#1}}
\def\floor#1{\lfloor #1 \rfloor}
\def\1{\bm{1}}
\DeclareMathAlphabet{\mathsfit}{\encodingdefault}{\sfdefault}{m}{sl}
\SetMathAlphabet{\mathsfit}{bold}{\encodingdefault}{\sfdefault}{bx}{n}
\newcommand{\R}{\mathbb{R}}
\definecolor{tabred}{rgb}{0.878, 0.404, 0.407}
\definecolor{tabblue}{rgb}{0.384, 0.624, 0.792}
\definecolor{tabgreen}{rgb}{0.173, 0.627, 0.173}
\definecolor{taborange}{rgb}{1.0, 0.620, 0.286}
\definecolor{tabpurple}{rgb}{0.702, 0.580, 0.812}
\newcommand{\tabred}[1]{\textcolor{tabred}{#1}}
\newcommand{\tabblue}[1]{\textcolor{tabblue}{#1}}
\newcommand{\tabgreen}[1]{\textcolor{tabgreen}{#1}}
\newcommand{\taborange}[1]{\textcolor{taborange}{#1}}
\newcommand{\tabpurple}[1]{\textcolor{tabpurple}{#1}}
\newcommand{\algspacing}{0.3em}
\newtheorem{theorem}{Theorem}[section]
\newtheorem{heuristic}[theorem]{Heuristic}
\newtheorem{lemma}[theorem]{Lemma}
\newtheorem{definition}[theorem]{Definition}
\numberwithin{equation}{section}
\newcommand{\abs}[1]{\left| #1 \right|}
\title{\includegraphics[scale=0.06]{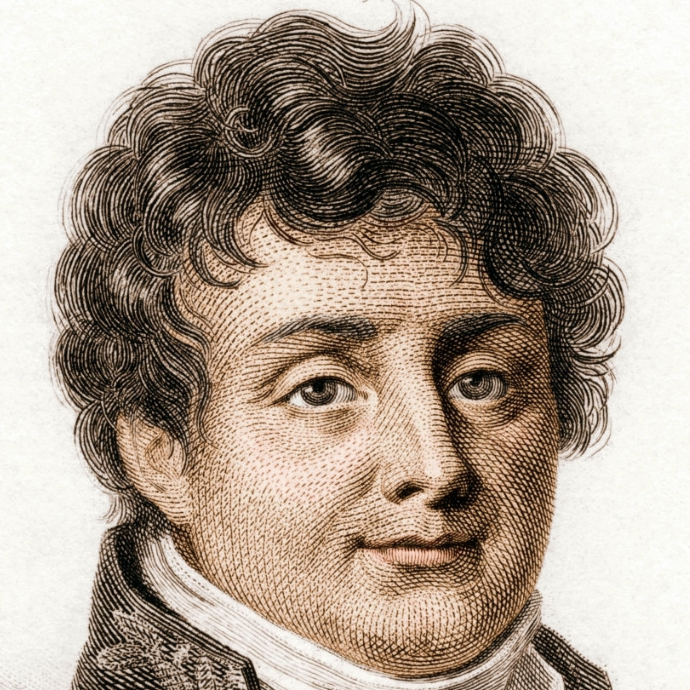}
Fourier Head: \\
Helping Large Language Models\\
Learn Complex Probability Distributions}
\author{Nate Gillman\textsuperscript{*,1}
\thanks{Equal contribution. Correspondence to: \href{mailto:nate_gillman@brown.edu}{nate\_gillman@brown.edu},
\href{mailto:chensun@brown.edu}{chensun@brown.edu}.},
Daksh Aggarwal\textsuperscript{*,1}, 
Michael Freeman\textsuperscript{1}, 
Saurabh Singh\textsuperscript{2}, 
Chen Sun\textsuperscript{1,2}  \\
\textsuperscript{1}Brown University, 
\textsuperscript{2}Google DeepMind}
\begin{document}
\doparttoc 
\faketableofcontents 

\part{} 

\maketitle

\begin{abstract}
As the quality of large language models has improved, there has been increased interest in using them to model non-linguistic tokens.
For example, the Decision Transformer recasts agentic decision making as a sequence modeling problem, using a decoder-only LLM to model the distribution over the discrete action space for an Atari agent.
However, when adapting LLMs to non-linguistic domains, it remains unclear if softmax over discrete bins captures the continuous structure of the tokens and the potentially complex distributions needed for high quality token generation.
We introduce a neural network layer, constructed using Fourier series, which we can easily substitute for any linear layer if we want the outputs to have a more continuous structure.
We perform extensive analysis on synthetic datasets, as well as on large-scale decision making and time series forecasting tasks.
We also provide theoretical evidence that this layer can better learn signal from data while ignoring high-frequency noise.
All of our results support the effectiveness of our proposed Fourier head in scenarios where the underlying data distribution has a natural continuous structure.
For example, the Fourier head improves a Decision Transformer agent's returns across four benchmark Atari games by as much as 377\%, and increases a state-of-the-art times series foundation model's forecasting performance by 3.5\% across 20 benchmarks unseen during training.
We release our implementation at \url{https://nategillman.com/fourier-head}.
\end{abstract}

\begin{figure}[h!]
\begin{center}
    Fourier Head Learns Higher Quality Densities\par\medskip
    \includegraphics[scale=0.48]{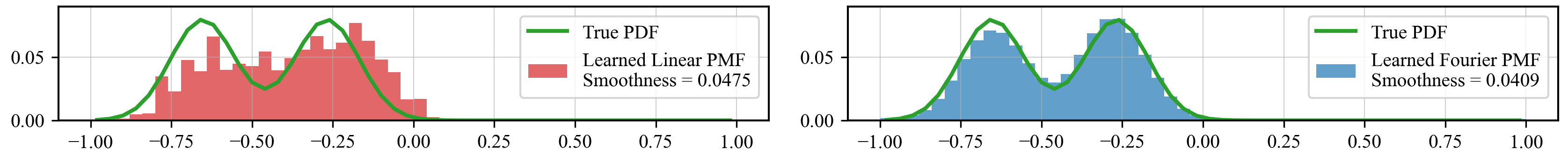}
\end{center}
\caption{We task an MLP with learning to approximate a \tabgreen{continuous bimodal density} using a categorical distribution and a cross-entropy objective.
We observe that a standard \tabred{linear head}  fails to distinguish between the two modes, and overfits to high-frequency noise in the training set.
In contrast, our proposed \tabblue{Fourier head} learns a smoother, more accurate categorical distribution.}
\label{fig:toy_example_1d_cropped}
\end{figure}

\section{Introduction}

Human language can be viewed as a discretization for a \textit{continuous}, often \textit{probabilistic} representation of the world that is construed in our mind~\citep{spivey2008continuity}. The continuous structure can be partially captured by language models with their token embeddings, where ``nearby'' tokens are embedded to have latent representations with high cosine similarities. 
The embeddings themselves are acquired as a result of the data-driven learning process. 
Can we, based on rich prior knowledge about the continuous world, inform the language model about the underlying continuity of its inputs, like the fact that the word ``emerald'' is more similar to ``shamrock'' than ``pine'' when they are used to describe different shades of green? 
As large language models (LLMs) have evolved into ``foundation models'' that are adapted to a diverse range of tasks, tokens that are \textit{a priori} continuous are more essential than ever, for example for arithmetic computations~\citep{liu2023improving}, decision making with continuous or discrete actions~\citep{chen2021decisiontransformer}, future anticipation and time-series forecasting~\citep{ansari2024chronos}, or simply drawing random numbers given a probability distribution~\citep{hopkins2023can}.

We view the problem of informing LLMs to utilize the continuity prior from the perspective of probability density estimation. For simplicity, we adopt the standard next token prediction framework whose training objective is softmax cross-entropy.
Assuming non-overlapping vocabulary, continuous values can be discretized via binning~\citep{ansari2024chronos}. 
On one hand, the linear head adopted by LLMs independently projects each token into probabilities, and has the expressive power to flexibly approximate arbitrary probability density functions subject to the ``quantization'' errors. 
The linear head however does not consider any continuous structure that resides among the tokens (i.e. a random re-shuffle of the tokens in the vocabulary would not change the predictions). 
On the other hand, a head based on a parameterized distribution (e.g. Gaussian or Gaussian Mixtures) naturally incorporates the continuous structure, but is often too simple (and overly ``smooth'') to account for multi-modal distributions for future prediction or decision making. Can we design a head that is both expressive and incorporates continuous structures?

We introduce the Fourier head, motivated by Fourier series as universal function approximators. 
\textbf{The Fourier head learns a \emph{continuous} probability density function, and returns a \emph{discrete} approximation of it.}
Intuitively, returning a discretization of a continuous density in this way allows the classification head to better model the low-frequency signals from the training data, because the Fourier head is forced to approximate the categorical distributions using a finite number of frequencies.
At a high level, the Fourier head inputs $x\in\mathbb R^n$, uses a linear layer to learn the coefficients for a Fourier series with $N$ frequencies over $[-1,1]$, and quantizes the interval $[-1,1]$ into $m$ equal bins.
Then, the Fourier head evaluates the learned Fourier PDF at those $m$ bin center points, and returns those $m$ likelihoods as a categorical distribution.
The Fourier head builds upon the Fourier Basis Density Model \citep{de2024fourier}.

\emph{Our main contributions are as follows.}

\textbf{Contribution \#1}: We reveal the underlying principle on the trade-off between the Fourier head's expressive power and the ``smoothness'' of the predicted distributions. 
    We prove a theorem which demonstrates a scaling law for the Fourier head.
    Namely, as we increase the quantity of Fourier coefficients learned by the Fourier head, the layer can model increasingly complicated distributions; however, the Fourier head will necessarily fit to more high-frequency noise, thereby outputting categorical distributions which are less smooth.

\textbf{Contribution \#2}: We propose a practical implementation of the Fourier head capable of sequential prediction tasks by modeling complex multi-modal distributions. 
    Additionally, we propose strategies to improve the layer's performance, including Fourier coefficient norm regularization, weight initialization, and the choice of how many Fourier frequencies to use.

\textbf{Contribution \#3}: We demonstrate the effectiveness of the Fourier head on two large scale tasks, where intuitively a continuity inductive bias over the output dimensions ought to help the model's generation performance. 
In the first task, an offline RL agent which uses a decoder-only transformer to model the next-action distribution for an Atari agent, we improve returns across four benchmark games by as much as 377\%.
In the second, we outperform a state-of-the-art time series foundation model on zero-shot forecasting by 3.5\% across a benchmark of 20 datasets unseen during training.

\section{Fourier Head}

\subsection{Fourier Head: Motivation}

When practitioners apply LLMs to model complex probability distributions over non-linguistic tokens, a standard technique is to quantize the latent space into $m$ tokens and learn a conditional categorical distribution over those tokens.
We share two examples here:

\textbf{Example 1:} The Decision Transformer \citep{chen2021decisiontransformer} models an Atari agent's behavior in the Seaquest game by learning a categorical distribution over the 18 possible actions (move left, move right, shoot left, etc.).
    They use an decoder-only transformer architecture.

\textbf{Example 2:} The Chronos time series foundation model \citep{ansari2024chronos} models the distribution of next numerical values by quantizing the closed interval $[-15,15]$ into $4096$ bins, and learning a categorical distribution over those bins.
    They use an encoder-decoder transformer.

In a pure language modeling task, token ID $1000$ and token ID $1001$ likely represent unrelated words.
However, in a task where the token IDs represent numerical values, the token ID $1000$ and $1001$ would represent numbers that are close together.

The final layers of an LLM for such a task are generally a linear layer, followed by softmax, followed by cross-entropy loss.
We hypothesize that in scenarios where nearby token IDs encode similar items, an inductive bias that encourages them to have similar probabilities will improve performance. A generic linear layer learns an unstructured categorical distribution and thereby allows more arbitrary probabilities.
\textbf{In this work, we propose to give the model this inductive bias by letting the classification head learn a categorical distribution as the discretization of a continuous learned function from a suitably flexible class.}
We consider the very flexible class of truncated Fourier series with $N$ frequencies.
These are functions of the form
\begin{equation}\label{eq:fourier_series_truncated}
    f(x) = a_0 + \sum_{k=1}^N \big(a_k\cos(k\pi x) +b_k\sin(k\pi x)\big).
\end{equation}
Fourier series are a classical tool for solving quantitative problems \citep{stein2011fourier} because functions like \Eqref{eq:fourier_series_truncated} are universal function approximators, with the approximation improving as $N$ increases.

\subsection{Fourier Head: Definition}

We now propose a replacement for the generic linear layer token classification head, built using Fourier series.
We call our replacement the \textbf{Fourier Series Classification Head}, or the \textbf{Fourier head} for short.
The Fourier head inputs any vector $x\in\mathbb R^n$, and outputs a categorical distribution in $\mathbb R^m$.
For a high level summary of how it works--\emph{the Fourier head inputs $x\in\mathbb R^m$, uses a linear layer to extract the coefficients for a Fourier series over $[-1,1]$, quantizes the interval $[-1,1]$ into $m$ equal bins, evaluates the learned Fourier PDF at those $m$ bin centerpoints, and returns those $m$ likelihoods as a categorical distribution.}
We formally define this layer in Algorithm~\ref{alg:fourier_head}.
The Fourier head is constructed using the Fourier Basis Density Model from \citep{de2024fourier}.
For more details on the original method (e.g. justification for how learning the autocorrelation coefficients guarantees that the Fourier series has integral $1$, and justification for normalizing the Fourier coefficients by $\Re(c_0)$) we refer the author to \citep{de2024fourier}.
We direct the curious reader to Appendix~\ref{subsection:audio_example} for a low-dimensional demonstration of the Fourier head in action.

\begin{algorithm}
\caption{Fourier head}\label{alg:fourier_head}
\algrenewcommand\algorithmiccomment[1]{\hfill\parbox[t]{.45\linewidth}{\raggedright\textcolor{gray}{// #1}}}
\begin{algorithmic}
\Require the input dimension $n$, output dimension $m$, number of frequencies $N$
\vspace{\algspacing}
\Ensure define a linear layer $A:\mathbb R^n\to\mathbb R^{2(N+1)}$
\textcolor{gray}{// maps input to autocorrelation coefficients}
\vspace{\algspacing}
\State INPUT $x=(x_1,\dots,x_n)\in\mathbb R^n$
\vspace{\algspacing}
\State $(\alpha_0, \beta_0, \dots,\alpha_{N},\beta_{N}) \gets Ax$ 
\vspace{\algspacing}
\State $a_k \gets \alpha_k+i\beta_k\in\mathbb C$, for every $k=0,\dots,N$
\Comment{compute autocorrelation coefficients}
\vspace{\algspacing}
\State $c_k \gets \sum_{\ell=0}^{N-k}a_\ell a_{\ell+k}^*\in\mathbb C$, for every $k=0,\dots,N$
\Comment{compute Fourier coefficients}
\vspace{\algspacing}
\State $p(z) = \frac{1}{2}+\Re\left(\sum_{k=1}^N\frac{c_k}{\Re(c_0)}\exp(ik\pi z)\right)$
\Comment{define Fourier PDF over $[-1,1]$}
\vspace{\algspacing}
\State $b_k \gets -1+\frac{1 + 2k}{m}$, for every $k=0,\dots,m-1$ 
\Comment{define $m$ bin centerpoints}
\vspace{\algspacing}
\State $y_k \gets \frac{p(b_k)}{\sum_{j=0}^{m-1}p(b_j)}$, for every $k=0,\dots,m-1$ 
\Comment{evaluate PDF at $m$ bin centerpoints}
\vspace{\algspacing}
\State OUTPUT $(y_1,\dots y_m)\in\mathbb R^m$ 
\Comment{by design, $\sum_{k=1}^m y_k=1$ and each $y_k\ge 0$}
\end{algorithmic}
\end{algorithm}

\subsection{Fourier Head: Considerations for Training}\label{sec:mathematical_considerations}

We highlight the main design choices of the Fourier head so that users may apply it most effectively.

\textbf{Training objective:} The Fourier head inputs a signal $x\in\mathbb R^n$ and extracts from that signal an intermediate representation of a probability distribution $p_x(z)$ defined over $z\in[-1,1]$.
This probability distribution has a closed formula equal to a Fourier series.
In our experiments, we optimize the parameters of the Fourier PDF by discretizing it over the latent space and training using cross-entropy loss.
However, we should note that the Fourier layer allows MLE training directly on continuous values, by evaluating the Fourier PDF directly on the ground truth value in the latent space.
But for consistency of comparison, and to demonstrate how easy it is to swap the Fourier head with a linear layer, we use softmax cross-entropy loss as the objective.

\textbf{Choice of hyperparameter $N$:} The Fourier head has one crucial hyperparameter--namely, the number of frequencies. 
How should one choose this in practice?
We offer Theorem~\ref{thm:scaling_law} as guiding principle beyond simple trial and error.
This result provides a scaling law which formalizes the smoothness-expressive power trade-off in choosing the number of frequencies.
In general, using more frequencies leads to more expressive power, and generally better success metrics, but at the cost of a learning less smooth densities, as well as more model parameters.

\textbf{Fourier regularization:} 
For a given number of frequencies $N$, there could be many learned Fourier models that fit the given data equally well. To encourage a smoother learned model and penalize unnecessary high frequency content, we follow \citep{de2024fourier} and add a regularization term that measures the total squared variation for the Fourier model, to prevent higher order Fourier coefficients from growing too large during training.
This helps ensure that the learned Fourier PDF doesn't overfit to noise in the data, and therefore has a bias towards learning smoother densities.
In the notation from Algorithm~\ref{alg:fourier_head}, this means adding a regularization term of $\gamma\cdot\frac{2\pi^2}{m}\sum_{k=1}^m k^2|c_k|^2$ to the loss function, where $\gamma$ is a hyperparameter.
When picking regularization strength, we find that in the low-frequency domain (e.g. frequencies in the single digits) using $\gamma=0$ works best, and in the high-frequency domain (e.g. greater than 10 frequencies), using $\gamma=10^{-6}$ works best.

\textbf{Binning strategy:} The choice of data binning can impact performance. 
As discussed, the Fourier head should only be applied when nearby bins are   `similar' in some sense, requiring a semantically meaningful bin ordering. 
When bins represent quantized numerical values over a continuous latent space, a `mixed-precision' binning strategy can improve performance.
For example, to model values in $[-15,15]$ with most data in $[-1,10]$, allocating more bins to the dense interval improves performance. 
Given $m$ total bins, a hyperparameter $d \in [0, 1)$ controls allocation, with $\floor{d\cdot m}$ bins for the sparse interval and the rest for the dense range (estimated from training data). 
Fourier theory supports this approach, as increasing precision in dense regions de-localizes the quantized data distribution, localizing the Fourier spectrum. This accelerates higher frequency decay, enabling effective learning with lower-frequency Fourier heads. 
Separately, we note that \citep{de2024fourier} suggests re-parameterizing the periodic domain to the real line, though we do not use this in our work.

\textbf{Weight initialization:} The learned parameters for the Fourier head consist of the learned linear layer which extracts autocorrelation parameters $a_k$.
In PyTorch, linear layers use He initialization \citep{he2015delving} by default, which ensures that the linear layer outputs values close to zero in expectation.
Similarly, initializing  the Fourier densities to be uniform $p(z)\approx 1/2$ improves learning dynamics.
We accomplish this by dividing the weights and biases by a large number, such as $1000$, after He initialization; this guarantees that the linear layer outputs very small values, so that Fourier coefficients output from the autocorrelation step are very small as well.

\section{Theoretical Analysis of Fourier Head}

\subsection{``Smoothness'': A Metric for High Frequency Content}

In this subsection we propose a smoothness metric which inputs a categorical distribution $y=(y_1,\dots,y_m)\in\mathbb R^m$, and assigns a numerical value depending on how smooth it is.
The score will output $0$ if $y$ is the smoothest possible categorical distribution, and larger values if $y$ is less smooth. 
We will first specify what we mean by ``smooth'':

\begin{heuristic}\label{heuristic:smoothness}
We say a function is \textbf{smooth} if it contains \textbf{very little high-frequency information}.
\end{heuristic}

For example, the uniform categorical distribution contains no high-frequency information, so it is the smoothest possible function, and should get a smoothness score of $0$.
In contrast, a categorical distribution containing samples from $\sin(100\pi x)$ contains lots of high frequency information, so it should get a smoothness score greater than $0$.
We seek to define a metric which measures smoothness according to Heuristic~\ref{heuristic:smoothness}.

We will first develop a smoothness metric in the general case of a function $f:[a,b]\to\mathbb R$, then specialize to case of the discrete categorical distribution that we consider in the paper.
If we let $\alpha_\sigma\in\R$ be weights satisfying $\int_0^\infty\alpha_\sigma d\sigma=1$, and $D$ be some measure of discrepancy such as $L^2$, and let $g_\sigma(x)*f(x)$ denote the convolution of $f(x)$ with a Gaussian kernel of standard deviation $\sigma$, then it is reasonable to define the smoothness of $f$ to be the quantity
\begin{equation} \label{eq:1}
    s(f):=\int_0^\infty \int_a^b \alpha_\sigma D[f(x), g_\sigma(x)*f(x)]dxd\sigma.
\end{equation}

In this expression, the discrepancy $D[f(x), g_\sigma(x)*f(x)]$ measures how different $f(x)$ is from a Gaussian-smoothed version of itself.
Because the Gaussian is a low-pass filter, we can interpret \Eqref{eq:1} as saying, at a high level, that a function is  ``smooth'' if it doesn't change that much when you remove high frequency content from it.

In our experiments, we consider discrete categorical distributions, and wish to tractably quantify their smoothness.
Accordingly, we define a specific case of this as follows.

\begin{definition}[Smoothness metric for categorical distributions]\label{def:smoothness}
Suppose $y=(y_1,\dots,y_m)\in\mathbb R^m$ is a categorical distribution, so every $y_k\ge 0$ and $\sum_{k=1}^m y_k=1$.
Denote by $g_\sigma\in\mathbb R^{2m-1}$ the discrete Gaussian kernel of standard deviation $\sigma$ and radius $m-1$.
Define the weights $\alpha_\sigma = 6/\pi^2\sigma^2$.
Then we define the \textbf{smoothness} of $y$ to be the constant
\begin{equation} \label{eq:2}
    s(y) := \sum_{\sigma=1}^\infty \alpha_\sigma \|y - g_\sigma*y\|_2.
\end{equation}
\end{definition}

We direct the curious reader to Appendix~\ref{app:smoothness}, where we conduct additional experiments to justify this choice of smoothness metric for our experiments.

\subsection{A Scaling Law for the Fourier Head, in Frequency-aspect}

In this subsection, we share a theorem that analyzes the quality of the Fourier head as the quantity of frequencies changes. 
We refer to this as the \textbf{Fourier head scaling law} as it quantifies the trade-off between modeling capacity and smoothness as the number of frequencies increases. 
On one hand, it is a celebrated result from Fourier analysis that a Fourier series with a greater number of frequencies models a larger class of functions; but on the other hand, we show that increasing frequencies also incurs loss in smoothness.  
This is to be expected, as we designed our smoothness metric with the intention of identifying a distribution as less smooth if it contains more high-frequency information.

\begin{restatable}{theorem}{scalinglaw}(Fourier head scaling law.)\label{thm:scaling_law}
Consider a Fourier head with input dimension $n$, output dimension $m$, and $N$ frequencies. 
Suppose that $1 \ll N < \frac{m}{2}$.
Then the following are true:
\begin{enumerate}[leftmargin=2em, itemindent=0em, labelwidth=1em, labelsep=1em]

    \item (\textbf{Increasing $N$ improves modeling power}.) As $N$ increases, the Fourier head is capable of learning a larger class of densities.
    
    \item (\textbf{Increasing $N$ degrades smoothness}.)
    Consider an input to the Fourier head $x\in\mathbb R^n$, and denote by $f_x:[-1,1]\to\mathbb R$ the optimal conditional distribution that we would like the Fourier head to approximate for this input.
    Suppose that there exists some $t \geq 2$ such that the Fourier coefficients of $f_x$ decay on the order of $1/k^t$.
    Denote by $f_{x,N}$ the truncation of $f_x$ to its first $N$ frequencies, 
    denote by $\vec{b}\in\mathbb R^m$ the $m$ bin centerpoints in $[-1,1]$,
    and denote by $y^{(N)}=f_{x,N}(\vec{b})/(f_{x,N}(b_0)+\cdots+f_{x,N}(b_{m-1}))\in\mathbb R^m$ the discretization of $f_{x,N}$ into $m$ bins.
    Then, there exist constants $C_1,C_2 > 0$ such that
    \begin{equation}\label{eq:scaling_law_asymptotic}
        s(y^{(N)}) = C_1 - \frac{C_2}{N^{2t-1}} + O(1/N^{2t}).
    \end{equation}
\end{enumerate}
\end{restatable}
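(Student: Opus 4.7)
This is essentially classical. The autocorrelation construction $c_k = \sum_\ell a_\ell a_{\ell+k}^*$ in Algorithm~\ref{alg:fourier_head} is equivalent to writing the learned density as $|q|^2/\|q\|_2^2$ for a trigonometric polynomial $q$ of degree $N$. By the Fej\'er--Riesz factorization theorem, this class coincides exactly with the set of non-negative trigonometric polynomials on $[-1,1]$ of degree $\le N$ with unit integral. This family is strictly increasing in $N$ (embed by padding with $a_{N+1}=0$), which proves the first assertion. The bulk of the work is Part 2.

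\textbf{Proof plan for Part 2; Fourier-domain reformulation.} The key idea is to evaluate the smoothness metric entirely in the frequency domain via Parseval's identity. Because $f_{x,N}$ is bandlimited to frequencies $\le N$ and we assume $N < m/2$, the Nyquist sampling theorem yields the identification: the DFT of the discretized vector $y^{(N)}$ reproduces the continuous Fourier coefficients $c_k$ of $f_x$ up to a normalization $A=A(m)$ for $|k|\le N$, and vanishes for higher $|k|$. Applying Parseval together with conjugate symmetry (the sequence is real) and the fact that $\hat g_\sigma(0)=1$ kills the zeroth mode, I would obtain the clean formula
$$\|y^{(N)} - g_\sigma * y^{(N)}\|_2^2 \;=\; 2A \sum_{k=1}^{N} \bigl|1 - \hat g_\sigma(k)\bigr|^2 |c_k|^2.$$

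\textbf{Extracting the $1/N^{2t-1}$ asymptotic.} Let $S_\sigma$ denote the same sum taken to $k=\infty$, which converges since $|c_k|^2 = O(1/k^{2t})$ with $t\ge 2$. The tail $R_{\sigma,N} := S_\sigma - 2A\sum_{k=1}^N(\cdots)$ can be estimated via integral comparison, using the leading asymptotic $|c_k|^2 = C^2/k^{2t} + O(1/k^{2t+1})$ together with $|1-\hat g_\sigma(k)|^2 \to 1$ as $k\to\infty$, to obtain $R_{\sigma,N} = D_\sigma/N^{2t-1} + O(1/N^{2t})$ for an explicit $\sigma$-dependent constant $D_\sigma$. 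Taylor-expanding $\sqrt{S_\sigma - R_{\sigma,N}} = \sqrt{S_\sigma} - R_{\sigma,N}/(2\sqrt{S_\sigma}) + O(R_{\sigma,N}^2)$ then gives
$$\|y^{(N)} - g_\sigma * y^{(N)}\|_2 \;=\; \sqrt{S_\sigma} - \frac{D_\sigma}{2\sqrt{S_\sigma}\,N^{2t-1}} + O(1/N^{2t}).$$
Multiplying by $\alpha_\sigma = 6/(\pi^2\sigma^2)$ and summing over $\sigma$ (justified by dominated convergence, since $\sum_\sigma \alpha_\sigma$ converges and the per-$\sigma$ remainder is uniformly $O(1/N^{2t})$) produces the claim with $C_1 = \sum_\sigma \alpha_\sigma \sqrt{S_\sigma}$ and $C_2 = \sum_\sigma \alpha_\sigma D_\sigma/(2\sqrt{S_\sigma})$.

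\textbf{Main obstacle.} The most delicate step is the Fourier-domain identification of $y^{(N)}$ with the $c_k$: the paper's smoothness metric uses a finite discrete Gaussian kernel $g_\sigma\in\R^{2m-1}$ applied by aperiodic convolution, so the DFT does not exactly diagonalize it. I would handle this either by passing to periodic boundary conditions and absorbing the discrepancy (controlled by the super-polynomial decay of a Gaussian at radius $m/2$ for any fixed $\sigma$) into the $O(1/N^{2t})$ remainder, or by working directly with the finite kernel and bounding the boundary contribution separately. The normalizing factor $\sum_j f_{x,N}(b_j)$ introduces a mild $N$-dependence in $A$ that converges to a Riemann sum limit at rate $O(1/N^t)$, which is subleading. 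Finally, uniformity-in-$\sigma$ when summing is a secondary care-point for small $\sigma$, where $D_\sigma$ can grow, but the $1/\sigma^2$ weight together with the uniform bound $|1-\hat g_\sigma(k)|^2 \le 4$ are enough to dominate.
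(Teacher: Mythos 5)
Your plan for Part 2 is essentially the paper's own proof: the paper likewise passes to the frequency domain via Plancherel and the convolution theorem, uses the Nyquist condition $N<m/2$ to match the DFT of the discretized vector with the continuous Fourier coefficients (exactly one surviving residue class per mode), extracts the $1/N^{2t-1}$ term from the partial-sum asymptotic of $\sum_{k\le N}k^{-2t}$ (its Lemma on the Riemann zeta expansion, equivalent to your integral comparison), Taylor-expands the square root, and sums over $\sigma$ using boundedness of the constants; the ``main obstacle'' you identify is handled there by an Euler--Maclaurin comparison of the discrete kernel with a continuous Gaussian convolution (producing the factor $B_1(m,\sigma)$), which is the same idea as your periodization-plus-Gaussian-tail argument and only shifts the constants $C_1,C_2$. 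Your Part 1 via Fej\'er--Riesz is a slightly sharper justification than the paper's appeal to uniform convergence of truncated Fourier series, but the substance is the same.
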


Note that the smoothness scaling law asymptotic in \Eqref{eq:scaling_law_asymptotic} shows that as $N$ increases, so does $s(y^{(N)})$. 
Further, note that if the Fourier spectrum of the underlying distribution decays quicker (controlled by $t$) then the rate at which smoothness degrades is slower; this is because if what we are learning has little high frequency content, then increasing the frequencies shouldn't affect the smoothness of the learned distribution very much.
In part (2), our assumption that the Fourier coefficients decay at least quadratically is reasonable since if $f_x$ is at least twice continuously differentiable, we already know its Fourier coefficients corresponding to the $k$-th frequency are in $O(1/k^2)$ \cite[Ch.2, Cor. 2.4]{stein2011fourier}. Our Fourier weight decay regularization helps toward ensuring that this condition is met in practice as well. 
We include a full proof in Appendix~\ref{app:math}.

\section{Toy Examples}

\subsection{Learning A Continuous Conditional Distribution}

\begin{figure}[t]
\begin{center}
    Toy Example: Learned Conditional Distribution vs True Conditional Distribution\par\medskip
    \includegraphics[scale=0.48]{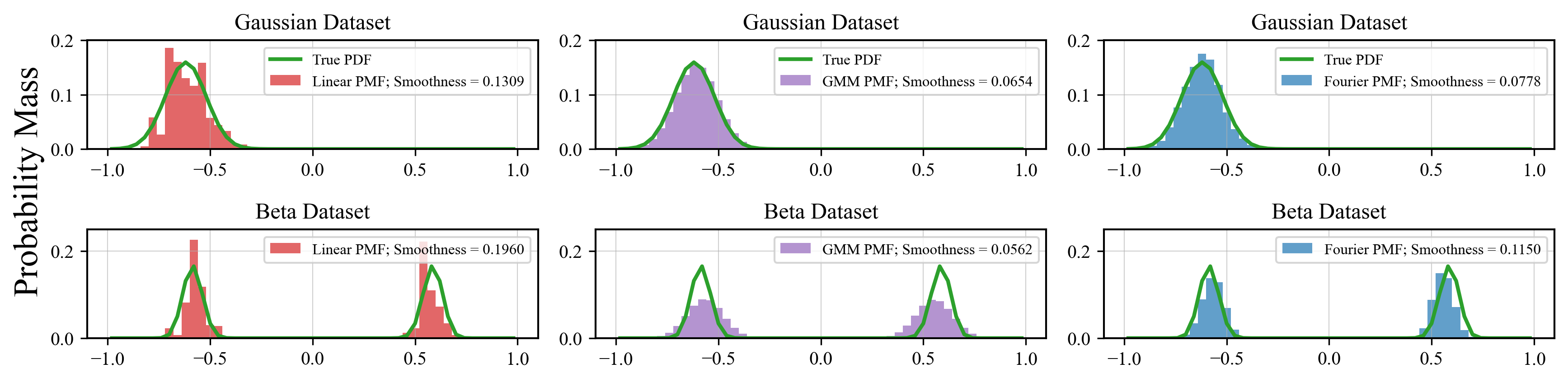}
\end{center}
\caption{Comparison between the PMFs learned by the \tabred{linear head}, \tabpurple{GMM head}, and the \tabblue{Fourier head}, for two of the datasets in the toy example--Gaussian and Beta. (The GMM dataset is in Figure~\ref{fig:toy_example_1d_cropped}.)
We observe that the \tabblue{Fourier head} learns a smoother categorical distribution than the \tabred{linear head} over its predicted values.
Furthermore, the \tabblue{Fourier head} better fits the \tabgreen{true conditional PDF}; this is reflected in the KL divergence and smoothness metrics.}
\label{fig:toy_example_1d}
\end{figure}

We demonstrate the advantage of using the Fourier head to learn a probability distribution for a simple task: learning the conditional distribution of the third number in the sequence given the first two. Here we will use $q(z)$ to denote the quantization of $z$.

\textbf{Dataset}: We create 3 synthetic datasets, which we name \textbf{Gaussian}, \textbf{GMM-2}, and \textbf{Beta}. 
Each dataset consists of $5000$ quantized triples $\{(q(x),q(y),q(z))\}\subseteq\mathbb [-1,1]^3$.
Crucially, $z$ is sampled from a distribution which is conditioned on $x$ and $y$, and we have an explicit closed formula for this distribution.
By design, the Gaussian dataset is unimodal in $z$, whereas the more challenging GMM-2 and Beta datasets are not unimodal. Full details about the datasets can be found in Appendix~\ref{app:toy_exp_details}.

\textbf{Task}: Predict the conditional distribution of $q(z)$ given the quantized tuple $(q(x),q(y))$.

\textbf{Model architecture}: 
Our model is an MLP with ReLU activations and one hidden layer, which maps $\mathbb R^2\to\mathbb R^{64}\to\mathbb R^{32}\to\mathbb R^{50}$.
The output of the model has dimension $50$ because we quantize the interval $[-1,1]$ into 50 bins.
We consider two baselines alongside the Fourier model.  
For the first baseline, the classification head is a linear layer; for the second baseline, the classification head is a Gaussian model mixture classification layer with two Gaussians, where the means and standard deviations are learned; for the Fourier model, the classification head is the Fourier head.
We sweep over frequencies $N=2,4,\dots,20$, and consider regularization $\gamma\in\{0,10^{-6}\}$.
We train those models via cross-entropy loss.\footnote{Note that we also demonstrate the possibility of training a \emph{continuous} version of the Fourier head, using a maximum-likelihood based objective.
Accordingly, we carry out experiments in the continuous domain analogous to those we did in the quantized domain; for more details, see Appendix~\ref{app:mle_fourier_head}.}
We also consider a regression-based model, trained using MSE.

\textbf{Model evaluation}: We use three metrics for evaluation. 
Our first metric is the average KL divergence $D_{\text{KL}}(q(\mathcal{P}(x,y))|| M(q(x),q(y)))$, where $\mathcal{P}(x,y)$ is the fixed conditional distribution of $z$ given $(x,y)$; $q(\mathcal{P}(x,y))$ is the quantized approximation of $\mathcal{P}(x,y)$, obtained by evaluating the density function of $\mathcal{P}(x,y)$ at the bin centers, multiplying by the bin width, and finally scaling by the sum of the likelihoods; and $M(q(x),q(y))$ denotes the predicted categorical conditional distribution of $q(z)$.
Our second metric is smoothness.
And our third metric is MSE, where we consider the expected value of $q(z)$ under the learned categorical distribution as a prediction for $q(z)$.

\textbf{Results:} The metrics for the best performing model on each dataset are reported in Table \ref{tab:toy_metrics}. 
Figure \ref{fig:toy_example_1d} presents sample visualizations of the learned conditional distributions alongside the true densities. 
And in Appendix~\ref{app:toy_exp_details}, we present the results of a study on the impact of number of frequencies and Fourier regularization.
Notably, this study provides empirical evidence for the Fourier head scaling law in Theorem~\ref{thm:scaling_law}, as it demonstrates that for all datasets, as frequency increases, the smoothness degrades, and model performance improves until it reaches a saturation point.
Crucially, we observe that the Fourier head flexibly learns all three distributions better than the linear baseline does.
We note that the Fourier head outperforms the linear head on MSE as well; we include a complete comparison with both Linear and GMM head baselines in Appendix~\ref{app:toy_exp_details}.
Additionally, in Figure~\ref{fig:toy_example_mse} (Appendix), we demonstrate that the regression model simply regresses to the mean of the conditional distribution.
Accordingly, the regression model performs well for the unimodal Gaussian dataset, and it performs poorly for the bimodal datasets GMM-2 and Beta.

\begin{table}[t]
\centering
\begin{tabular}{|l||c|c||c|c|}
    \hline
    & \multicolumn{2}{c|}{KL Divergence ($\downarrow$)} & \multicolumn{2}{c|}{Smoothness ($\downarrow$)} \\
    \hline
    Dataset & \multicolumn{1}{c|}{Linear} & \multicolumn{1}{c|}{Fourier} & \multicolumn{1}{c|}{Linear} & \multicolumn{1}{c|}{Fourier} \\
    \hline\hline
    Gaussian & \multicolumn{1}{c|}{\raggedright 0.170 $\pm$ 0.052} & \multicolumn{1}{c|}{\raggedright {\textbf{0.116}} $\pm$ 0.043} & \multicolumn{1}{c|}{\raggedright 0.116 $\pm$ 0.049} & \multicolumn{1}{c|}{\raggedright \textbf{0.057} $\pm$ 0.011} \\
    GMM-2 & \multicolumn{1}{c|}{\raggedright 0.238 $\pm$ 0.032} & \multicolumn{1}{c|}{\raggedright {\textbf{0.146}} $\pm$ 0.033} & \multicolumn{1}{c|}{\raggedright 0.068 $\pm$ 0.022} & \multicolumn{1}{c|}{\raggedright \textbf{0.038} $\pm$ 0.007} \\
    Beta & \multicolumn{1}{c|}{\raggedright 0.234 $\pm$ 0.032} & \multicolumn{1}{c|}{\raggedright \textbf{0.191} $\pm$ 0.016} & \multicolumn{1}{c|}{\raggedright 0.127 $\pm$ 0.044} & \multicolumn{1}{c|}{\raggedright \textbf{0.076} $\pm$ 0.021} \\
    \hline
\end{tabular}
\caption{We compare metrics between the linear head, and the Fourier head with $12$ frequencies and no regularization, for every dataset in our toy example.
We observe that the Fourier head outperforms the linear head across all metrics.
Notably, using Fourier head improves the KL divergence (the primary success metric) on average by approximately 40\%.
We aggregate metrics over 4 different seeds and report the standard deviation.}
\label{tab:toy_metrics}
\end{table}

\subsection{Are LLMs Random Number Generators?}

\begin{figure}[t]
\begin{center}
    Using Llama-3.1-8B-Instruct to Simulate Gaussian Sampling\vspace{2mm}
    \includegraphics[scale=0.37]{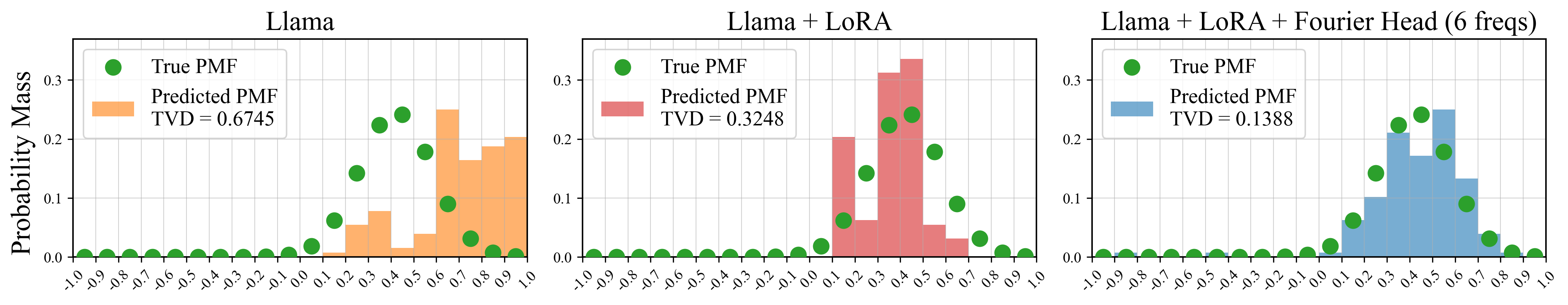}
\end{center}
\caption{We demonstrate that the baseline Llama model does a poor job simulating Gaussian sampling, as measured by the Total Variation Distance between the ground truth quantized Gaussian histogram, and the empirical histogram of samples.
We find that LoRA fine-tuning improves the results by a factor of $\approx 2.07$, and that using the Fourier head improves the output distribution by a factor of $\approx4.86$.}
\label{fig:rng_qualitative}
\end{figure}

Suppose that we query an LLM with the following prompt, repeatedly: \emph{``The following is a list of normally distributed random numbers in the interval [-1, 1] with mean 0.55 and std 0.10: 0.57, 0.36,  ''}.
Would the model outputs be approximately Gaussian?
In this empirical study, we simulate Gaussian sampling using the Llama-3.1-8B-Instruct model \citep{dubey2024llama}.
We demonstrate that this language model struggles to generate high quality numerical Gaussian samples.
We consider two possible interventions: LoRA fine-tuning the base Llama model, and LoRA fine-tuning the base model while also replacing the linear classification head with a Fourier head.
We find that LoRA fine-tuning improves the learned distribution significantly, and replacing the linear head with the Fourier head improves the distributions even further.
We present an illustrative example of this phenomenon in Figure~\ref{fig:rng_qualitative}.
See Appendix~\ref{sec:llama_rng} for experiment details and some related works.

\section{Large-Scale Study: Offline Reinforcement Learning}

\begin{table}[b]
    \centering
        {Normalized Returns for Decision Transformer Agent}\par\medskip
    \begin{tabular}{|l||r|r|r|r|}
    \hline
    & \multicolumn{4}{c|}{Atari Game} \\ \hline
    Classification Head & BankHeist & DoubleDunk & Gravitar & Seaquest \\ \hline\hline
    Linear head     & $-0.09 \pm 0.05$ & $-72.72 \pm 33.08$ & $1.32 \pm 0.17$ & $2.53 \pm 0.63$ \\\hline
    Fourier head    & $\mathbf{0.92 \pm 0.33}$ & $\mathbf{45.45 \pm 36.36}$ & $\mathbf{4.98 \pm 0.93}$ & $\mathbf{3.70 \pm 0.47}$ \\ \hline
    \end{tabular}
    \caption{We present returns obtained by the Decision Transformer agent using the linear baseline, and the Fourier head, across the four Atari games.
    We compute the returns (mean and standard deviation) by averaging over four seeds.
    Across all these games, the Fourier head significantly improves the normalized returns obtained by the agent.
    }
    \label{tab:decision_transformer_large_scale_metrics}
\end{table}

\begin{figure}[!b]
\begin{center}
    \includegraphics[scale=0.45]{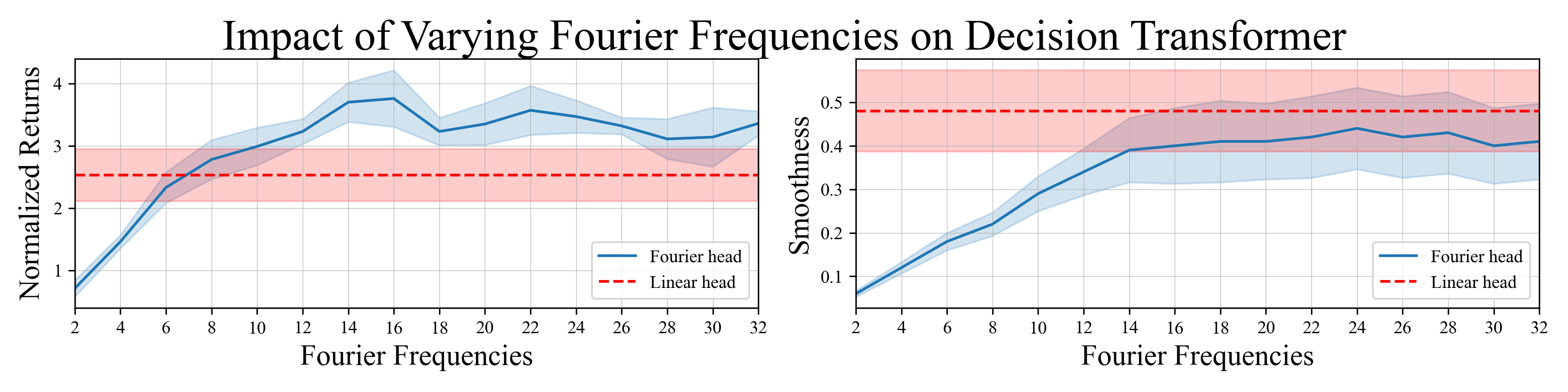}
\end{center}
\caption{We present empirical results for how the quantity of Fourier frequencies impacts returns and smoothness for the imitation learning task.
For normalized returns, higher is better; for smoothness, lower is better.
We can see that the \tabblue{Fourier} agent achieves higher normalized returns than the \tabred{linear} baseline agent when sufficiently many Fourier frequencies are used, while still learning smoother next-action distributions.}
\label{fig:decision_transformer_varying_frequencies}
\end{figure}


The Decision Transformer \citep{chen2021decisiontransformer} reframes reinforcement learning as sequentially modeling rewards, states, and actions. We evaluate its performance on the Seaquest game from the Atari \citep{bellemare2013arcade} benchmark.
The Seaquest game contains 18 actions, with two groups of eight actions that have a natural ``closeness'' metric defined on them: move left, up left, up, up right, right, down right, down, down left; as well as shooting in those eight directions.
The original architecture uses a decoder-only language model \citep{radford2018improving} to encode context and map it through a linear layer, producing a categorical distribution over actions. 
At test time, the agent samples from this distribution to select its next action. 
We replace the linear classification head with a Fourier head, introducing a prior that semantically similar actions (e.g., `move left' and `move up left') should have similar likelihoods. 
Our results show the Fourier head improves returns by as much as $46\%$ in the reward-conditioned setting, using identical training hyperparameters.

\textbf{Task}: In the Seaquest game, the agent moves a submarine to avoid enemies, shoot at enemies, and rescue divers.
We consider this task in the Offline RL setting.
The agent observes the past states, actions, and rewards, as well as the return-to-go, and attempts to predict the action that matches what an agent operating like the dataset would likely do.
We also consider three other Atari games with the same action space: BankHeist, DoubleDunk, and Gravitar.

\textbf{Dataset}: We use the same dataset from the original Decision Transformer implementation \citep{chen2021decisiontransformer}.
This dataset consists of 500k transitions experienced by an online deep Q-network agent \citep{mnih2015human} during training on the Seaquest game.

\textbf{Model architecture}: \citep{chen2021decisiontransformer} used the GPT-1 model \citep{radford2018improving} to autoregressively encode the context, which is then fed through a linear layer of dimension $18$, and the model ultimately optimizes the cross-entropy loss between the action logits and the ground truth action from the dataset.
We refer to this model as the linear baseline.
To create our Fourier-$N$ version, we simply replace the linear head with a Fourier head with $N$ frequencies and Fourier regularization $\gamma=10^{-6}$.
In our experiments we consider frequencies $N\in\{2, 4,6,8,\dots,30,32\}$.


\textbf{Model Evaluation}: We present results for the linear baseline and Fourier-$N$ head ($N \in {2, 4, 6, 8, \dots, 30, 32}$) across four Atari games, showing mean reward totals for rollouts at the best epoch across four seeds.
Table~\ref{tab:decision_transformer_large_scale_metrics} demonstrates significant return gains with the Fourier head. For example, Seaquest returns increase by up to $46.2\%$, while Gravitar sees as much as a $377\%$ boost. Figure~\ref{fig:decision_transformer_varying_frequencies} shows improved Seaquest performance as the number of frequencies grows, with learned PMFs becoming less smooth, aligning with Theorem~\ref{thm:scaling_law}.
Qualitative results in Figure~\ref{fig:decision_transformer_PMFs_example} (Appendix) highlight the smoother PMFs produced by the Fourier head. Additional results for BankHeist, DoubleDunk, and Gravitar in Figure~\ref{fig:decision_transformer_varying_frequencies_more_games} (Appendix) confirm that the Fourier agent consistently outperforms the linear baseline while maintaining smoother next-action distributions.

\textbf{Ablations}: We analyze whether model size has any effect on the relative performance of the Linear head and the Fourier head. 
The results in Figure~\ref{fig:atari_graph_varying_model_size} (Appendix) demonstrate that, across model sizes, the Decision Transformer with a Fourier head is better at learning high-quality next action distributions than the Decision Transformer with a Linear head.
We also analyze whether dataset size has any effect on the relative performance of the Linear head and the Fourier head, and obtain a similar result.
In Figure~\ref{fig:atari_ablation_varying_dataset_size} (Appendix) we show that, across dataset sizes, the Decision Transformer agent with the Fourier head achieves larger returns than the agent with a linear head.


\section{Large-Scale Study: Probabilistic Time Series Forecasting}


\begin{table}[!b]
    \centering
        \scalebox{0.9}{
        \begin{tabular}{|l||r|r|r|r|}
        \hline
        Chronos Time Series Model & MASE $(\downarrow)$ & WQL $(\downarrow)$ & Smoothness $(\downarrow)$ \\ \hline\hline
        
        Linear      & 0.883 & 0.750 & 0.1689 $\pm$ 0.1087 \\ \hline
        \textbf{Fourier-550} & \textbf{0.852} & \textbf{0.749} & \textbf{0.0283 $\pm$ 0.0224} \\\hline
        \end{tabular}
        }
    \caption{We present large-scale experiments on Chronos time series forecasting.
    The best-performing Fourier model outperforms the linear baseline both terms of the continuity of the learned probability mass functions (smoothness) for the quality of the forecasts (MASE, WQL).}
    \label{tab:Chronos_large_scale_metrics}
\end{table}

The Chronos time series foundation models \citep{ansari2024chronos} ``learn the language of time series''.
They do this by approaching time series forecasting as language modeling, by tokenizing the quantized number line, learning token embeddings for each of those quantized values, and finally learning a categorical distribution to decide what the next value ought to be.
This model is built on top of the encoder-decoder T5 model \citep{raffel2020exploring}.
In particular, this model normalizes time series values to the range $[-15,15]$ and quantizes this interval into $4096$ tokens.
As usual for language modeling, the final layer is a linear map which learns a categorical distribution over next tokens.
In particular, we observe that token $i$ represents a number very close to tokens $i-1$ and $i+1$.
However, we note that there is no inductive bias in the T5  architecture which pushes their likelihoods to be similar.
This is not a hypothetical problem; in Figure~\ref{fig:chronos_PMFs_example} (Appendix), we can see that the linear next-token prediction PMFs fit to the noise, and appear very jagged.

\textbf{The motivation for replacing the linear head with the Fourier head is to ``smooth'' out the distribution in the left side of Figure~\ref{fig:chronos_PMFs_example}, to help the forecasting model better learn the signal, and ignore the noise.}
In this figure, we can see that the Fourier head accomplishes this successfully.

In this section, we study how the performance of the Chronos time series foundation model changes when we pre-train using the Fourier head, instead of the linear head.
For all of the frequencies that we consider, the Fourier head outperforms the Chronos linear baseline on the MASE metric, while learning next token multinomials which are at least 8x smoother, with fewer parameters than the baseline.

\textbf{Dataset}: We use the same training dataset for large-scale pretraining that \citet{ansari2024chronos} used.
We gather an evaluation benchmark of 20 time series datasets which were not seen during training.
These 20 come from the zero-shot eval from \citep{ansari2024chronos}.
The reader can check Appendix~\ref{app:chronos_exp_details} for details on the training and evaluation datasets we used.

\textbf{Model architecture}: We use the Chronos model, which is built using the T5 architecture \citep{raffel2020exploring}.
The original model has a linear classification head.
For our study, we will replace this with a Fourier head with frequencies $N=64, 128, 256, 550$.
We use mixed precision binning; this is informed by an analysis of the Fourier spectrum of the next-token distribution, as described in Section~\ref{sec:mathematical_considerations}.
We also use Fourier weight decay regularization with $\gamma=10^{-6}$.
For the task, the model learns to input time series context of length 512, and output a probabilistic forecast of length 64.
At test time, the model chooses the next numerical token by sampling from the next-token distribution

\textbf{Model evaluation}: We have two sets of metrics: model performance from \citep{ansari2024chronos} (MASE measures the accuracy of median forecast, and WQL measures the quality of the probabilistic forecast), as well as our smoothness metric.
Our Fourier metrics in Table~\ref{tab:Chronos_large_scale_metrics} demonstrate that every Fourier model outperforms the linear baseline for MASE and smoothness.
Furthermore, for the largest Fourier model that we consider, Fourier outperforms linear on WQL as well.

\textbf{Ablations:} The results in Table~\ref{tab:Chronos_large_scale_metrics_ablation} (Appendix) show that mixed precision binning and regularization improve the MASE and smoothness for the Fourier head, and that using more Fourier frequencies improves MASE and WQL.
Additionally, we show that the Fourier head yields more accurate forecasts than the linear head across dataset sizes and model sizes (Figures~\ref{fig:chronos_ablation_varying_dataset_size} and \ref{fig:chronos_ablation_varying_model_size}, Appendix).

\vspace{-3mm}

\section{Related Work}

\vspace{-3mm}

\textbf{LLMs outside of natural language domains:} LLMs are often adapted to domains beyond natural language, as general purpose sequence models. 
For example, they have been used in protein synthesis \citep{madani2023large}, time series forecasting \citep{ansari2024chronos,das2024decoderonlyfoundationmodeltimeseries,jin2024timellmtimeseriesforecasting,gruver2023llmtime,requeima2024llm,Jia_Wang_Zheng_Cao_Liu_2024,zhou2023onefitsall,wang2023timemixer}, music generation \citep{Dhariwal2020jukebox,Agostinelli2023musicLM,Copet2023musicgen,yuan2024chatmusician}, and as well as in decision making \citep{li2022pretrainedlanguagemodelsinteractive,chen2021decisiontransformer}.

We consider three categories to adapt LLMs to non-language domains: 
when the output of a language-trained LLM is used as a feature for some out-of-domain task; 
when a language-pretrained LLM is fine-tuned on a domain-specific task;
and when an LLM architecture is trained on a domain-specific dataset from scratch.
Our work directly considers the latter method of LLM adaptation, particularly in settings where the outputs approximate continuous values. 
We note that using LLMs to model numerical functions has seen success in continuing sequences \citep{mirchandani2023largelanguagemodelsgeneral} but has been challenging for modeling samplers for probability distributions \citep{hopkins2023can}.
In a related direction, \citet{razeghi-etal-2022-impact} found that model performance on numerical reasoning tasks is correlated with the frequency of specific numbers in its corpus. Further, some have re-framed continuous regression as a descretized classification problem to leverage LLMs in numerical modeling contexts \citep{song2024omnipredlanguagemodelsuniversal} or RL contexts \citep{farebrother2024stop}. 
While even frozen LLMs with no further training show interesting empirical results as regressors \citep{vacareanu2024wordsnumberslargelanguage}, there is a conceptual mismatch between the downstream task and model construction because tokenized numerical values trained using cross-entropy loss does not explicitly enforce numerical relationships between the tokens.

\textbf{Fourier series in neural networks:} Many works leverage the Fourier transform as a data pre-processing step or a deterministic transformation within the network, or use Fourier analysis to motivate design choices. 
It is far less common to learn the Fourier series directly. 
\citet{de2024fourier} learned marginal univariate densities parameterized using a Fourier basis; our work extends their Fourier Basis Density model to multivariate settings with an autoregressive scheme. 
Our method learns conditional univariate densities using a Fourier basis, where the coefficients of the Fourier density model are input dependent.
\citet{sitzmann2020implicit} proposed sinusoidal activation functions, which can be seen as learning the \emph{frequencies} of a Fourier series; in contrast, we fix the frequencies to the canonoical choice $\{1,2,\dots,N\}$, and learn the \emph{amplitudes}.
This allows the Fourier head to more directly benefit from approximation results from Fourier analysis.

\vspace{-3mm}

\section{Conclusion}

\vspace{-3mm}

We propose the Fourier head and demonstrate its positive impact on performance on several tasks. 
We prove a scaling law that characterizes the trade-off between the model's expressivity and the smoothness of its output distribution. 
The Fourier head is a modular architecture that can be easily added to existing models that would benefit from the continuity inductive bias that the head imparts. 
The Fourier head extends the already extensive reach of LLMs into more diverse, numerical, and probabilistic domains. 
Future work includes exploring alternative training objectives that do not depend on discretizing probability density functions, and incorporating the Fourier head in general-purpose LLM training, where the head can be adaptively employed when needed.

\section{Reproducibility Statement}

We have made efforts to ensure reproducibility.
In Algorithm~\ref{alg:fourier_head} we provide all the mathematical details that one needs to reproduce the Fourier head.
In Appendix~\ref{app:chronos_exp_details} we prove our scaling law, Theorem~\ref{thm:scaling_law}, in full detail, and we list all assumptions in the statement of the theorem.
Additionally, we have released the research code on GitHub: \url{https://github.com/nate-gillman/fourier-head}.

\subsubsection*{Acknowledgments}

We would like to thank Jona Balle, Alfredo De la Fuente, Calvin Luo, Singh Saluja, Matthew Schoenbauer, and Megan Wei for the useful discussions.
We would also like to thank the anonymous reviewers for their suggestions.
This work is supported by the Samsung Advanced Institute of Technology, NASA, and a Richard B. Salomon Award for Chen Sun. 
Our research was conducted using computational resources at the Center for Computation and Visualization at Brown University. Chen would like to thank Mia for inspiration.


\bibliography{iclr2025_conference}
\bibliographystyle{iclr2025_conference}

\clearpage
\appendix
\addcontentsline{toc}{section}{Appendix} 
\part{Appendix} 
\parttoc 

\section{Proof of Fourier Head Scaling Law, Theorem~\ref{thm:scaling_law}}\label{app:math}

In this section we prove Theorem~\ref{thm:scaling_law}, the Fourier head scaling law.
To do this, we must first discuss the Nyquist-Shannon Sampling Theorem.
This result states that in order to avoid distortion of a signal (such as aliasing) the sampling rate must be at least twice the bandwidth of the signal. 
In the setting of the Fourier head, our sampling rate is $m/2$ because we have $m$ bins uniformly spaced in $(-1,1)$, and the bandwidth is $N/2$ because the frequency of $\sin(\pi N x)$ is $N/2$. Thus the Nyquist Theorem requires us to have \[m/2 \geq 2 \cdot (N/2) = N\] in order for the higher order frequency content learned by our model to not be fallacious when we are learning from only $m$ bins.
This justifies why we only theoretically study the case $1\ll N < m/2$ in the scaling law.

\subsection{Definitions}

Consider an input $x\in\mathbb R^n$ to the Fourier head, and denote by $f_x:[-1,1]\to\mathbb R$ the optimal conditional distribution that we would like the Fourier head to approximate for this input.
We will assume that $f_{x}$ is periodic, since the Fourier head learns a $2$-periodic Fourier density.
We denote by $f_{x,N}$ the truncation of the Fourier series of $f_x$ to its first $N$ frequencies. 
Note that $f_{x,N}$ also integrates to 1 over $[-1,1]$ since its first Fourier coefficient is the same as that of $f_x$. Further, $f_{x,N}$ is non-negative on $[-1,1]$ since its Fourier coefficients, being a subsequence of the coefficients of $f_x$, are non-negative definite; a periodic function with non-negative definite Fourier coefficients is non-negative by Herglotz's Theorem \cite[Corollary 4.3.2]{brockwell1991time}.  
For completeness, we will recall the convolution formulas, specialized to the cases we consider in our argument.

\begin{definition}[Discrete convolution]\label{def:discrete_convolution}
Let $b_j := -1 + \frac{2j+1}{m}, 0 \leq j < m$ be the center points of the $m$ bins in $(-1,1)$, and let us denote $\vec{b}  := (b_0,\dots,b_{m-1})$.
Denote by $G_\sigma(z): = \frac{e^{-z^2/2\sigma^2}}{\sqrt{2\pi}\sigma}$ the Gaussian PDF with standard deviation $\sigma$.
Then the discrete Gaussian convolution filter of radius $m-1$ is
\begin{equation}
    g_\sigma := \frac{G_\sigma(\left[
    1-m,
    2-m,
    3-m,
    \dots,
    m-1
    \right])}
    {S(m,\sigma)}\in\mathbb R^{2m-1},
\end{equation}
where the normalization constant is 
\begin{equation}
    S(m,\sigma):=\sum_{k=1-m}^{m-1}G_\sigma(k).
\end{equation}
The discrete convolution of $g_\sigma\in\mathbb R^{2m-1}$ and $f_{x,N}(\vec{b})\in\mathbb R^{m}$ is the vector $(g_\sigma*f_{x,N})(\vec{b})\in\mathbb R^m$ whose $j$'th coordinate is given by
\begin{align}
    (g_\sigma*f_{x,N})(b_j) 
    = 
    \frac{1}{S(m,\sigma)}\sum_{k=1-m}^{m-1} 
      G_{\sigma}(k) 
      \cdot 
      f_{x,N}\left(b_{j-k}\right).
\end{align}
\end{definition}

\begin{definition}[Continuous convolution]\label{def:continuous_convolution}
The continuous Gaussian convolution filter $\tilde{g}_{\sigma}: \mathbb [-2,2]\to \mathbb R_{> 0}$ is
\begin{equation}
    \tilde{g}_{\sigma}(z) 
    = \frac{G_{\frac{2\sigma}{m}}(z)}{S(m,\sigma)}
    =\frac{m}{2S(m,\sigma)}G_\sigma\left(\frac{mz}{2}\right).
\end{equation}
This function $\tilde{g}_{\sigma}(z)$ is a normalized truncation of a Gaussian PDF with mean $0$ and standard deviation $2\sigma/m$.
The continuous convolution of $\tilde g_\sigma:[-2,2]$ and the periodic function $f_{x,N}:[-1,1]\to\mathbb R$ is
\begin{align}
\tilde g_\sigma * f_{x,N} (z)
:=
    \int_{-2}^{2} \tilde{g}_{\sigma}(u) f_{x,N}(z-u) \, du.
\end{align}
\end{definition}

\subsection{Overview of Proof}

In this subsection, we provide an overview of the proof of Theorem~\ref{thm:scaling_law} by presenting the statements of the lemmata that we will need, and connecting each one to the overall argument.
In the next subsection, we rigorously prove the scaling law by careful applications of these lemmata.
And in the following subsection, we will rigorously prove each of the lemmata.

This first lemma allows us to replace the discrete Gaussian convolution in the definition with a continuous Gaussian convolution.

\begin{restatable}{lemma}{convolutionsAreClose}\emph{(Discrete convolution is close to continuous convolution)}\label{lemma:convolutions_are_close}
If we define the constant $B_1(m,\sigma):=1+\frac{G_\sigma(m)}{S(m,\sigma)}$, then we have that
    \begin{align}  
 \|f_{x,N}(\vec{b})-g_\sigma*f_{x,N}(\vec{b})\|_2 
 & = 
 \|B_1(m,\sigma)f_{x,N}(\vec{b})-\tilde{g}_\sigma*f_{x,N}(\vec{b})\|_2  
 + \sqrt{m}O(1/N^{2t+1}).
\end{align}
Furthermore, $B_1(m,\sigma)$ satisfies the following bound, uniformly in $\sigma$,
\begin{align}
    B_1(m,\sigma)\le 1 + \frac{1}{2m-1}.
\end{align}
\end{restatable}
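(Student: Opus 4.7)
The plan is to recognize the discrete convolution as a Riemann-sum approximation of the continuous convolution. Under the change of variable $u = 2v/m$ in Definition~\ref{def:continuous_convolution}, one obtains $\tilde g_\sigma * f_{x,N}(b_j) = \frac{1}{S(m,\sigma)}\int_{-m}^{m} F(v)\,dv$, while Definition~\ref{def:discrete_convolution} gives $g_\sigma * f_{x,N}(b_j) = \frac{1}{S(m,\sigma)}\sum_{\ell=1-m}^{m-1} F(\ell)$, where $F(v) := G_\sigma(v)\, f_{x,N}(b_j - 2v/m)$. The lemma thus reduces, coordinate by coordinate, to comparing this integral and this sum, identifying a leading-order boundary term of size $(B_1(m,\sigma)-1)\,f_{x,N}(b_j)$, and bounding whatever quadrature residual remains. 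A coordinate-wise additive estimate will then be promoted to the claimed $L^2$ bound by the triangle inequality, incurring the $\sqrt m$ factor that appears in the lemma.

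For the boundary contribution, I view $\sum_{\ell=1-m}^{m-1} F(\ell)$ as the trapezoidal rule on $[-m,m]$ with unit step at nodes $\{-m,-m+1,\ldots,m-1,m\}$ minus the two endpoint half-weights $\frac{1}{2}(F(-m)+F(m))$. The $2$-periodicity of $f_{x,N}$ gives $f_{x,N}(b_j \pm 2) = f_{x,N}(b_j)$, and evenness of $G_\sigma$ gives $G_\sigma(-m) = G_\sigma(m)$, so $\frac{1}{2}(F(-m)+F(m)) = G_\sigma(m)\,f_{x,N}(b_j)$. Dividing by $S(m,\sigma)$ produces exactly $(B_1(m,\sigma)-1)\,f_{x,N}(b_j)$, which promotes the coefficient on the RHS from $1$ to $B_1(m,\sigma)$ as desired.

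What is left is the interior quadrature error between the trapezoidal estimate and the integral. Expanding $f_{x,N}(z) = \sum_{|k|\le N} c_k e^{ik\pi z}$ reduces this to a mode-by-mode estimate of $\int_{-m}^{m} G_\sigma(v)\, e^{-2ik\pi v/m}\, dv - \sum_{\ell=1-m}^{m-1} G_\sigma(\ell)\, e^{-2ik\pi \ell/m}$ for each $|k|\le N$. The hypothesis $N < m/2$ ensures that no aliasing occurs, so Poisson summation combined with standard Gaussian tail bounds controls each modal discrepancy in terms of quantities that decay rapidly in the gap $m-N$ and in $\sigma$. Weighting these modal bounds by the $c_k$ and invoking the decay hypothesis $|c_k| = O(1/k^t)$ from Theorem~\ref{thm:scaling_law}, the errors assemble into a per-coordinate residual of order $1/N^{2t+1}$, which after $L^2$ aggregation across the $m$ bin centers becomes $\sqrt m\,O(1/N^{2t+1})$. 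For the uniform estimate on $B_1$, since $G_\sigma$ is even and decreasing on $[0,\infty)$, every summand of $S(m,\sigma)$ is at least $G_\sigma(m)$, whence $S(m,\sigma) \ge (2m-1)\,G_\sigma(m)$ and $B_1(m,\sigma)-1 = G_\sigma(m)/S(m,\sigma) \le 1/(2m-1)$ uniformly in $\sigma$.

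The hard part will be the mode-by-mode step: one has to extract the precise exponent $N^{-2t-1}$ while keeping the bound uniform in the Fourier-mode index $k$ and in $\sigma$. Different regimes of $\sigma$ (small versus comparable to $m$) favour different tools -- Euler--Maclaurin expansions, Poisson-summation/aliasing computations, or direct pointwise Gaussian tail bounds -- and the interplay between the polynomial coefficient decay $|c_k| = O(1/k^t)$ and the rapid Gaussian decay of $G_\sigma$ is the crux of the argument. Once the per-mode estimate is in place, the outer triangle-inequality aggregation to the $L^2$ norm, and the monotonicity argument for $B_1$, are routine.
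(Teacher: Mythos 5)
Your setup matches the paper's proof closely: the change of variables $s=mu/2$ turning the continuous convolution into $\frac{1}{S(m,\sigma)}\int_{-m}^{m}G_\sigma(s)f_{x,N}(b_j-2s/m)\,ds$, the identification of the boundary contribution $G_\sigma(m)f_{x,N}(b_j)$ (which the paper extracts as the extra $k=m$ term of a right-endpoint Riemann sum inside Euler--Maclaurin, and you extract as the trapezoidal endpoint half-weights --- these are equivalent and both rely on the evenness of $G_\sigma$ and the $2$-periodicity of $f_{x,N}$), the promotion to the $L^2$ statement with the $\sqrt{m}$ factor, and the bound $S(m,\sigma)\ge(2m-1)G_\sigma(m)$ giving $B_1\le 1+\tfrac{1}{2m-1}$. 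Where you diverge is the interior quadrature error: the paper bounds the Euler--Maclaurin remainder $E(m,\sigma)$ directly via $|P_1(s)|\le 1/2$ and $|G_\sigma'(s)|\le mG_\sigma(s)/\sigma^2$, reducing it to $\frac{m}{2\sigma^2}(\tilde g_\sigma*f_{x,N})(z)+\frac{1}{m}(\tilde g_\sigma*|f_{x,N}'|)(z)$, whereas you propose a mode-by-mode Poisson-summation analysis.

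The gap is that your route, as sketched, does not produce any decay in $N$, and you have not identified a mechanism that would. For a single mode $|k|\le N<m/2$, the discrepancy between $\int_{-m}^{m}G_\sigma(v)e^{-2\pi ikv/m}dv$ and $\sum_{\ell=1-m}^{m-1}G_\sigma(\ell)e^{-2\pi ik\ell/m}$ (after removing the endpoint contribution) is governed by the non-zero Poisson images $\sum_{j\neq 0}\hat G_\sigma(k/m+j)$ plus Gaussian tail truncation; since $|k/m|<1/2$, these are bounded by quantities of size roughly $e^{-\pi^2\sigma^2/2}$ and $e^{-cm^2/\sigma^2}$ --- small in $\sigma$ and $m$, but \emph{uniform in $N$}, not decaying in $N$. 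Weighting by $|c_k|=O(1/k^t)$ and summing over $|k|\le N$ then gives a series that converges to a nonzero constant as $N\to\infty$ (because $t\ge 2$), not a quantity of order $N^{-2t-1}$. So the step you yourself flag as the crux --- ``the errors assemble into a per-coordinate residual of order $1/N^{2t+1}$'' --- is an assertion with no supporting mechanism: none of the tools you invoke (Poisson summation, Gaussian tails, coefficient decay) couples the quadrature error to a negative power of $N$. To complete the proof along the paper's lines you would need to supply the argument that the remainder is $O(1/N^{2t+1})$ (the paper does this by relating the remainder to $\tilde g_\sigma*f_{x,N}$ and $\tilde g_\sigma*|f_{x,N}'|$ and appealing to the rapid decay of the Fourier coefficients of $\tilde g_\sigma$ together with the assumed $1/k^t$ decay of those of $f_x$); as written, your per-mode bounds only yield an $N$-independent error.
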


This next lemma, a standard result from analytic number theory allows us to upper bound the sums of the norms of the Fourier series coefficients.
This is proved in various places, see e.g. \citep[Equation~21]{Tao_2014}.

\begin{lemma}[Asymptotic expansion of Riemann zeta function]\label{lemma:asymptotic_zeta}
Consider the Riemann zeta function $\zeta(t):=\sum_{k=1}^\infty\frac{1}{k^t}$.
If $t\ge 2$, then
\begin{align}
    \sum_{k=1}^N\frac{1}{k^t}
    =\zeta(t)-\frac{1}{t-1}\frac{1}{N^{t-1}} + O(1/N^t).
\end{align}
\end{lemma}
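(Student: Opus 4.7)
The plan is to reduce the claim to an asymptotic for the tail of the series and then control that tail by comparison with an improper integral. Specifically, since the series $\zeta(t)=\sum_{k=1}^\infty 1/k^t$ converges absolutely for $t\geq 2$, it suffices to show that
\[
    \sum_{k=N+1}^{\infty} \frac{1}{k^t} = \frac{1}{(t-1)N^{t-1}} + O(1/N^t),
\]
since subtracting this from $\zeta(t)$ yields the claimed expansion for the partial sum $\sum_{k=1}^N 1/k^t$.

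First I would write the tail as $\sum_{k=N+1}^{\infty}\int_{k}^{k+1} k^{-t}\,dx$ and split each summand as
\[
    \frac{1}{k^t} = \int_{k}^{k+1}\frac{dx}{x^{t}} + \int_{k}^{k+1}\Bigl(\frac{1}{k^{t}}-\frac{1}{x^{t}}\Bigr)\,dx.
\]
The first pieces telescope into $\int_{N+1}^{\infty}x^{-t}\,dx = 1/((t-1)(N+1)^{t-1})$. For the second pieces, the mean value theorem gives $|k^{-t}-x^{-t}| \leq t/k^{t+1}$ uniformly on $[k,k+1]$, so summing produces an error of size $O\!\bigl(\sum_{k>N} k^{-(t+1)}\bigr)=O(1/N^{t})$ by a crude integral comparison.

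Finally I would expand $(N+1)^{-(t-1)}$ using the binomial series,
\[
    \frac{1}{(N+1)^{t-1}} = \frac{1}{N^{t-1}}\Bigl(1-\frac{t-1}{N}+O(1/N^{2})\Bigr),
\]
which turns $1/((t-1)(N+1)^{t-1})$ into $1/((t-1)N^{t-1}) + O(1/N^{t})$. Combining this with the tail estimate and subtracting from $\zeta(t)$ yields the desired formula.

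There is really no serious obstacle here: the only thing to be careful about is ensuring that the implicit constant in the $O(1/N^{t})$ is genuinely uniform (it will depend on $t$, but $t$ is fixed), and that the binomial expansion error $O(1/N^{2})$ is absorbed correctly after dividing by $(t-1)$ and multiplying by $N^{-(t-1)}$. Alternatively, one could invoke the first-order Euler--Maclaurin formula with $f(x)=x^{-t}$, which gives an even sharper expansion with the $-1/(2N^{t})$ term made explicit; but for the stated $O(1/N^{t})$ bound the elementary integral comparison above is the cleanest route.
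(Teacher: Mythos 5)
Your proof is correct, but it is worth noting that the paper does not actually prove this lemma at all: it is stated as a known result and dispatched with a citation to an external source (Tao's notes, where the expansion falls out of the Euler--Maclaurin formula applied to $f(x)=x^{-t}$). Your argument is therefore a genuinely different, and more elementary, route: you reduce the claim to the tail $\sum_{k>N}k^{-t}$, compare each term with $\int_k^{k+1}x^{-t}\,dx$ so the integrals telescope to $\tfrac{1}{(t-1)(N+1)^{t-1}}$, bound the per-term discrepancy by $t/k^{t+1}$ via the mean value theorem so the accumulated error is $O(1/N^t)$, and then convert $(N+1)^{-(t-1)}$ to $N^{-(t-1)}$ with a binomial expansion whose correction is again absorbed into $O(1/N^t)$. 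Each of these steps checks out, and the implicit constants depend only on the fixed $t\ge 2$, so the statement follows as claimed. What your approach buys is self-containedness at the cost of precision: the Euler--Maclaurin route cited by the paper gives the next-order term $-\tfrac{1}{2}N^{-t}$ explicitly (and further corrections), which your own closing remark correctly identifies, but for the $O(1/N^t)$ accuracy actually used in the paper's Lemma~\ref{lemma:main_term_asymptotic} your integral-comparison argument is entirely sufficient.
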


This next lemma allows us to extract the main asymptotic behavior in the scaling law.

\begin{restatable}{lemma}{mainTermAsymptotic}\emph{(Main term asymptotic)}\label{lemma:main_term_asymptotic}
Denote by $a_0(x)$ the constant coefficient of $f_{x,N}$.
Let us suppose that the Fourier coefficients of $f_{x,N}$ decay like $B_3(x)/k^t$, and define the constant
\begin{align}
    B_2(\sigma,m,x) := \sqrt{a_0(x)^2B_1(m,\sigma)^2+2B_1(m,\sigma)^2B_3(x)^2\zeta(2t)}.
\end{align} 
Then we know that
\begin{align}
 \|B_1(m,\sigma)&f_{x,N}(\vec{b})-\tilde{g}_\sigma*f_{x,N}(\vec{b})\|_2 \\
    &=\sqrt{m}\left(B_2(\sigma,m,x) -\frac{B_1(m,\sigma)^2B_3(x)^2}{2t-1}\cdot\frac{1}{N^{2t-1}} + O(1/N^{2t})\right).
\end{align}
Furthermore, $B_2(\sigma,m,x)$ is bounded from above and below as a function of $\sigma$ and $m$.
\end{restatable}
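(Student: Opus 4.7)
The plan is to move into the Fourier domain via Parseval and extract the leading $N$-dependence using Lemma~\ref{lemma:asymptotic_zeta}. The critical preliminary observation is that $h := B_1 f_{x,N} - \tilde g_\sigma * f_{x,N}$ is itself a trigonometric polynomial of degree at most $N$, and because $N < m/2$ the $m$ equispaced bin centers satisfy the Nyquist condition noted at the top of the appendix, so sampling preserves all of $h$'s Fourier information: $\|h(\vec b)\|_2^2 = (m/2)\|h\|_{L^2([-1,1])}^2$ with no approximation error. This $\sqrt{m}$ factor is precisely the one appearing in the statement of the lemma.

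I would then identify the Fourier coefficients of $h$: writing $f_{x,N}(z) = \sum_{|k|\le N} c_k\, e^{ik\pi z}$ with $c_0 = a_0(x)$ and $|c_k|^2 = B_3(x)^2/k^{2t} + O(1/k^{2t+1})$ by assumption, the convolution $\tilde g_\sigma * f_{x,N}$ multiplies the $k$-th mode by the Gaussian transform factor $\widehat{\tilde g}_\sigma(k) := \int_{-2}^{2}\tilde g_\sigma(u)\, e^{-ik\pi u}\,du$, so the $k$-th Fourier coefficient of $h$ is $(B_1 - \widehat{\tilde g}_\sigma(k))\, c_k$. Applying Parseval, separating the $k=0$ contribution from the tail, and collecting the $(\sigma,m)$-dependent but $N$-independent Gaussian factors into the single constant $B_2(\sigma,m,x)^2 := a_0(x)^2 B_1^2 + 2 B_1^2 B_3(x)^2 \zeta(2t)$ as defined in the statement, the only $N$-dependence becomes isolated in the partial Riemann zeta sum $\sum_{k=1}^N 1/k^{2t}$. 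Lemma~\ref{lemma:asymptotic_zeta} then gives $\sum_{k=1}^N 1/k^{2t} = \zeta(2t) - \frac{1}{(2t-1)N^{2t-1}} + O(1/N^{2t})$, which on substitution yields
\begin{equation}
\|h(\vec b)\|_2^2 = m\left(B_2^2 - \frac{2B_1^2 B_3(x)^2}{2t-1}\cdot\frac{1}{N^{2t-1}} + O(1/N^{2t})\right).
\end{equation}
The final step is the Taylor expansion $\sqrt{B_2^2 - \varepsilon} = B_2 - \varepsilon/(2B_2) + O(\varepsilon^2)$, which produces the stated asymptotic, the $1/B_2$ being absorbable into the constant of the $1/N^{2t-1}$ correction since $B_2$ is bounded below.

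The main obstacle I foresee is the careful bookkeeping of the Gaussian weights $\widehat{\tilde g}_\sigma(k)$: one must verify that their combination with the $B_1$ prefactor assembles cleanly into $B_2(\sigma,m,x)^2$ as defined, without contaminating the $N$-correction coefficient. This is where the rapid decay of $\widehat{\tilde g}_\sigma(k)$ outside a band of width $O(m/\sigma)$, together with the constraint $N < m/2$, does the work of isolating the $\zeta(2t)$ piece. Once this is established, the boundedness of $B_2(\sigma,m,x)$ in $(\sigma,m)$ is immediate: Lemma~\ref{lemma:convolutions_are_close} gives $1 \le B_1(m,\sigma) \le 1 + 1/(2m-1) \le 2$ uniformly in $(\sigma,m)$; $\zeta(2t) \le \zeta(4) = \pi^4/90$ for $t \ge 2$; and $a_0(x), B_3(x)$ are fixed once $x$ is fixed. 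Hence $B_2 = B_1\sqrt{a_0^2 + 2 B_3^2 \zeta(2t)}$ lies in a bounded positive interval uniformly in $(\sigma,m)$.
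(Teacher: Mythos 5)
Your proposal is correct and follows essentially the same route as the paper's proof: both pass to the Fourier side via Plancherel, identify the coefficients of $B_1 f_{x,N}-\tilde g_\sigma * f_{x,N}$ as $(B_1-h_{\sigma,2k})a_k(x)$ by the convolution theorem, use $N<m/2$ to show the $m$-point sampling captures the spectrum without aliasing (the paper does this by an explicit three-case DFT computation, you by citing the Nyquist sampling identity), discard the rapidly decaying Gaussian factors, apply Lemma~\ref{lemma:asymptotic_zeta} at $2t$, and Taylor-expand the square root using the lower bound on $B_2$. The only difference is presentational, and your boundedness argument for $B_2$ is an equally valid variant of the paper's.
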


This final lemma allows us to relate the continuous case, where our analysis works out easier, to the discrete case, where our smoothness metric is actually defined.

\begin{restatable}{lemma}{avgValue}\emph{(The average value of the truncated Fourier PDF is $1/2$)}\label{lemma:avg_value}
If $N<m/2$, then
\begin{align}
    \sum_{j=0}^{m-1}f_{x,N}(b_j)=\frac{m}{2}.
\end{align}
\end{restatable}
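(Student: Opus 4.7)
The plan is to expand $f_{x,N}$ using the explicit formula for the truncated Fourier PDF given in Algorithm~\ref{alg:fourier_head}, and then evaluate the sum term by term to show that only the constant term survives. Writing $f_{x,N}(z) = \tfrac{1}{2} + \Re\!\bigl(\sum_{k=1}^N \tilde c_k \exp(ik\pi z)\bigr)$ with input-dependent complex coefficients $\tilde c_k := c_k/\Re(c_0)$, linearity gives
\[
\sum_{j=0}^{m-1} f_{x,N}(b_j) \;=\; \frac{m}{2} \;+\; \Re\!\left(\sum_{k=1}^N \tilde c_k\, S_k\right), \qquad S_k := \sum_{j=0}^{m-1} \exp(ik\pi b_j).
\]
The entire task therefore reduces to showing that $S_k = 0$ for every $k \in \{1, \dots, N\}$.

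For each such $k$, I would substitute $b_j = -1 + (2j+1)/m$ and pull all $j$-independent factors outside the sum to obtain
\[
S_k \;=\; \exp\!\bigl(ik\pi(-1 + 1/m)\bigr)\, \sum_{j=0}^{m-1} \exp\!\bigl(2\pi i k j / m\bigr).
\]
The inner sum is the geometric series $\sum_{j=0}^{m-1} \omega^j$ with $\omega = e^{2\pi i k/m}$ an $m$-th root of unity. By the standard identity this sum equals $0$ whenever $k$ is not an integer multiple of $m$, and equals $m$ otherwise. The hypothesis $1 \le k \le N < m/2 < m$ guarantees that $k$ is never a multiple of $m$ in the relevant range, so $S_k = 0$ for all $k = 1, \dots, N$. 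Substituting this back yields $\sum_{j=0}^{m-1} f_{x,N}(b_j) = m/2$, as claimed.

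I expect no real obstacle in this argument: it is essentially the discrete orthogonality relation for the additive characters of $\mathbb{Z}/m\mathbb{Z}$ evaluated on the uniform bin-center grid, and the only structural input needed from the Fourier head's construction is that its zero-frequency coefficient is exactly $1/2$. I note that the hypothesis $N < m/2$ is strictly stronger than what this lemma alone requires (the argument above works for any $N < m$), but it is consistent with the Nyquist threshold invoked throughout the surrounding scaling-law analysis, so assuming it here costs nothing.
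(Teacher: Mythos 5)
Your proof is correct and follows essentially the same route as the paper's: expand $f_{x,N}$ at the bin centers, swap the order of summation, and reduce each nonconstant frequency to a geometric sum over $m$-th roots of unity that vanishes because $k$ is not a multiple of $m$, leaving only the constant term $m \cdot a_0 = m/2$. The only cosmetic difference is that you use the one-sided real form of the series while the paper sums over $k = -N,\dots,N$ in complex exponentials, and your observation that $N < m$ would suffice is accurate (the paper's proof likewise only uses $|k| < m$).
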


\subsection{Proving Theorem~\ref{thm:scaling_law} Using the Lemmata}

We now prove the theorem that provides a scaling law for the Fourier head.
This result quantifies the trade-off between modeling capacity and smoothness as the number of frequencies increases. 
In order to prove this, we must assume that $f_x$, the conditional distribution being learned by the Fourier head, is sufficiently smooth.
For example, if $f_x$ is twice continuously differentiable, then the Fourier coefficients corresponding to the $k$-th frequency of $f_x$ are in $O(1/k^2)$ \citep[Ch.2, Cor. 2.4]{stein2011fourier}.
Thus, our assumption that the Fourier coefficients decay quadratically is reasonable, and our Fourier weight decay regularization helps ensure that this condition is met in practice as well.
In our theorem, we generalize this hypothesis to the cases where the Fourier coefficients corresponding to the $k$-th frequency of $f_x$ are in $O(1/k^t)$.

\scalinglaw*

\begin{proof}[Proof of Claim 2 of Theorem~\ref{thm:scaling_law}]

We can estimate that
\begin{align*}
    s(y^{(N)}) 
    &=\frac{1}{\sum_{j=0}^{m-1}f_{x,N}(b_j)}
    \sum_{\sigma=1}^\infty\alpha_\sigma 
    \|f_{x,N}(\vec{b})-g_\sigma*f_{x,N}(\vec{b})\|_2 
    && 
    \text{(Definition~\ref{def:smoothness})}\\
    &=\frac{1}{\sum_{j=0}^{m-1}f_{x,N}(b_j)}
    \sum_{\sigma=1}^\infty\alpha_\sigma \Big(
    \|B_1(m,\sigma)f_{x,N}(\vec{b})-(\tilde g_\sigma*f_{x,N})(\vec{b})\|_2 
    &&\text{(Lemma~\ref{lemma:convolutions_are_close})}\\
    &\qquad\qquad\qquad\qquad\qquad\qquad\qquad+ \sqrt{m} O(1/N^{2t+1})\Big) 
    \nonumber\\
    &= \frac{\sqrt{m}}{\sum_{j=0}^{m-1}f_{x,N}(b_j)} \sum_{\sigma= 1}^{\infty} 
    \alpha_{\sigma} 
    \Big(
    B_2(\sigma, m, x) - \frac{B_1(m,\sigma)^2B_3(x)^2}{(2t-1)N^{2t-1}}
    &&\text{(Lemma~\ref{lemma:main_term_asymptotic})}\\
    &\qquad\qquad\qquad\qquad\qquad\qquad\qquad+  O(1/N^{2t}) + O(1/N^{2t+1})
    \Big) 
    \nonumber\\
    & = \frac{2}{\sqrt{m}}\cdot
    \left(C_3- \frac{C_4}{N^{2t-1}} + O(1/N^{2t})\right).
    &&\text{(Lemmata~\ref{lemma:main_term_asymptotic},~\ref{lemma:avg_value})}
\end{align*}
In the last step we used the convergence of the respective series (which follows from boundedness of $B_2(\sigma,m,x)$ and $B_1(m,\sigma)$ in $\sigma$) and we assigned $C_3$ and $C_4$ to be those sums.
This completes the proof.
\end{proof}

\begin{proof}[Proof of Claim 1 of Theorem~\ref{thm:scaling_law}]
The proof of this claim is more straightforward. For any function $f$ on $[-1,1]$ that is at least twice continuously differentiable, we know that the Fourier series of $f$ converges uniformly and absolutely to $f$ \cite[Ch. 2, Cor. 2.4]{stein2011fourier}.
In other words, the function $f_N$ being learnt by the Fourier head converges uniformly and absolutely to $f$.
\end{proof}

\subsection{Proving the Lemmata}

In this subsection, we will restate and prove Lemmata~\ref{lemma:convolutions_are_close}, \ref{lemma:main_term_asymptotic}, and \ref{lemma:avg_value}.

\convolutionsAreClose*

\begin{proof}[Proof of Lemma~\ref{lemma:convolutions_are_close}]
Extending $f_{x,N}$ periodically to $[-2,2]$, we can compute that the continuous convolution $\tilde{g}_{\sigma}*f_{x,N} (z)$ is
\begin{align}
     (\tilde{g}_{\sigma}*f_{x,N})(z) &= \int_{-2}^{2} \tilde{g}_{\sigma}(u) f_{x,N}(z-u) \, du\\
     & = \frac{m}{2 S(m, \sigma)} \int_{-2}^{2} G_{\sigma}\left(\frac{mu}{2}\right) f_{x,N}(z-u) \, du\\
     & = \frac{1}{S(m, \sigma)}\int_{-m}^{m} G_{\sigma}(s) f_{x,N}\left(z-\frac{2s}{m}\right)\, ds, \label{eq:integral}
\end{align}
where in the third step we applied the change of variables $s = \frac{mu}{2}$.
We claim that this is precisely a continuous approximation of the discrete convolution in Definition~\ref{def:smoothness}.
To see this, we will apply the Euler-Maclaurin formula.
This formula says that the integral in \Eqref{eq:integral} is a Riemann sum over rectangles of width $1$ evaluated at the right endpoints of each interval, minus an error term $E(m, \sigma)$, as follows:
\begin{align}
    (\tilde{g}_{\sigma}*f_{x,N})(b_j) + E(m,\sigma)
        &=\frac{1}{S(m,\sigma)}\sum_{k=1-m}^{m} G_{\sigma}(k) \cdot f_{x,N}\left(b_j - \frac{2k}{m}\right) \label{eq:riemann}\\
      &=\frac{1}{S(m,\sigma)}\sum_{k=1-m}^{m} G_{\sigma}(k) \cdot f_{x,N}\left(-1 + \frac{2j+1}{m} - \frac{2k}{m}\right) \\
      & =  \frac{1}{S(m,\sigma)}\sum_{k=1-m}^{m} G_{\sigma}(k) \cdot f_{x,N}\left(-1 + \frac{2(j-k)+1}{m}\right)\\
      &= \frac{1}{S(m,\sigma)}\left(
      \sum_{k=1-m}^{m-1} 
      G_{\sigma}(k) 
      \cdot 
      f_{x,N}\left(b_{j-k}\right) 
      + 
      G_{\sigma}(m) f_{x,N}(b_{j-m})\right)\\
      &= 
      \frac{1}{S(m,\sigma)}\left(
      S(m,\sigma)\cdot (g_\sigma*f_{x,N})(b_j) 
      + 
      G_{\sigma}(m)f_{x,N}(b_{j})\right)\\
        &= 
        (g_\sigma*f_{x,N})(b_j) + \frac{1}{S(m,\sigma)} G_{\sigma}(m)f_{x,N}(b_j),
           \label{eq:riemann_approx}
\end{align}
where the error term is defined as
\begin{align}
    E(m,\sigma)
    &:=\frac{1}{S(m,\sigma)}{\int_{-m}^{m} \frac{d\left(G_{\sigma}(s) f_{x,N}\left(z-\frac{2s}{m}\right)\right)}{ds} {P_{1}(s)}\, ds}\label{eq:euler2}\\
    &\qquad+ \frac{1}{2 S(m, \sigma)}\left(G_{\sigma}(m) f_{x,N}(z-2) - G_{\sigma}(-m)f_{x,N}(z+2)\right)\label{eq:euler1},
\end{align}
where $P_1(s):=s - \floor{s}-1/2$ is the periodized Bernoulli polynomial.
We will now estimate this error term.
Note that since $G_{\sigma}$ is an even function and $f_{x,N}$ is periodic with period $2$, the difference in \ref{eq:euler1} is $0$.
Therefore, we can compute that
\begin{align}
E(m,\sigma) &= \frac{1}{S(m, \sigma)}\int_{-m}^{m} G_{\sigma}'(s) P_1(s) f_{x,N}\left(z-\frac{2s}{m}\right) \,ds \\
    &\qquad- \frac{2}{mS(m, \sigma)} \int_{-m}^{m} G_{\sigma}(s)P_1(s) f'_{x,N}\left(z-\frac{2s}{m}\right)  \,ds.
\end{align}
Using the triangle inequality, we can bound $E(m,\sigma)$ in terms of convolutions with $\tilde{g}_\sigma$:
\begin{align}
  \abs{E(m,\sigma)}
    & \leq   \frac{1}{S(m, \sigma)}\abs{\int_{-m}^{m} G_{\sigma}'(s) P_1(s) f_{x,N}\left(z-\frac{2s}{m}\right) \,ds }\\
    &\qquad\qquad+ \frac{1}{S(m, \sigma)} \abs{\int_{-m}^{m} G_{\sigma}(s)P_1(s) f'_{x,N}\left(z-\frac{2s}{m}\right) \frac{-2}{m} \,ds} \\
    & \leq \frac{1}{S(m,\sigma)} \Bigg(\frac{m}{2\sigma^2}\int_{-m}^{m} G_{\sigma}(s) f_{x,N}\left(z-\frac{2s}{m}\right)\,ds +\\
    &\qquad\qquad+\frac{1}{m}\int_{-m}^{m} G_{\sigma}(s)\abs{f'_{x,N}\left(z-\frac{2s}{m}\right)}ds\Bigg) \label{eq:second_step}\\
    & = \frac{m}{2\sigma^2} (\tilde{g}_{\sigma} * f_{x,N})(z) + \frac{1}{m}(\tilde{g}_{\sigma} * \abs{f'_{x,N}})(z),
\end{align}
where in \Eqref{eq:second_step} we used that $\abs{P_1(s) }\leq 1/2$ and that $\abs{G'_{\sigma}(s)} = \abs{s}G_{\sigma}(s)/\sigma^2 \leq mG_{\sigma}(s)/\sigma^2$ for $s\in[-m,m]$.

Note that since $\tilde{g}_{\sigma}$ is a truncated Gaussian on $[-2,2]$, it is infinitely differentiable on the open set $(-2,2)$, however, it is not differentiable at the endpoints $-2$ and $2$ when treated as a 4-periodic function. 
This technical difficulty can be resolved using mollifiers: we can replace $\tilde{g}_{\sigma}$ with $\tilde{g}_{\sigma}*\varphi_{\epsilon}$, where $\{\varphi_{\epsilon}\}$ is a family of mollifiers indexed by $\epsilon > 0$. 
The key properties of a mollifier are that $\tilde{g}_{\sigma}*\varphi_{\epsilon}$ is infinitely differentiable as a 4-periodic function for all $\epsilon > 0$ and $\lim_{\epsilon \to 0} \tilde{g}_{\sigma}*\varphi_{\epsilon} = \tilde{g}_{\sigma}$ \cite[Ch.\ 3]{stein2005analysis}. 
We are ultimately interested in only bounds on absolute values of $\tilde{g}_{\sigma}$ convolved with various functions, and since absolute values are continuous and inequalities are preserved under taking limits, all our bounds are still true.
In particular, this shows that the $k$'th Fourier coefficients of $\tilde g_\sigma$ decay faster than any polynomial.
And on the other hand, by assumption we know that the Fourier coefficients of $f_{x,N}$ decay on the order of $1/k^t$; and we know that $|f_{x,N}'|$ is continuous and $2\pi$ periodic, so its Fourier coefficients converge.
So by the convolution theorem, we can deduce that the Fourier coefficients of $\tilde g*f_{x,N}$ and $\tilde g_\sigma * |f_{x,N}'|$ decay faster than any polynomial.
Summed over the $N$ frequencies, this shows that $|\tilde{g}_{\sigma} * f_{x,N}(x)|$ and $|\tilde{g}_{\sigma} * \abs{f'_{x,N}}(z)|$ decay faster than any polynomial as well.
Since $m$ is fixed and $\sigma\ge 1$, this implies that
\begin{align}
  \abs{E(m,\sigma)}
    & = O(1/N^{2t+1})\label{eq:error_estimate_simplified}.
\end{align}
Using Definition~\ref{def:discrete_convolution} and \Eqref{eq:riemann}, we have that
\begin{align}
    g_{\sigma} * f_{x,N}(b_j) 
    &= \frac{1}{S(m,\sigma)}\sum_{k=1-m}^{m} G_{\sigma}(k) \cdot f_{x,N}\left(b_j - \frac{2k}{m}\right) - \frac{1}{S(m,\sigma)}G_{\sigma}(m) f_{x,N}(b_j) \\
    &= (\tilde{g}_{\sigma} * f_{x,N})(b_j) + E(m,\sigma) - \frac{1}{S(m,\sigma)}G_{\sigma}(m) f_{x,N}(b_j). \label{eq:euler3}
\end{align}
If we define $C_1(m, \sigma) := \frac{G_{\sigma}(m)}{S(m,\sigma)}
$, then \Eqref{eq:euler3} combined with \ref{eq:error_estimate_simplified} together imply that
\begin{align}
    \Big|g_{\sigma} * f_{x,N}(b_j) - \tilde{g}_{\sigma} * f_{x,N}(b_j) + C_1(\sigma,m)f_{x,N}(b_j)\Big|  
    =
    O(1/N^{2t+1}).
\end{align}
Finally, we can estimate that
\begin{align}  
 &\|f_{x,N}(\vec{b})-g_\sigma*f_{x,N}(\vec{b})\|_2 \\
 &\qquad\qquad= \|f_{x,N}(\vec{b})-\tilde g_\sigma*f_{x,N}(\vec{b})+C_1(\sigma,m)f_{x,N}(\vec{b})\|_2\\
 &\qquad\qquad\qquad +\|g_\sigma*f_{x,N}(\vec{b})-\tilde g_\sigma * f_{x,N}(\vec{b})+C_1(m,\sigma)f_{x,N}(\vec{b})\|\\
 & \qquad\qquad= \|f_{x,N}(\vec{b})-\tilde{g}_\sigma*f_{x,N}(\vec{b}) + C_1(m,\sigma)f_{x,N}(\vec{b})\|_2 
 + \sqrt{m}\|O(1/N^{2t+1})\|_2 \\
 & \qquad\qquad= \|(1+C_1(m,\sigma))f_{x,N}(\vec{b})-\tilde{g}_\sigma*f_{x,N}(\vec{b})\|_2  
 + \sqrt{m}O(1/N^{2t+1}).\label{eq:riemann_bound3}
\end{align}
This completes the first part of the proof.
For the second part of the proof, since $S(m,\sigma)\ge (2m-1)G_\sigma(m-1)$, we can estimate that 
\begin{align}
    B_1(m,\sigma) 
    \leq 
    1 + \frac{G_{\sigma}(m)}{(2m-1)G_\sigma(m-1)} \leq 1+ \frac{1}{2m-1}e^{-(2m-1)/2\sigma^2} \leq 1+ \frac{1}{2m-1}.
\end{align}
This completes the proof.
\end{proof}

\mainTermAsymptotic*
\begin{proof}[Proof of Lemma~\ref{lemma:main_term_asymptotic}]
We will first argue that $B_2(\sigma, m, x)$, as a function of $\sigma$ and $m$, is bounded from above and below. 
Indeed, from its definition, $B_2(\sigma,m,x) \geq \sqrt{a_0(x)^2B_1(m,\sigma)^2}\ge|a_0(x)|.$
But the Fourier PDF has integral $1$ over $[-1,1]$, so its constant term is $a_0(x)=1/2$.
This implies that $B_2(\sigma,m,x)\ge 1/2$.
To see $B_2(\sigma,m,x)$ is bounded from above, we simply recall that in Lemma~\ref{lemma:convolutions_are_close} we showed that $\abs{B_1(m,\sigma)}\le 2$, which implies that $|B_2(\sigma,m,x)|\le\sqrt{4a_0(x)^2 + 8B_3(x)^2\zeta(2t)}$.
This shows that $B_2(\sigma,m,x)$ is bounded above and below as a function of $m$ and $\sigma$, as claimed.

Now, let $(d_0(x),\dots,d_{m-1}(x))\in\mathbb R^m$ be the discrete Fourier transform of $(B_1(m,\sigma)f_{x,N} - \tilde{g}_{\sigma} * f_{x,N})(\vec{b})\in\mathbb R^m$. For notational simplicity, we will write $B_1 = B_1(m,\sigma)$ as long as $\sigma$ is fixed.
By Plancherel's Theorem, we have
\begin{align}
\label{eq:pars}
  \sum_{j=0}^{m-1} \abs{(B_1f_{x,N} - \tilde{g}_{\sigma} * f_{x,N})(b_j)}^2 & = \frac{1}{m}\sum_{k=0}^{m-1} \abs{d_{k}(x)}^2.
\end{align}
Let $h_{\sigma,j}$ be the Fourier coefficients of $\tilde{g}_{\sigma}$, treated as a periodic function with period $4$, and defined over $[-2,2]$:
\begin{equation}
    \tilde{g}_{\sigma}(z)  = \sum_{j=-\infty}^{\infty}h_{\sigma, j} e^{\pi i j z/2}.
\end{equation}
Since $f_{x,N}$ is defined over $[-1,1]$ and is periodic with period $2$, we can likewise treat it as a function over $[-2,2]$ with period $4$, in which case we can rewrite its Fourier series as
\begin{equation}
    f_{x,N}(z) = \sum_{j=-2N}^{2N} \tilde{a}_{j}(x) e^{\pi i j z/2},
\end{equation}
where
\begin{equation}
     \tilde{a}_{j}(x) = \begin{cases}
     a_{j/2}(x) & \text{ if } j \equiv 0 \pmod{2} \\
     0 & \text{ else.}
     \end{cases}
\end{equation}
Then, by the Convolution Theorem, we have
\begin{align}
    (B_1f_{x,N} - \tilde{g}_{\sigma} * f_{x,N})(b_l)  &= \sum_{j=-2N}^{2N} \tilde{a}_j(x) (B_1-h_{\sigma,j})\cdot e^{\pi i j b_l/2}\\
    &= \sum_{k=-N}^{N} \tilde{a}_{2k}(x) (B_1-h_{\sigma, 2k})\cdot e^{\pi i k b_l}\\
    & = \sum_{k=-N}^{N} {a}_{k}(x) (B_1-h_{\sigma, 2k})\cdot e^{\pi i k b_l},\label{eq:conv}
\end{align}
where in the second equality we used the fact that $\tilde{a}_j$ is 0 for odd $j$ and therefore re-indexed using $k=j/2$.
Thus, using the definition of DFT along with \Eqref{eq:conv}, we get
\begin{align}
d_k(x) &= \sum_{l=0}^{m-1} (B_1f_{x,N} - \tilde{g}_{\sigma} * f_{x,N})(b_l)\cdot  e^{-2\pi i k l/m}\\
& = \sum_{l=0}^{m-1} \sum_{j=-N}^{N} a_j(x) (B_1-h_{\sigma,2j}) e^{\pi i j b_l}\cdot  e^{-2\pi i k l/m} \\
&=  \sum_{j=-N}^{N}  a_j(x) (B_1-h_{\sigma,2j}) \sum_{l=0}^{m-1} e^{\pi i j(-1 + \frac{2l+1}{m})}\cdot e^{-2\pi i k l/m}\\
& = \sum_{j=-N}^{N}  a_j(x) (B_1-h_{\sigma,2j}) \cdot e^{\pi i j(-1+1/m)}\sum_{l=0}^{m-1} e^{2\pi i (j -k)l/m}.
\end{align}

First, we claim that at most a single summand (in $j$) is represented.
Towards this, we note that
\begin{equation}
    \sum_{l=0}^{m-1} e^{2\pi i (j -k)l/m} = 
    \begin{cases} 
        0 & j \not\equiv k \pmod{m} \\
        m & j \equiv k \pmod{m}.
    \end{cases}
\end{equation}
Then, we note that since  $0 < N < m/2$, for each $0\leq k < m$, there is at most one $j\in \{-N,-N+1,\dots,N\}$, such that $j \equiv k \pmod{m}$.
This shows that there is at most a single summand.
We will now find the exact formula for each summand.
We consider three cases.
\begin{itemize}
    \item Case 1: $0\le k\le N$.
    In this case, $j=k$ satisfies $j\equiv k\pmod{m}$, so this index gives the exponential sum of $m$.
    \item Case 2: $N<k<m-N$.
    In this case, $k$ is too large to be an index in the sum, so we can't choose $j=k$; the next smallest equivalent value is $j=k-m$, which satisfies $j\equiv k\pmod{m}$.
    But $N-m<j<-N$ in this case, so $k$ is too small to be an index in the sum; therefore, every exponential sum is zero in this range.
    \item Case 3: $m-N\le k\le m-1$.
    In this case, $j=k-m$ satisfies $j\equiv k\pmod m$.
    We have $-N\le j\le 1$, so this is a valid index in the sum.
\end{itemize}
This gives the following closed formula:
\begin{align}
    d_k(x) &= 
    \begin{cases}
        m \cdot a_k(x) (B_1-h_{\sigma,2k}) e^{\pi i k(-1+1/m)} & 0 \leq k \leq N \\
        0 & N<k<m-N\\
        m \cdot a_{k-m} (B_1-h_{\sigma,2(k-m)}) e^{\pi i (k-m)(-1+1/m)} & m - N \leq k \leq m-1.
    \end{cases}
\end{align}

Using this closed formula in \ref{eq:pars}, we obtain
\begin{align}
    \sum_{j=0}^{m-1} &\abs{(B_1f_{x,N} - \tilde{g}_{\sigma} * f_{x,N})(b_j)}^2 \\
    &= \frac{1}{m}\sum_{k=0}^{N}  \abs{m \cdot a_k(x) (B_1-h_{\sigma,2k}) e^{\pi i k(-1+1/m)}}^2\\ 
    &\qquad+ \frac{1}{m}\sum_{k=m-N}^{m-1} \abs{ m \cdot a_{k-m} (B_1-h_{\sigma,2(k-m)}) e^{\pi i (k-m)(-1+1/m)}}^2\\
    &= m\sum_{k=0}^{N}  \abs{a_k(x) (B_1-h_{\sigma,2k})}^2+ m\sum_{k=m-N}^{m-1} \abs{a_{k-m} (B_1-h_{\sigma,2(k-m)})}^2,
\end{align}
where in the last step we used that $\abs{e^{\pi i k(1-1/m)}} = 1 = \abs{e^{\pi i (k-m)(1-1/m)}}$ since they are both complex exponentials. Now, since $\tilde{g}_{\sigma}$ is a real and even function, we know that its Fourier coefficients $h_{\sigma, k}$ are real. 
Further, since $\tilde{g}_{\sigma}$ is infinitely differentiable, we also know that $h_{\sigma, k} = O(1/k^{t})$ (in fact they decay faster than $1/k^p$ for any $p\geq 1$). 
Thus, using that $\abs{a_k(x)}$ decays like $B_3(x)/k^t$, we see 
\begin{align}
    \abs{a_{k}(x)(B_1-h_{\sigma,2k})}^2 
    &= \abs{a_k(x)}^2(B_1^2-2h_{\sigma,2k}(x)+h_{\sigma,2k}^2) \\
    &= \frac{B_3^2 B_1^2}{k^{2t}} 
    +O\left(\frac{1}{k^{2t}(2k)^t}\right) 
    + O\left(\frac{1}{k^{2t}(2k)^{2t}}\right).
    \label{eq:dom}
\end{align}
From \ref{eq:dom}, it is clear that since we are interested in only the dominant asymptotic, we can safely ignore the higher order terms coming from the $h_{\sigma, k}$. As a result, we can estimate that
\begin{align}
    \sum_{j=0}^{m-1} \abs{(B_1f_{x,N} - \tilde{g}_{\sigma} * f_{x,N})(b_j)}^2 
    &\approx  m B_1^2 a_0(x)^2 + m\sum_{k=1}^{N} \frac{B_3^2 B_1^2}{k^{2t}} + m\sum_{k=m-N}^{m-1}\frac{B_3^2 B_1^2}{(k-m)^{2t}} \\
    & = m a_0(x)^2 B_1^2 + m\sum_{k=1}^{N} \frac{B_3^2 B_1^2}{k^{2t}} + m\sum_{k=-N}^{-1}\frac{B_3^2 B_1^2}{k^{2t}} \\
    &=  m a_0(x)^2 B_1^2 + m\sum_{k=1}^{N} \frac{B_3^2 B_1^2}{k^{2t}} + m\sum_{k=1}^{N}\frac{B_3^2 B_1^2}{k^{2t}} \\
    &= m a_0(x)^2 B_1^2 + 2m\sum_{k=1}^{N} \frac{B_3^2 B_1^2}{k^{2t}}.\label{eq:sum}
\end{align}
Next, we note that our asymptotic in Lemma~\ref{lemma:asymptotic_zeta}, applied at $2t$, yields
\begin{align}
    \sum_{k=1}^{N} \frac{1}{k^{2t}} 
    &=  
    \zeta(2t)-\frac{1}{2t-1}\frac{1}{N^{2t-1}} + O(1/N^{2t}).
\end{align}
Substituting this into \ref{eq:sum}, we obtain
\begin{align}
    \sum_{j=0}^{m-1} &\abs{(B_1f_{x,N} - \tilde{g}_{\sigma} * f_{x,N})(b_j)}^2 \\
    &= m a_0(x)^2 B_1^2 + 2mB_1^2B_3^2\left(\zeta(2t)-\frac{1}{2t-1}\frac{1}{N^{2t-1}} + O(1/N^{2t})\right)\\
    &=m\left(B_2^2 -\frac{2B_1^2B_3^2}{2t-1}\frac{1}{N^{2t-1}} + B_1^2B_3^2O(1/N^{2t})\right)\\
    &=m\left(B_2^2 -\frac{2B_1^2B_3^2}{2t-1}\frac{1}{N^{2t-1}} + O(1/N^{2t})\right),
\end{align}
where we defined $B_2=B_2(\sigma,m,x) := \sqrt{a_0(x)^2B_1^2+2B_1^2B_3^2\zeta(2t)}$ as in the statement of the Lemma, and in the third line we applied Lemma~\ref{lemma:convolutions_are_close} to estimate that $B_1\le 2$, and we used that $B_3=B_3(x)$ only depends on $x$.
Then, using the Taylor expansion $(1+x)^{1/2} = 1+ \frac{x}{2} + O(x^2)$ about $0$, we can estimate that
\begin{align}
&\left(\sum_{j=0}^{m-1} \abs{(B_1f_{x,N} - \tilde{g}_{\sigma} * f_{x,N})(b_j)}^2\right)^{1/2}\\
    &\qquad\qquad=\left(mB_2^2 -\frac{m2B_1^2B_3^2}{2t-1}\frac{1}{N^{2t-1}} + mO(1/N^{2t})\right)^{1/2} \\
    &\qquad\qquad=\sqrt{m}B_2
    \left(1- \frac{2B_1^2B_3^2}{(2t-1)B_2^2}\frac{1}{N^{2t-1}} + \frac{1}{B_2^2}O(1/N^{2t})\right)^{1/2} \\
    &\qquad\qquad=\sqrt{m}B_2\left(1- \frac{1}{2}\cdot  \frac{2B_1^2B_3^2}{(2t-1)B_2^2}\frac{1}{N^{2t-1}} + O(1/N^{2t})\right)\\
    &\qquad\qquad=\sqrt{m}\left(B_2 -\frac{B_1^2B_3^2}{2t-1}\cdot\frac{1}{N^{2t-1}} + O(1/N^{2t})\right).
\end{align}
To justify our application of the Taylor expansion, we note that $N \gg 1$, and $B_2=B_2(\sigma,m,x)$ is bounded below as a function of $\sigma$ and $m$.
This completes the proof.
\end{proof}

\avgValue*
\begin{proof}[Proof of Lemma~\ref{lemma:avg_value}]
Denote by $a_k$ the Fourier coefficients of $f_{x,N}$.
We can compute that
\begin{align}
    \sum_{j=0}^{m-1}f_{x,N}(b_j)
    &=\sum_{j=0}^{m-1}\sum_{k=-N}^{N}a_k e^{i\pi k\left(-1+\frac{2j+1}{m}\right)}\\
    &=\sum_{k=-N}^{N}a_ke^{-i\pi k}e^{\frac{i\pi k}{m}}\sum_{j=0}^{m-1} e^{\frac{2\pi ijk}{m}}.
\end{align}
Note that by hypothesis, $N<m/2$, which implies that $|k| < m$ for every outer sum index $k$.
We consider two cases; 
    if $k=0$, then the innermost summand is $e^{\frac{2\pi ijk}{m}}=1$;
    and if $k\ne 0$, then the innermost sum is a truncated geometric series with first term $1$, common ratio $e^{\frac{2\pi ik}{m}}$, and $m$ terms.
In summary, the innermost summand is
\begin{align}
    \sum_{j=0}^{m-1} e^{\frac{2\pi ijk}{m}}
    &=\begin{cases}
        m&k=0\\
        0&k\ne 0,
    \end{cases}
\end{align}
which implies that $\sum_{j=0}^{m-1}f_{x,N}(b_j)=ma_0$.
But $a_0=1/2$ because $f_{x,N}$ is a PDF implies it has average value $1/2$ over $[-1,1]$.
This completes the proof.
\end{proof}

\section{Smoothness Metric}\label{app:smoothness}

We will examine how the proposed smoothness metric  \Eqref{eq:1} behaves in a toy example setting to gain intuition for its behavior. Consider a square wave, which can be expressed as an infinite sum of odd integer harmonics that decay in amplitude proportional to their frequency: 

\begin{equation}
f(x)=\frac{4}{\pi} \sum_{n=1,3,5, \ldots}^{\infty} \frac{1}{n} \sin \left(\frac{n \pi x}{L}\right).
\end{equation}
Here, the wavelength is $2L$ \citep{WeissteinSquareWave}. 

We construct a truncated version of the square wave with a finite and fixed number of frequencies. 
The waveform will slowly approach its jagged, square shape as more sine waves are added.
We frame these increasingly jagged waves as discretized multinomial densities to simulate the output of the Fourier head. 
To do this, we simply set the height to zero when the wave crest becomes negative and normalize the sum to $1$. 
The output of this transformation for a few representative waveforms is pictured in Figure~\ref{fig:square_waves_graph_1}. 

\begin{figure}[h]
\begin{center}
    \includegraphics[scale=0.5]{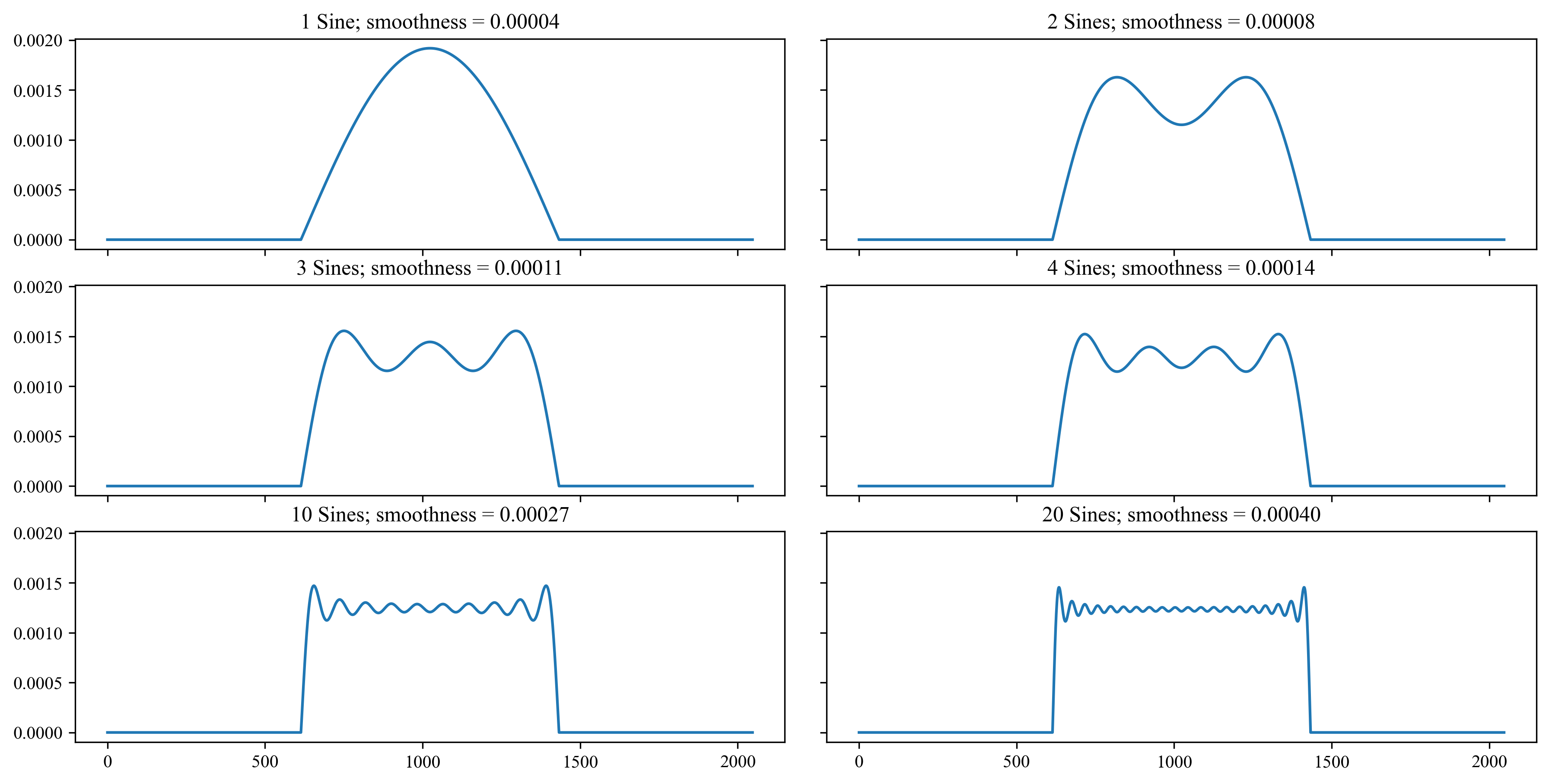}
\end{center}
\caption{Truncated square waves framed as densities and their smoothness.}
\label{fig:square_waves_graph_1}
\end{figure}

Intuitively, the truncated square wave with a single sine wave ought to be the smoothest. 
Thus our metric in this context should be smallest at that point, and increase monotonically as we add more sine waves. The plot in \ref{fig:smoothness_scaling} demonstrates that this is indeed the case.

\textbf{Choice of L2 Distance over L1 Distance:} The proposed smoothness metric \Eqref{eq:1} permits a general measure of discrepancy $D$, and we've chosen $D$ to be $L^2$ distance as indicated in \ref{eq:2}. 
We empirically observe that $L^{2}$ distance better preserves monotonicity than the $L^1$ for higher frequency content, thus motivating this choice. 
With a sample rate of $2048$Hz, the $L^1$ distance exhibits some undesirable warping when our square-wave multinomial uses over 80 sine waves (see Figure~\ref{fig:smoothness_scaling}). 
A Fourier head in a practical setting may possess several more than $80$ frequencies; accordingly, we favor the $L^2$ distance as our discrepancy measure.

\begin{figure}[h]
\begin{center}
  \includegraphics[scale=0.52]{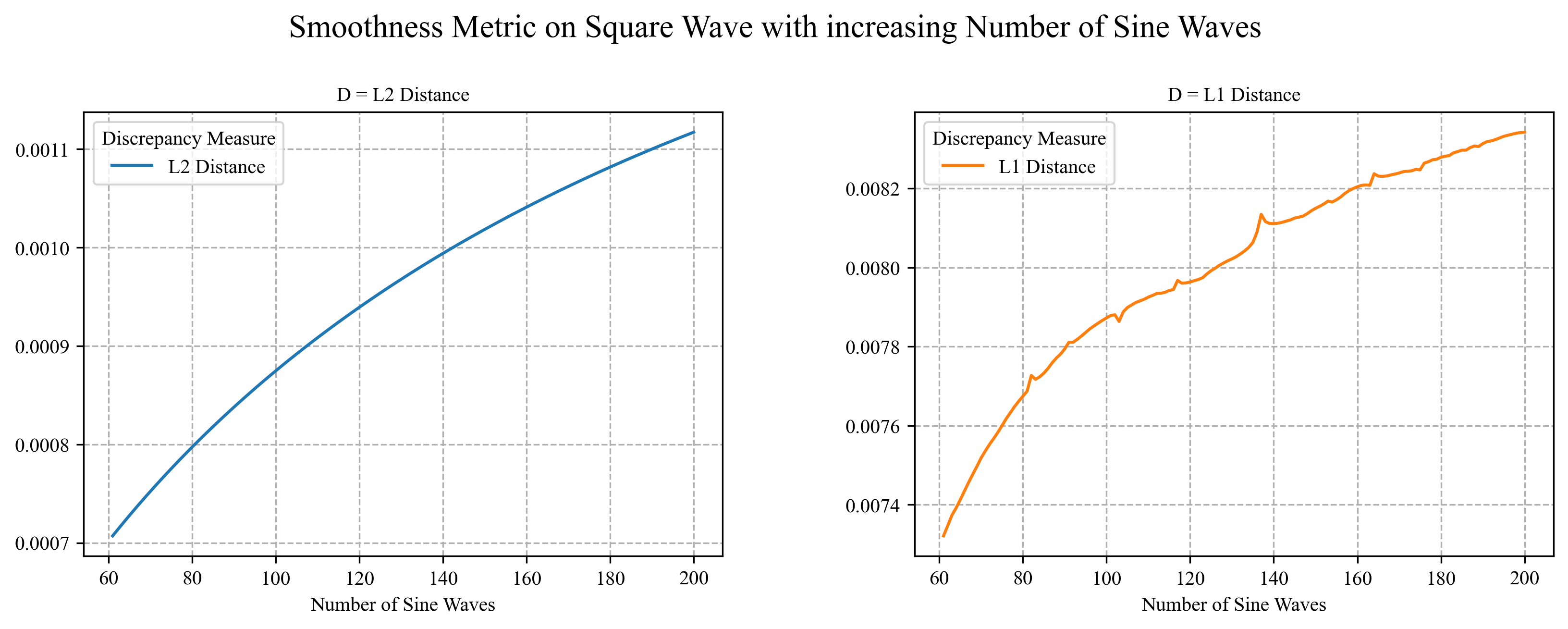}
\end{center}
\caption{Values of the smoothness metric \ref{eq:2} on our square-wave-like multinomials as we increase the number of sine waves. 
We desire the value of this metric to be close to zero when there are few sine waves, and be monotonically increasing with each additional wave, indicating that adding more high frequency content results in a less smooth distribution.
On the right, we can see that $L^1$ as a discrepancy measure leads to non-monotonicity, motivating our choice of $L^2$ distance in measuring our results.}
\label{fig:smoothness_scaling}
\end{figure}

\textbf{Alternative Notions of Smoothness:} In validating our choice of smoothness metric, we compare it to the \emph{spectral entropy} \citep{spectralentropy}, which has a similar purpose in quantifying the ``smoothness'' of the frequency content of a signal. 
Spectral entropy is defined as the Shannon entropy of the \emph{power spectral density }of a sampled signal $f$, which is defined as follows:

\begin{equation}
        H(f; N) = \sum_{n \in N}p(n)\log_{2} \left( \frac{1}{p(n)} \right) = -\sum_{n \in N}\frac{S_n}{S_{total}}\log_{2} \left( \frac{S_n}{S_{total}} \right)
\end{equation}
Here, $N$ is the number of Fourier frequencies and $S$ is the power of a frequency $n \in N$; $S_n$ is the power spectrum of the $n$th frequency, and $S_{total}$ is the power of the signal using all $N$ frequencies. For some frequency at index $n$, $S_{n} / S_{total}$ is called its relative power and $\sum_{n \in N} \frac{S_n}{S_{total}} = 1$ enables us to consider each frequency's power as a probability.

In the discrete case, the maximum entropy distribution is the uniform distribution. 
Thus, white noise will have the highest spectral entropy. 
This has the consequence that power spectral densities have more high frequency information will have lower entropy than that of white noise, provided that there is a relationship between amplitude and frequency. 
More concretely, blue noise, which is defined by the amplitude increasing proportionally to the frequency, will have lower spectral entropy than white noise. 
We sought a metric that always quantified `sharper' signals like blue noise as less smooth.
In Table ~\ref{tab:noise_experiments}, we frame sampled noises of different types as multinomial distributions to match our model setting by normalizing their amplitudes to be in $[0, 1]$ and normalizing their sum to $1$. 
Our noise types are defined before normalization, in order of smoothest to sharpest:
\begin{itemize}
    \item Brown: $S \propto \frac{1}{F^2}$
    \item Pink: $S \propto \frac{1}{F}$
    \item White: $S \sim \mathcal{N}(0, 1)$
    \item Blue: $S \propto F$
\end{itemize}
where $S$ is the power density and $F$ is the frequency. To obtain samples of each type, we first generate white noise. We do this by sampling a Gaussian with mean $0$ and standard deviation $1$ to obtain amplitudes for $t$ samples. We then apply the Fourier transform, and multiply (or divide) the amplitudes of each component by their frequency, and apply the inverse Fourier transform to recover the waveform. Finally we adjust the range of amplitudes of the signal to be within $[0, 1]$ and normalize the sum to $1$.

\begin{table}[h!]
\centering
\begin{tabular}{|l|r|r|r|r|r|r|}
\hline
     Discrepancy & Noise & Mean $\pm$ Std. Deviation & Diff & Delta & Desired Delta \\ \hline\hline
              L2 & Brown &           0.0003 $\pm$ 0.0001 &                n/a &   n/a &           n/a \\ \hline
              L2 &  Pink &           0.0017 $\pm$ 0.0002 &             0.0014 &     + &             + \\ \hline
              L2 & White &           0.0034 $\pm$ 0.0003 &             0.0016 &     + &             + \\ \hline
              L2 &  Blue &           0.0038 $\pm$ 0.0003 &             0.0005 &     + &             + \\ \hline\hline
Spectral Entropy & Brown &           0.4516 $\pm$ 0.0894 &                n/a &   n/a &           n/a \\ \hline
Spectral Entropy &  Pink &           0.3878 $\pm$ 0.0603 &            \tabred{-0.0638} &     \tabred{\textbf{-}} &             + \\ \hline
Spectral Entropy & White &           0.4266 $\pm$ 0.0614 &             0.0388 &     + &             + \\ \hline
Spectral Entropy &  Blue &           0.4191 $\pm$ 0.0583 &            \tabred{-0.0076} &     \tabred{\textbf{-}} &             + \\ \hline

\end{tabular}

\caption{Smoothness measurements for four types of noise bootstrap aggregated over 1,000 trials. The color \tabred{red} emphasizes how the value of Spectral Entropy is undesirably not monotonic increasing for what we consider increasingly ``sharp'' noise types.}
\label{tab:noise_experiments}
\end{table}

\section{Additional Experiment Details, Toy Examples}

\subsection{Motivating Example: Audio Spectrogram Transformer}\label{subsection:audio_example}

To illustrate a simple problem setting where the design of the Fourier head is appropriate, we use it as a drop-in replacement for a linear classification head in the Audio Spectrogram Transformer \citep{gong_psla}. 
We consider the task of beats per minute (BPM) classification for metronome-like audio samples \citep{Wei2024-music} within the tempo range $\{50, 51, \dots,210\}$. While this task is not difficult, we use this audio classification task to illustrate some of the design choices one can make when using the Fourier head. 
In this case, it is natural to group the BPMs into contiguous bins $\{[50,54], [55, 59],\dots\}$ and use the Fourier head to classify them. These bins have a natural continuous structure, which is where the Fourier head performs well.
We also expect that the categorical distribution over possible BPMs for a given audio clip ought to be unimodal and therefore require few frequencies to approximate. 
In fact, our best performing model for this example uses only one frequency. 

We initialize the Audio Spectrogram Transformer with pretrained weights from AudioSet \citep{audioset}, and we train two different models--one with a standard linear classification head, and one with the Fourier head. 
The Fourier head outperforms the linear classification head by an F1 score improvement of $+118\%$. 
We attribute this success to the inductive bias of continuity that the Fourier head imparts. 
In Figure~\ref{fig:toy_example_audio_smooth} we present the learned probability masses of both heads on the same input sample.
This graph illustrates that the Fourier head learns smoother PMFs than the linear head, a concept which we will later formalize and explore.

\begin{figure}[!h]
\begin{center}
    Audio Classification Task: Learned Linear vs. Fourier PMFs\vspace{2mm}
    \includegraphics[scale=0.7]{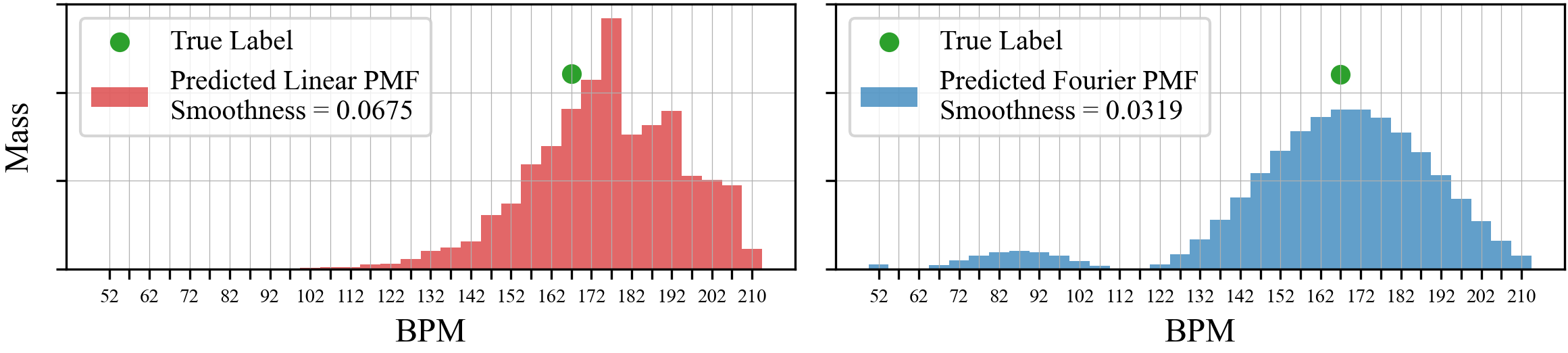}
\end{center}
\caption{Comparison between the PMF learned by the \tabred{linear head}, and the \tabblue{Fourier head} with 2 frequencies, for the toy BPM classification task, on a \tabgreen{single audio example}.
We observe that the \tabblue{Fourier head} learns a smoother categorical distribution over its predicted values, and is better centered around the \tabgreen{ground truth label}.
We also note the small mini-sine wave artifacting on the left side of the Fourier model, which tends to occur when using few frequencies.}
\label{fig:toy_example_audio_smooth}
\end{figure}

\subsection{Learning a Continuous Density}\label{app:toy_exp_details}

Here we provide full details of the datasets used in our toy example of learning a known conditional distribution.

\textbf{Dataset}: We create a synthetic dataset $\mathcal{D} = \{(q(x),q(y),q(z))\} \subset \mathbb{R}^3$ as follows. 
Fix a probability distribution $\mathcal{P}_1= \mathcal{P}_1(x)$ that is parameterized by one variable and a second distribution $\mathcal{P}_2 = \mathcal{P}_2(x,y)$ parameterized by two variables. 
Fix an interval $I \subset \mathbb{R}$. Sample $x$ uniformly from $I$, sample $y \sim \mathcal{P}_1(x)$, and finally sample $z\sim \mathcal{P}_2(x,y)$. 
We can repeat this sampling procedure $N$ times to obtain a set of $N$ triples for which we know the conditional distribution of $z$ given $x$ and $y$. 
Finally, we quantize this set to a fixed number of uniformly spaced bins in the range $[-1, 1]$ to obtain the dataset $\mathcal{D}_{\mathcal{P}_1, \mathcal{P}_2}$. 
We will denote the quantization of $z$ by $q(z)$.
We quantize into 50 bins and our dataset has size 5000, with a 80-20 split between the train and test set. 
We describe three choices for the distributions we used to create our datasets. 
We fix $I= [-0.8, 0.8]$ and $\sigma^2 = 0.01$ in all of them.
\begin{enumerate}
    \item \emph{Gaussian dataset:} $\mathcal{P}_1(x) = \mathcal{N}(x, \sigma^2),$ and $\mathcal{P}_2(x,y) = \mathcal{N}(y, \sigma^2)$.
    \item \emph{GMM-2 dataset:} $\mathcal{P}_1 = \text{Uniform}(I)$, and $\mathcal{P}_2(x,y)$ is a GMM centered at $x$ and $y$ with variance $\sigma^2$. 
    \item \emph{Beta dataset:} $\mathcal{P}_1(x) = \mathcal{N}(x, \sigma^2)$, and $\mathcal{P}_2(x,y) \sim U(\{\pm 1\}) \times \text{Beta}(100 \abs{x}, 100 \abs{y})$, where $U(\{\pm 1\})$ denotes the Rademacher distribution supported on $\{\pm 1\}$ with probability $1/2$ each.
    
\end{enumerate}

\textbf{Additional results:} In Figure~\ref{fig:toy_exper_varying_frequencies}, we present results from training over a range of frequencies, and for each frequency we ran experiments with and without Fourier regularization.
In Table~\ref{tab:toy_metrics_extra} we present results on the MSE metric, that show that the Fourier head outperforms the linear classification head.

\begin{figure}[h]
\begin{center}
    \includegraphics[scale=0.27]{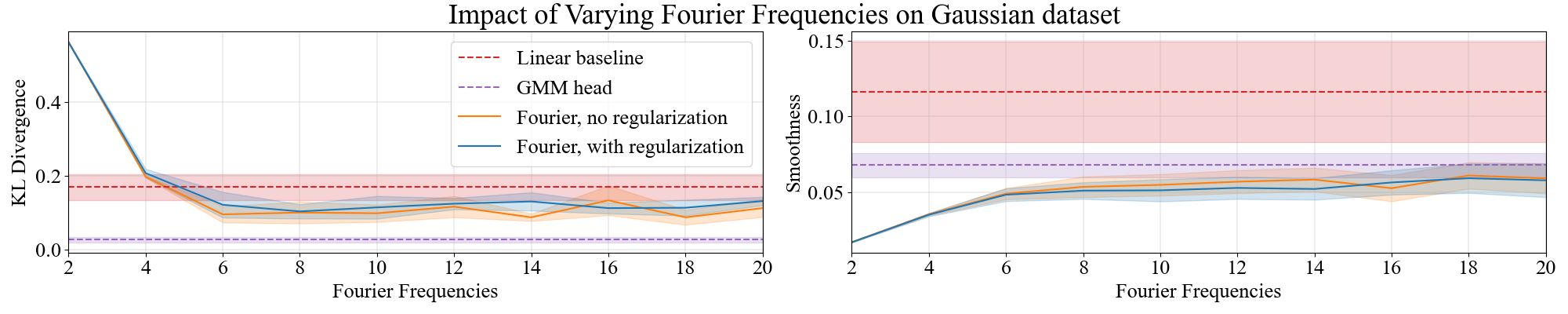}
    \includegraphics[scale=0.27]{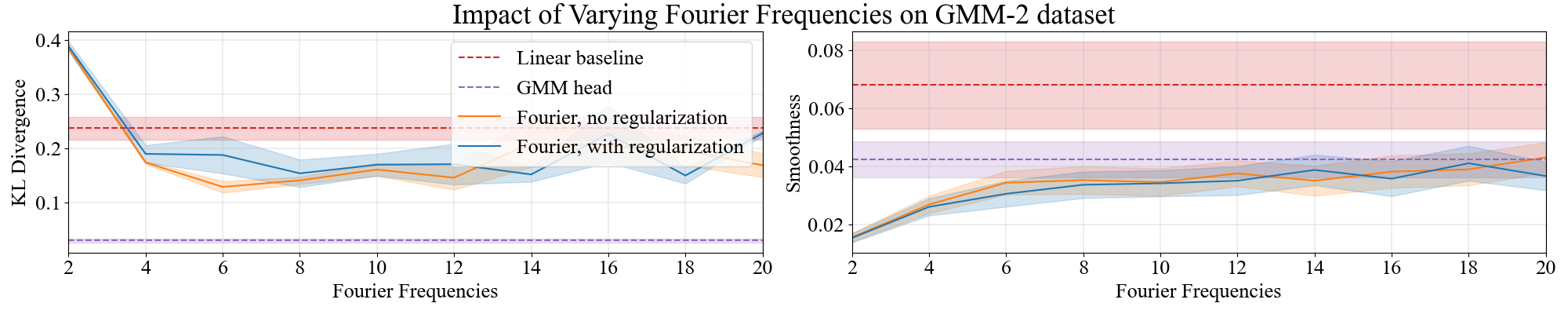}
    \includegraphics[scale=0.27]{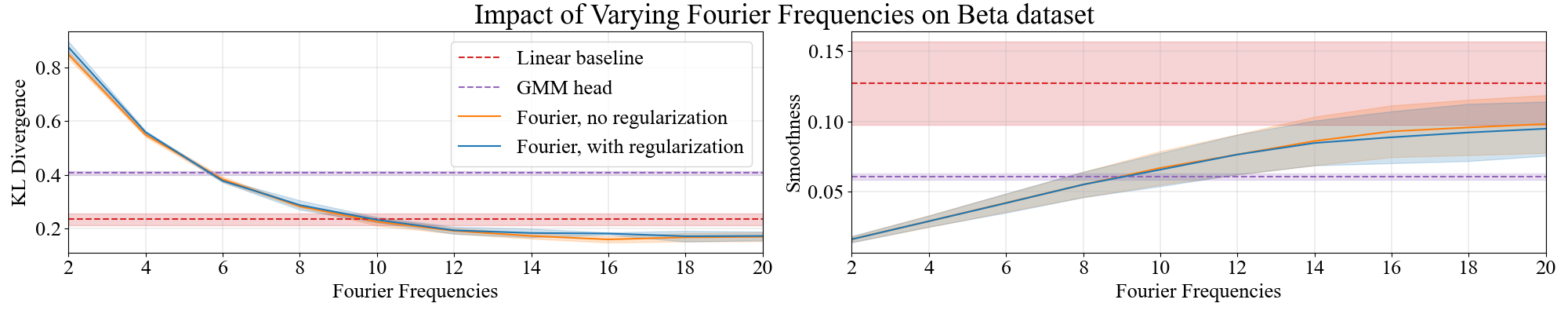}
\end{center}
\caption{We study how the quantity of Fourier frequencies impacts KL divergence and smoothness for the toy example on each dataset.
For both KL divergence and smoothness, lower is better.
We observe that the Fourier models \tabblue{with} and \taborange{without regularization} performed similarly to each other, and outperformed the \tabred{linear} baseline.
We also note that the 50\% error bars are larger for the \tabred{linear} baseline model; this indicates that the Fourier models (both \tabblue{with} and \taborange{without regularization}) are in general more stable.
This is in contrast to our large scale time series forecasting experiments, where we find that regularization helps; this is likely because those experiments use an order of magnitude more frequencies, and their conditional distributions are more complicated. 
While the \tabpurple{GMM} head has better KL divergence on the Gaussian and GMM-2 datasets, which is to be expected, the Fourier model (both \tabblue{with} and \taborange{without regularization}) eventually has the best KL divergence on the Beta dataset, since it is non-Gaussian
Notice also how on each of the datasets, the smoothness degrades as frequency increases, in a fashion that follows the asymptotic from our Theorem \ref{thm:scaling_law}.}
\label{fig:toy_exper_varying_frequencies}
\end{figure}

\begin{figure}[h]
\begin{center}
    \includegraphics[scale=0.27]{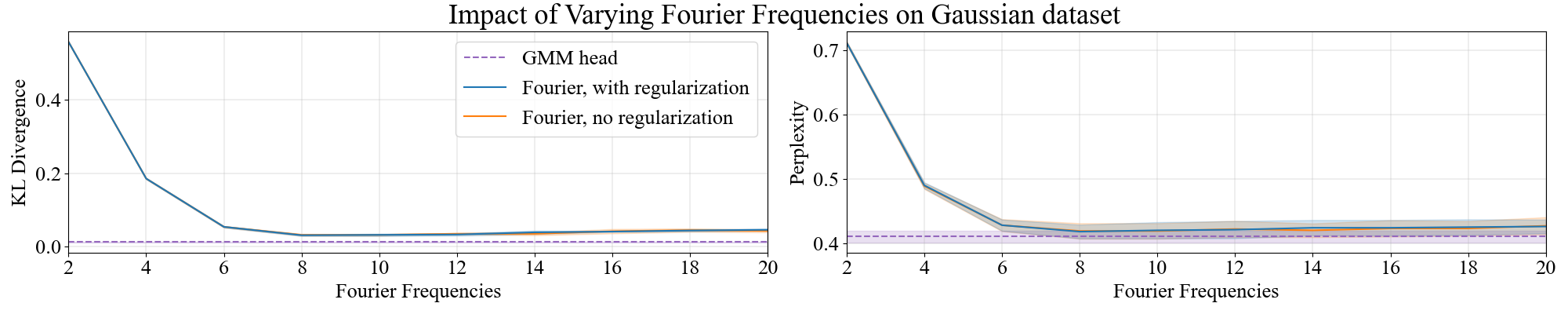}
    \includegraphics[scale=0.27]{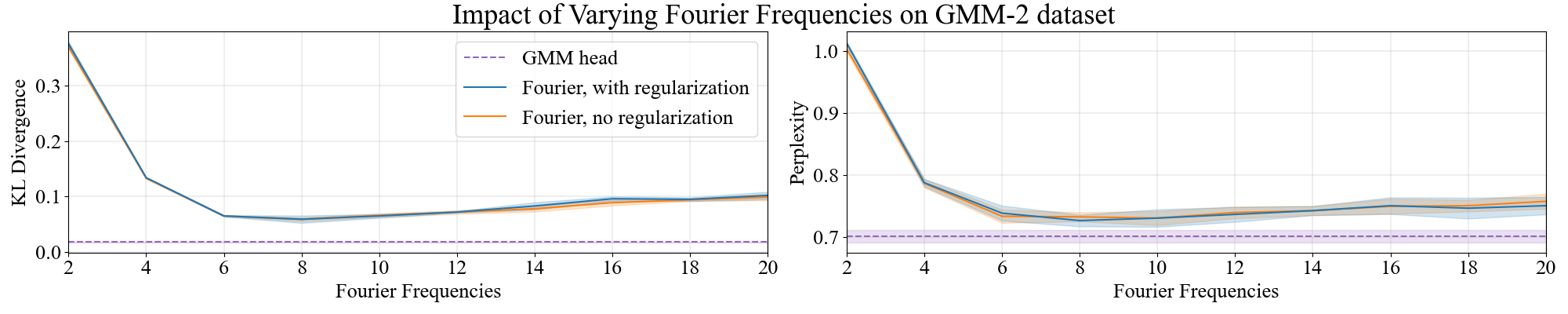}
    \includegraphics[scale=0.27]{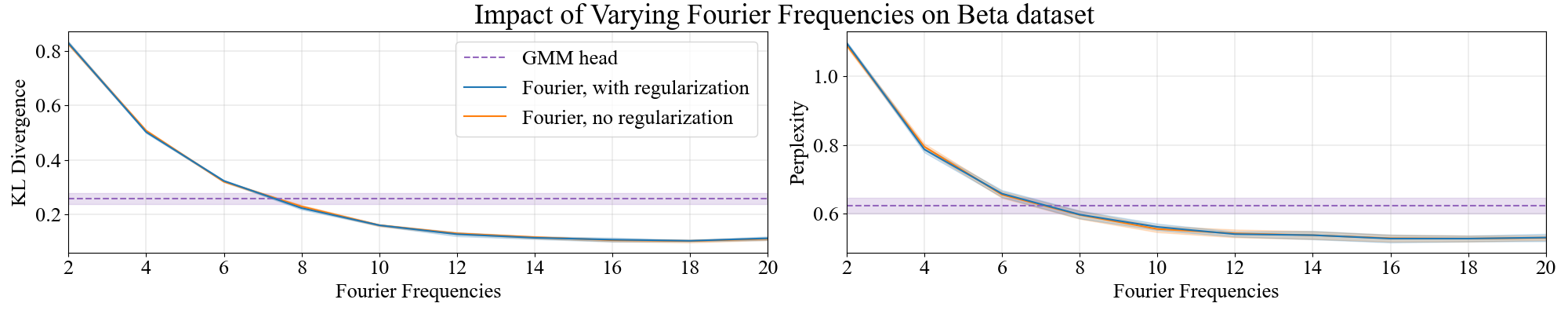}
\end{center}
\caption{We study how the quantity of Fourier frequencies impacts KL divergence and perplexity for the toy example on each dataset for the MLE experiments.
For both KL divergence and perplexity, lower is better.
We observe that the Fourier models \tabblue{with} and \taborange{without regularization} performed similarly to each other.
While the \tabpurple{GMM} head has better KL divergence on the Gaussian and GMM-2 datasets, which is to be expected, the Fourier model (both \tabblue{with} and \taborange{without regularization}) has the best KL divergence on the Beta dataset for sufficiently large Fourier frequencies, since it is non-Gaussian.}
\label{fig:toy_mle_exper_varying_frequencies}
\end{figure}

\begin{table}[h!]
\centering
\begin{tabular}{|l||c|c|c|}
    \hline
    & \multicolumn{3}{c|}{KL Divergence ($\downarrow$)} \\
    \hline
    Dataset & \multicolumn{1}{c|}{Linear} & \multicolumn{1}{c|}{GMM} & \multicolumn{1}{c|}{Fourier} \\
    \hline\hline
    Gaussian & \multicolumn{1}{c|}{\raggedright 0.170 $\pm$ 0.052} & \multicolumn{1}{c|}{\raggedright \textbf{0.026} $\pm$ 0.011} & \multicolumn{1}{c|}{\raggedright {0.116} $\pm$ 0.043} \\
    GMM-2 & \multicolumn{1}{c|}{\raggedright 0.238 $\pm$ 0.032} & \multicolumn{1}{c|}{\raggedright \textbf{0.030} $\pm$ 0.006} & \multicolumn{1}{c|}{\raggedright {0.146} $\pm$ 0.033} \\
    Beta & \multicolumn{1}{c|}{\raggedright 0.234 $\pm$ 0.032} & \multicolumn{1}{c|}{\raggedright 0.407 $\pm$ 0.012} & \multicolumn{1}{c|}{\raggedright \textbf{0.191} $\pm$ 0.016} \\
    \hline
\end{tabular}

\vspace{5pt}

\begin{tabular}{|l||c|c|c|}
    \hline
    & \multicolumn{3}{c|}{Smoothness ($\downarrow$)} \\
    \hline
    Dataset & \multicolumn{1}{c|}{Linear} & \multicolumn{1}{c|}{GMM} & \multicolumn{1}{c|}{Fourier} \\
    \hline\hline
    Gaussian & \multicolumn{1}{c|}{\raggedright 0.116 $\pm$ 0.049} & \multicolumn{1}{c|}{\raggedright 0.068 $\pm$ 0.012} & \multicolumn{1}{c|}{\raggedright \textbf{0.057} $\pm$ 0.011} \\
    GMM-2 & \multicolumn{1}{c|}{\raggedright 0.068 $\pm$ 0.022} & \multicolumn{1}{c|}{\raggedright 0.043 $\pm$ 0.009} & \multicolumn{1}{c|}{\raggedright \textbf{0.038} $\pm$ 0.007} \\
    Beta & \multicolumn{1}{c|}{\raggedright 0.127 $\pm$ 0.044} & \multicolumn{1}{c|}{\raggedright \textbf{0.061} $\pm$ 0.003} & \multicolumn{1}{c|}{\raggedright {0.076} $\pm$ 0.021} \\
    \hline
\end{tabular}
\caption{KL divergence and Smoothness for the three classification heads (Linear, GMM, and Fourier) on each of the three synthetic datasets (Gaussian, GMM-2, Beta). 
As expected, the GMM head achieves the best KL divergence on the Gaussian and GMM-2 datasets, as their conditional distributions are Gaussian. 
However, the Fourier head has the best KL divergence on the Beta dataset.
This demonstrates the flexibility of the Fourier head in modeling non-Gaussian distributions as well.}
\label{tab:toy_metrics_gmm}
\end{table}

\begin{table}[h!]
\centering
\begin{center}
    Toy Example: MSE ($\downarrow$)  \par\medskip
\end{center}

\begin{tabular}{|l||c|c|c|c|}
    \hline
    \multicolumn{1}{|c||}{}&
     \multicolumn{1}{c|}{}& \multicolumn{3}{c|}{Classification Head}\\
    \hline
    Dataset & \multicolumn{1}{c|}{Pointwise Regression} & \multicolumn{1}{c|}{Linear} & \multicolumn{1}{c|}{GMM} & \multicolumn{1}{c|}{Fourier} \\
    \hline\hline
    Gaussian & \multicolumn{1}{c|}{\raggedright 0.010 $\pm$ 0.001}& \multicolumn{1}{c|}{\raggedright 0.013 $\pm$ 0.001} &  \multicolumn{1}{c|}{\raggedright 0.010 $\pm$ 0.001} & \multicolumn{1}{c|}{\raggedright {0.012} $\pm$ 0.001} \\
    GMM-2 & \multicolumn{1}{c|}{\raggedright 0.121 $\pm$ 0.004}& \multicolumn{1}{c|}{\raggedright 0.126 $\pm$ 0.004} &  \multicolumn{1}{c|}{\raggedright 0.120 $\pm$ 0.004 } &\multicolumn{1}{c|}{\raggedright 0.123 $\pm$ 0.005} \\
    Beta & \multicolumn{1}{c|}{\raggedright 0.275 $\pm$  0.009}& \multicolumn{1}{c|}{\raggedright 0.276 $\pm$ 0.008} &  \multicolumn{1}{c|}{\raggedright 0.273 $\pm$ 0.009} &\multicolumn{1}{c|}{\raggedright 0.275 $\pm$ 0.008} \\
    \hline
\end{tabular}

\caption{We compare the MSE between the linear head, GMM head, and the Fourier head with $12$ frequencies and no regularization, for every dataset in the toy example. 
We also include a Pointwise Regression model baseline, whose base architecture is same as the classification heads, except the last classification layer is replaced with a dense layer having output dimension 1.
We train the Pointwise Regression model using MSE.
For a given dataset, the MSE values across all of the models is roughly similar.
This is because the pointwise regression model tends to regress to the mean, as does the expected value of each of the classification heads.}
\label{tab:toy_metrics_extra}
\end{table}

\begin{figure}[h]

\begin{center}
    Toy Example: Ground Truth Conditional Distribution vs. Pointwise Regression Output\par\medskip
    \includegraphics[scale=0.46]{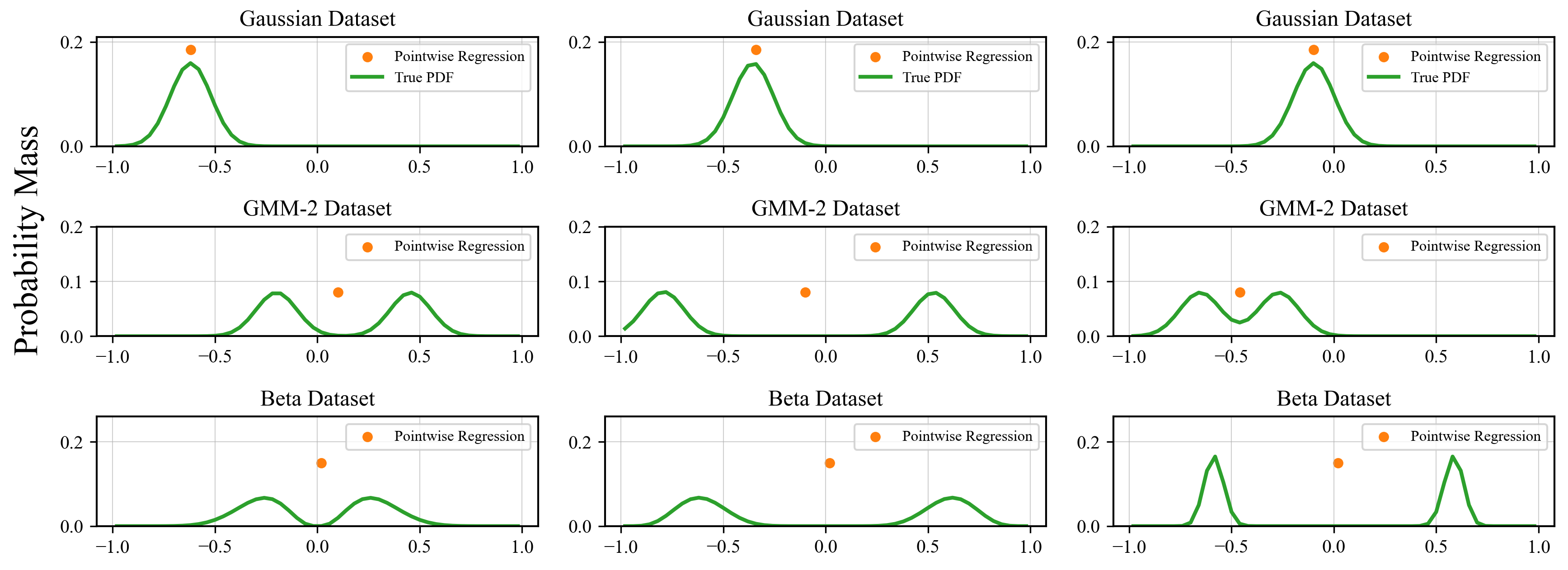}
\end{center}
\caption{We present some examples of the \tabgreen{ground truth conditional distribution} versus the point predicted by the \taborange{Pointwise Regression} model. 
The regression model simply regresses to the mean of the conditional distribution.
Accordingly, the regression model performs extremely well for the unimodal Gaussian dataset, and it performs poorly for the bimodal datasets GMM-2 and Beta.}
\label{fig:toy_example_mse}
\end{figure}

\subsection{MLE-based Fourier Head}\label{app:mle_fourier_head}

We carry out experiments in the continuous domain analogous to those we did in the quantized domain from the toy example.

\textbf{Dataset}: We use the same synthetic datasets--Gaussian, GMM-2, and Beta--as in the previous subsection, except we do not quantize the data into bins. 

\textbf{Task}: Predict the conditional distribution of $z$ given $(x,y)$.

\textbf{Model architecture}: Our model is an MLP with one hidden layer and the final layer is an MLE-based Fourier head which returns the $2N+1$ learned real coefficients of the Fourier series, mapping $\mathbb{R}^2 \to \mathbb{R}^{64} \to \mathbb{R}^{32} \to \mathbb{R}^{2N+1}$. Alongside the Fourier-MLE model, we consider a baseline where the final layer is a Gaussian model mixture whose means and standard deviations are learned using an MLE objective. For the MLE-Fourier model, we sweep over frequencies $N=2,4,\dots,20$ and regularization $\gamma \in \{0,10^{-6}\}.$

\textbf{Model evaluation}: We use two metrics for evaluation. The first metric is the average KL divergence $D_{\text{KL}}(\mathcal{P}(x,y)\|M(x,y))$, where $\mathcal{P}(x,y)$ is the fixed conditional distribution of $z$ given $(x,y)$ and $M(x,y)$ denotes the predicted probability density function of $z$. Our second metric is perplexity, which is the exponential of the average negative log likelihood of the test set. 

\textbf{Results}: The metrics for the best performing model on each dataset are reported in Table \ref{tab:toy_mle_metrics}. 
Figure \ref{fig:toy_mle_example_1d} presents sample visualizations of the learned conditional distributions alongside the true densities. While, as expected, the GMM-MLE head outperforms the Fourier-MLE head on the Gaussian and GMM-2 datasets due to the Gaussian nature of the datasets, the Fourier-MLE head outperforms the GMM-MLE head on the Beta dataset, highlighting the flexibility of the Fourier-MLE head in learning a large variety of probability distributions. 
In Appendix~\ref{app:toy_exp_details}, we present the results of a study on the impact of number of frequencies and Fourier regularization in the MLE setting.

\begin{table}[h!]
\centering
\begin{tabular}{|l||c|c||c|c|}
    \hline
    & \multicolumn{2}{c|}{KL Divergence ($\downarrow$)} & \multicolumn{2}{c|}{Perplexity ($\downarrow$)} \\
    \hline
    Dataset & \multicolumn{1}{c|}{GMM-MLE} & \multicolumn{1}{c|}{Fourier-MLE} & \multicolumn{1}{c|}{GMM-MLE} & \multicolumn{1}{c|}{Fourier-MLE} \\
    \hline\hline
    Gaussian & \multicolumn{1}{c|}{\raggedright \textbf{0.012} $\pm$ 0.002} & \multicolumn{1}{c|}{\raggedright 0.034 $\pm$ 0.003} & \multicolumn{1}{c|}{\raggedright \textbf{0.410} $\pm$ 0.014} & \multicolumn{1}{c|}{\raggedright 0.422 $\pm$ 0.019} \\
    GMM-2 & \multicolumn{1}{c|}{\raggedright \textbf{0.018} $\pm$ 0.001} & \multicolumn{1}{c|}{\raggedright 0.072 $\pm$ 0.005} & \multicolumn{1}{c|}{\raggedright \textbf{0.702} $\pm$ 0.015} & \multicolumn{1}{c|}{\raggedright 0.740 $\pm$ 0.014} \\
    Beta & \multicolumn{1}{c|}{\raggedright 0.257 $\pm$ 0.03} & \multicolumn{1}{c|}{\raggedright \textbf{0.130} $\pm$ 0.005} & \multicolumn{1}{c|}{\raggedright 0.623 $\pm$ 0.035} & \multicolumn{1}{c|}{\raggedright \textbf{0.542} $\pm$ 0.017} \\
    \hline
\end{tabular}
\caption{We compare metrics between the GMM-MLE head, and the Fourier-MLE head with $12$ frequencies and no regularization, for every dataset in our toy example.
We aggregate metrics over 4 different seeds and report the standard deviation.}
\label{tab:toy_mle_metrics} 
\end{table}

\begin{figure}[h]
\begin{center}
    \includegraphics[scale=0.48]{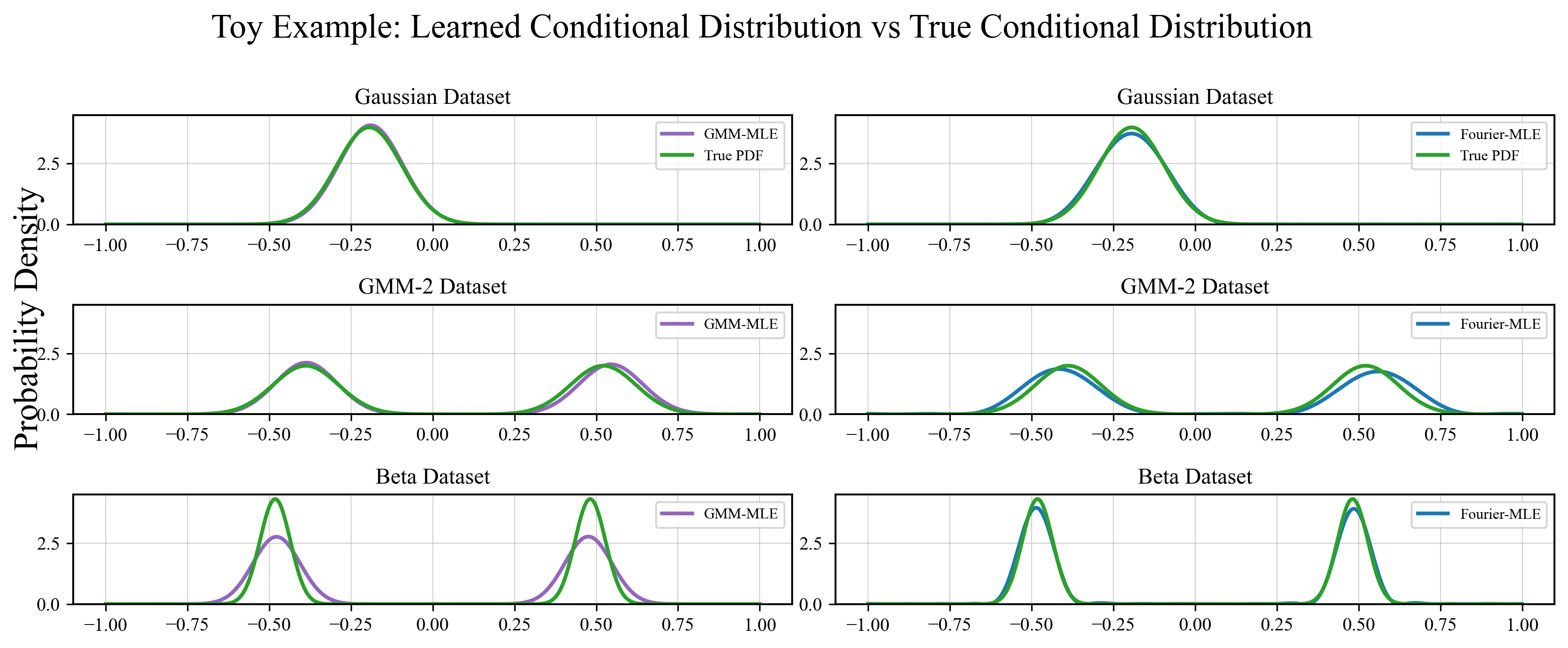}
\end{center}
\caption{Comparison between the PDFs learned by the \tabpurple{GMM-MLE head} and the \tabblue{Fourier-MLE head} for each of the datasets in the toy example.
While \tabpurple{GMM-MLE} outperforms \tabblue{Fourier-MLE} on the Gaussian and GMM-2 datasets, \tabblue{Fourier-MLE} performs better on the Beta dataset.}
\label{fig:toy_mle_example_1d} 
\end{figure}

\subsection{Are LLMs Random Number Generators?}\label{sec:llama_rng}

\textbf{Dataset:} We create a training dataset using the prompt template: \emph{``The following is a list of normally distributed random numbers in the interval [-1, 1] with mean $\mu$ and std $\sigma$: $x_1, x_2,$ ''} and response template: ``$x_3$'', where $(\mu,\sigma)\in \{(-0.55, 0.10), (-0.03, 0.24), (0.42, 0.16),(0.55, 0.10)\}$, and each $x_i\sim\mathcal{N}(\mu,\sigma)$.
We write each number using two decimal places.
Our training dataset consists of 256 such (prompt, response) pairs, divided evenly among the four distributions.

\textbf{Model Architecture:} We consider three different models: the original Llama-3.1-8B-Instruct model \cite{dubey2024llama}, the original model after LoRA fine-tuning, and the original model where we replace the linear classification head with the Fourier head and perform LoRA fine-tuning.
For the Fourier head, we use an output dimension of $200$ and the original latent space $[-1,1]$ because of our chosen decimal precision.
We conduct LoRA fine-tuning for 16 epochs with a learning rate of $3*10^{-4}$ and a linear decay schedule, and a batch size of $64$.
We release all of our training code on our project page.

\textbf{Model Evaluation:} We compute two metrics: the first is Total Variation Distance; we define this to be one half of the $L^\infty$ distance between the ground truth quantized Gaussian histogram, and the empirical histogram of samples, quantized into $20$ bins.
Our second metric is the Quantity of Unique Samples.
In Figure~\ref{fig:rng_qualitative} we present example histograms for each of the three models that we consider for the median TVD in each class.
Those results demonstrate that the Fourier head learns a more accurate PMF.
And in Figure~\ref{fig:rng_varying_freqs} we demonstrate that the Fourier head model consistently obtains a lower TVD and a greater diversity of samples.
We hypothesize that the LoRA-finetuned baseline model has fewer diverse samples because it memorizes training data instead of learning the actual distribution.
In contrast, the Fourier head is forced to learn a continuous distribution, and samples directly from that distribution.

\textbf{Related works which consider LLMs as random number generators:} 
\citet{gu2024llms} explores probability distribution sampling in LLMs in the context of behavioral simulation, which demonstrates an application of using LLMs as random number generators.
And \citet{paruchuri2024odds} explores the tasks of using LLMs for estimating percentiles, drawing samples, and calculating probabilities.
Lastly, \citet{hopkins2023can} explores sampling from numerical probability distributions using LLMs, but they only consider a single non-uniform density, namely the normal distribution $N(0.5, 0.2887)$. 
They find that LLMs don't perform well at this task, but they don't thoroughly investigate model-based interventions. 
In contrast, we thoroughly investigate the impact of fine-tuning, and replacing the token classification head one with a better inductive bias for modeling complex probability distributions.

\begin{figure}[h]
\begin{center}
    \includegraphics[scale=0.45]{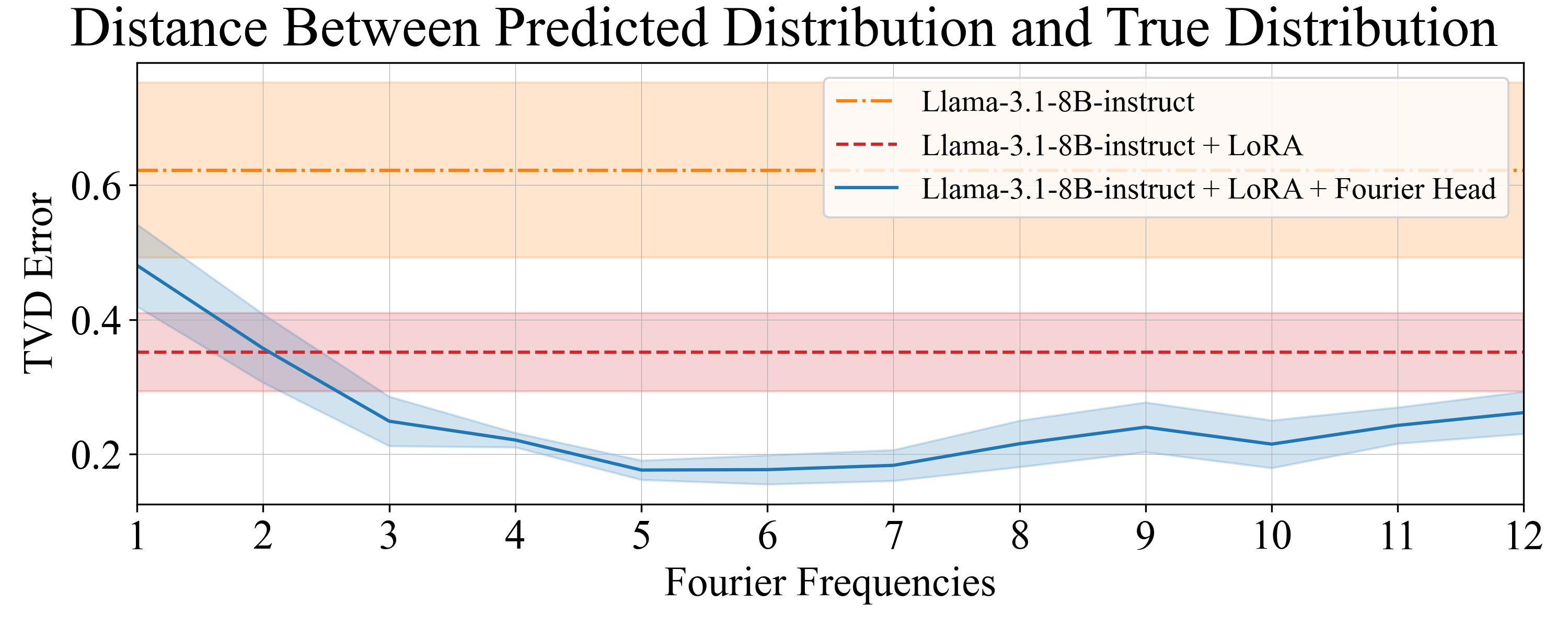}
    \includegraphics[scale=0.45]{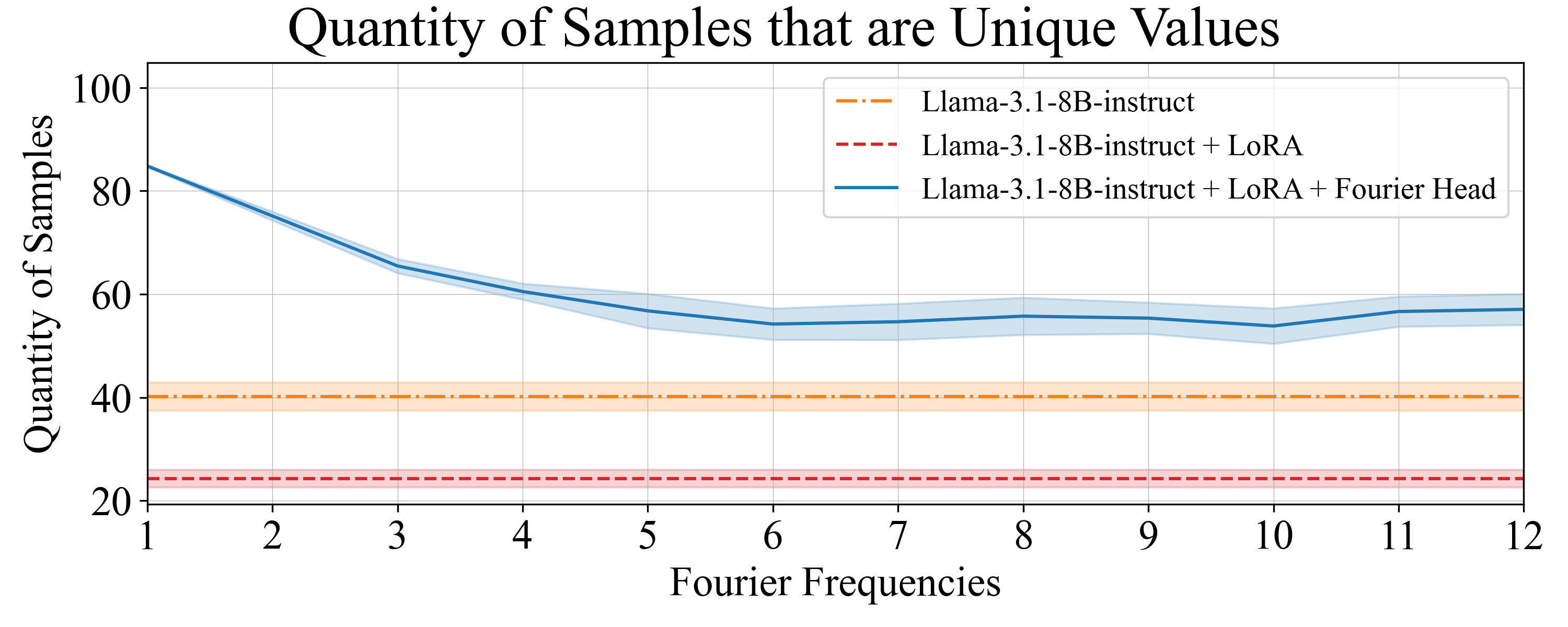}
\end{center}
\caption{We demonstrate that the Fourier head model consistently obtains a lower total variation distance, as well as a greater diversity of samples.
For TVD (top), lower is better, because lower values indicate that the learned distribution is closer to the ground truth distribution.
And for quantity of samples (bottom), higher is better, because lower values indicate that the LM has just memorizes specific numbers instead of performing sampling.
We present here the mean values across the four distributions, for all ten seeds.
We can see that the \tabblue{Fourier head} obtains more diverse samples, and learns a distribution closer to the ground truth.}
\label{fig:rng_varying_freqs}
\end{figure}

\section{Additional Experiment Details, Large-Scale Examples}

\subsection{Decision Transformer}\label{app:dt_exp_details}

Following the original Decision Transformer implementation, we trained on 500k transitions observed by a DQN agent during training, for 5 epochs. 
We trained on the same model size as the original implementation (a GPT-1 model with approximately 2.012M parameters) which takes about 4 hours on a single GPU.
We can see that in Figure~\ref{fig:decision_transformer_PMFs_example} that the PMFs learned by the Fourier head are smoother.
In Figure~\ref{fig:decision_transformer_varying_frequencies_more_games} we present results for more Atari games.
In Figure~\ref{fig:atari_graph_varying_model_size}, we present results from an ablation study of the model size.
The results demonstrate that, across model sizes, the Decision Transformer with a Fourier head is better at learning high-quality next action distributions than the Decision Transformer with a linear head.
And in Figure~\ref{fig:atari_ablation_varying_dataset_size}, we present results from an ablation study of the dataset size, which show that the Fourier head obtains larger returns than the Linear classification head across dataset sizes.

\begin{figure}[h]
\begin{center}
    \includegraphics[scale=0.45]{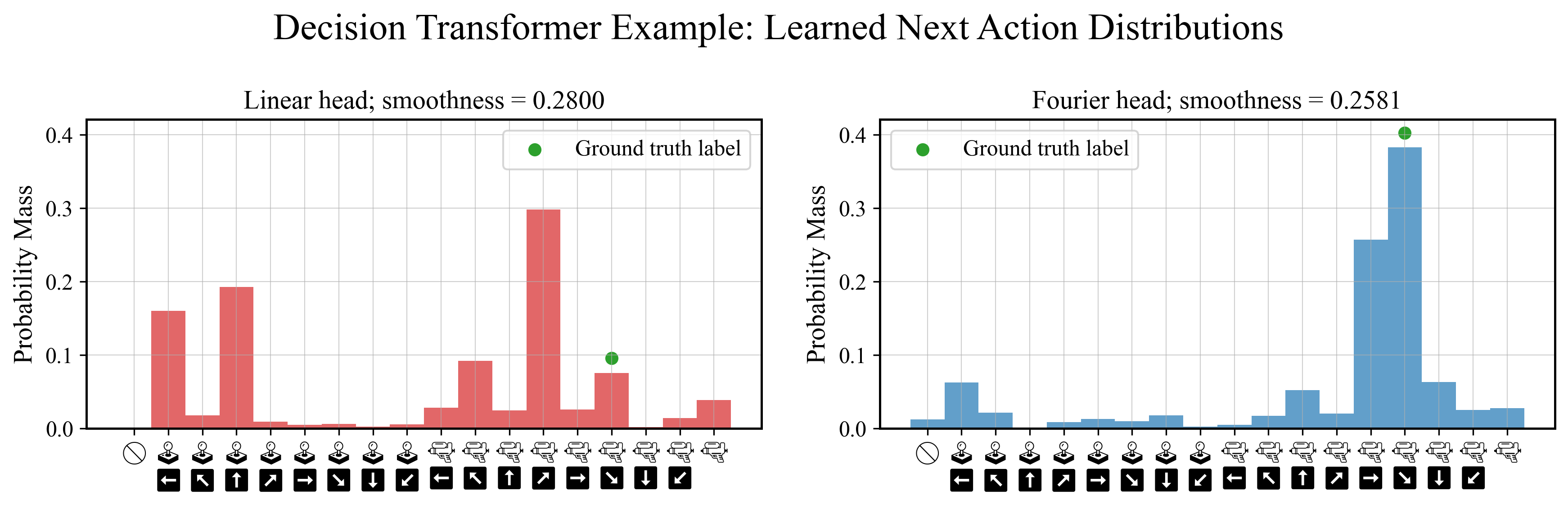}
\end{center}
\caption{We present example next action distributions for a single step in the Decision Transformer test split.
The \tabblue{Fourier} agent with 8 frequencies produces a ``clump'' of actions that is semantically meaningful.
Namely, this agent almost certainly wants to shoot in the down right or right direction, presumably because there is a submarine in that direction.
In contrast, the \tabred{linear} agent's next-action distribution doesn't clearly depict a strategy, and incorrectly assigns higher likelihoods to incorrect actions.
Because the Fourier head outputs a smoother PMF, it learns to concentrate more probability mass near the correct action.
}
\label{fig:decision_transformer_PMFs_example}
\end{figure}

\begin{figure}[h]
\begin{center}
    \includegraphics[scale=0.6]{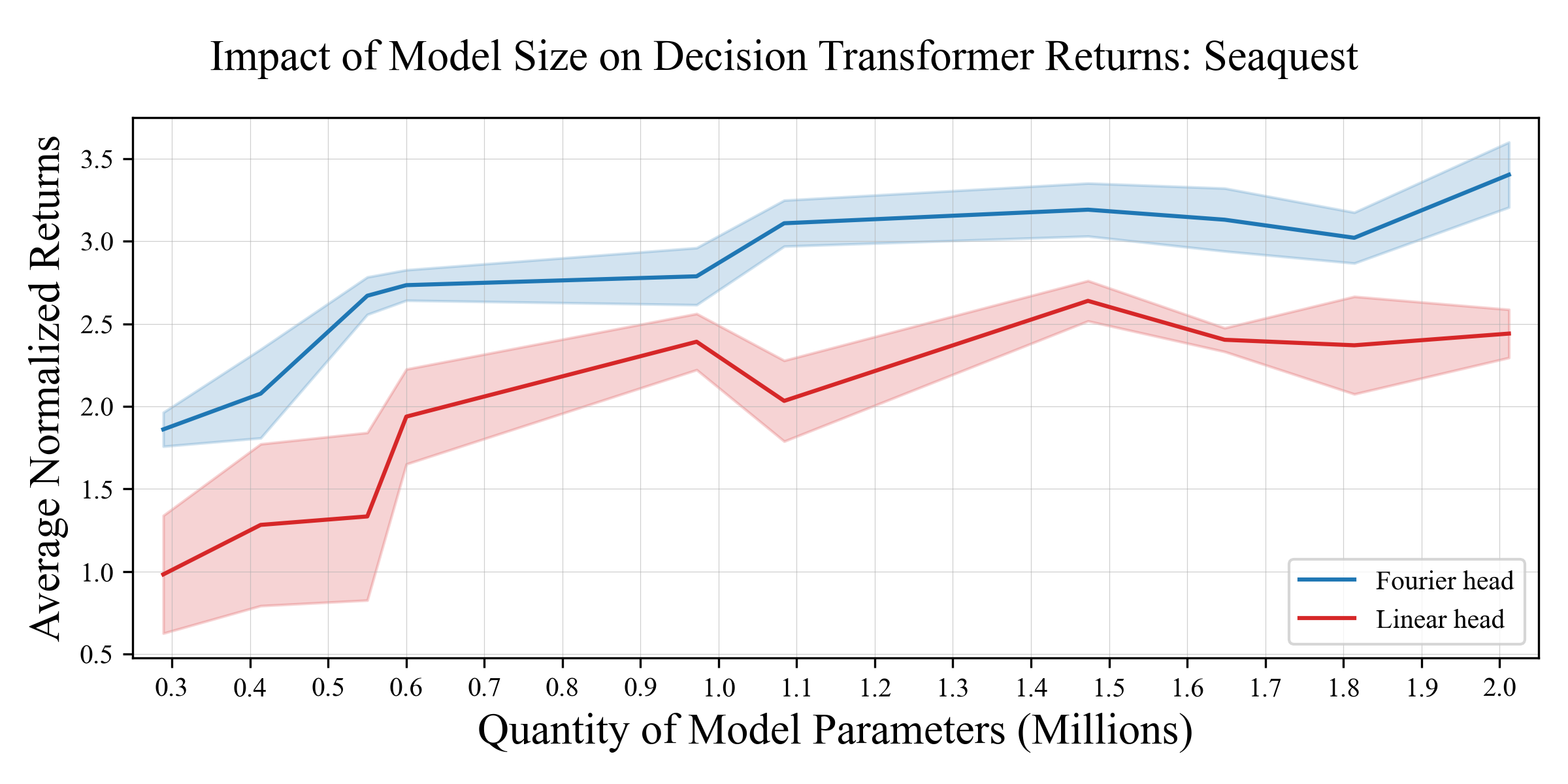}
\end{center}
\caption{We present an ablation study on the effect of the model size on the relative performance of the Fourier head and the Linear head.
The results demonstrate that, across model sizes, the Decision Transformer with a \tabblue{Fourier head} is better at learning high-quality next action distributions than the Decision Transformer with a \tabred{linear head}.}
\label{fig:atari_graph_varying_model_size}
\end{figure}

\begin{figure}[h]
\begin{center}
    Fourier Head Ablation Study: Dataset Size\par\medskip
    \includegraphics[scale=0.8]{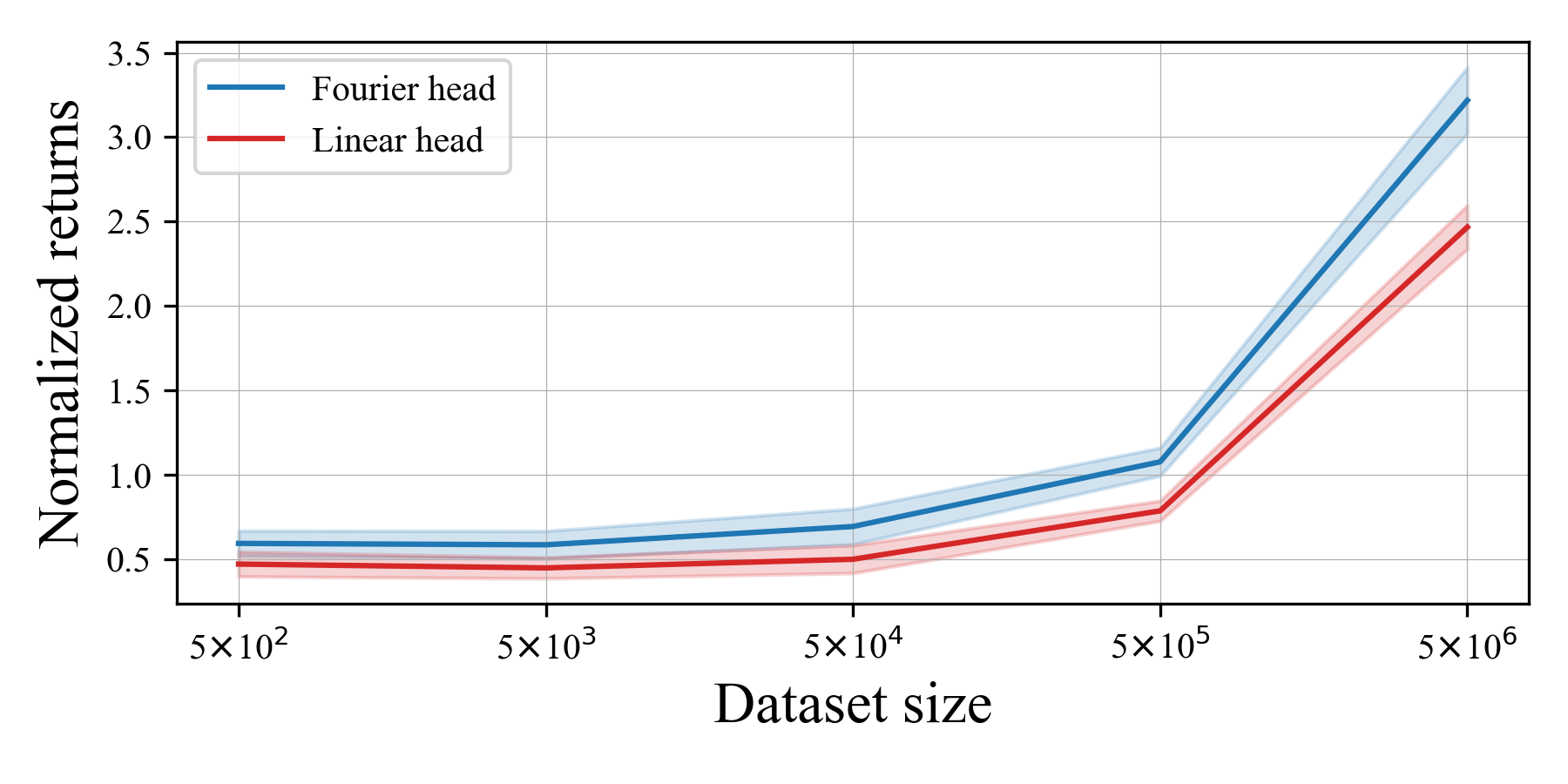}
\end{center}
\caption{In this ablation study, we analyze whether dataset size has any effect on the relative performance of the Linear head and the Fourier head.
Our results show that, across dataset sizes, the Decision Transformer agent with a \tabblue{Fourier head} achieves larger returns than the \tabred{linear head} on the Seaquest game.}
\label{fig:atari_ablation_varying_dataset_size}
\end{figure}

\begin{figure}[h]
\begin{center}
    \includegraphics[scale=0.45]{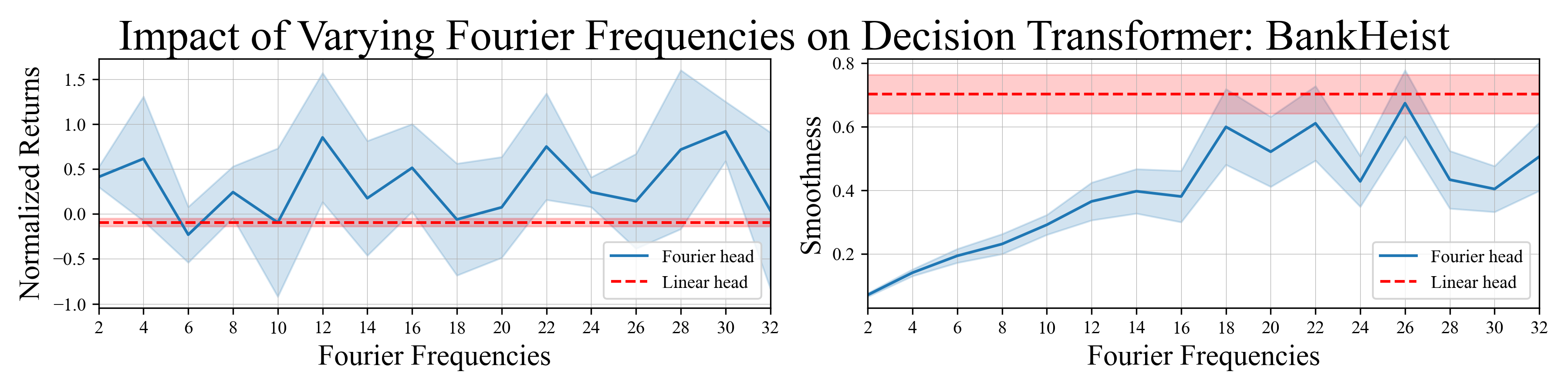}
    \includegraphics[scale=0.45]{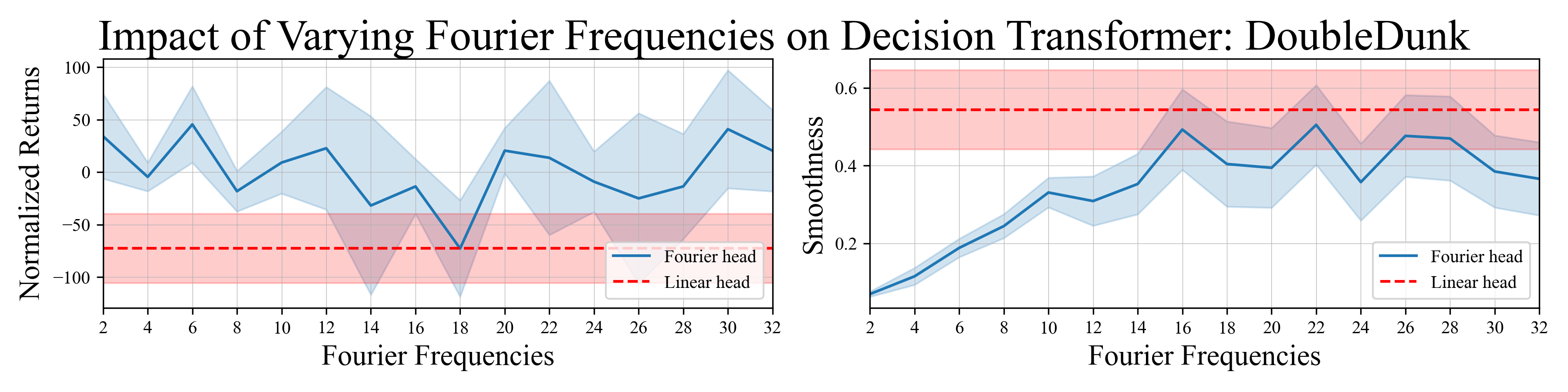}
    \includegraphics[scale=0.45]{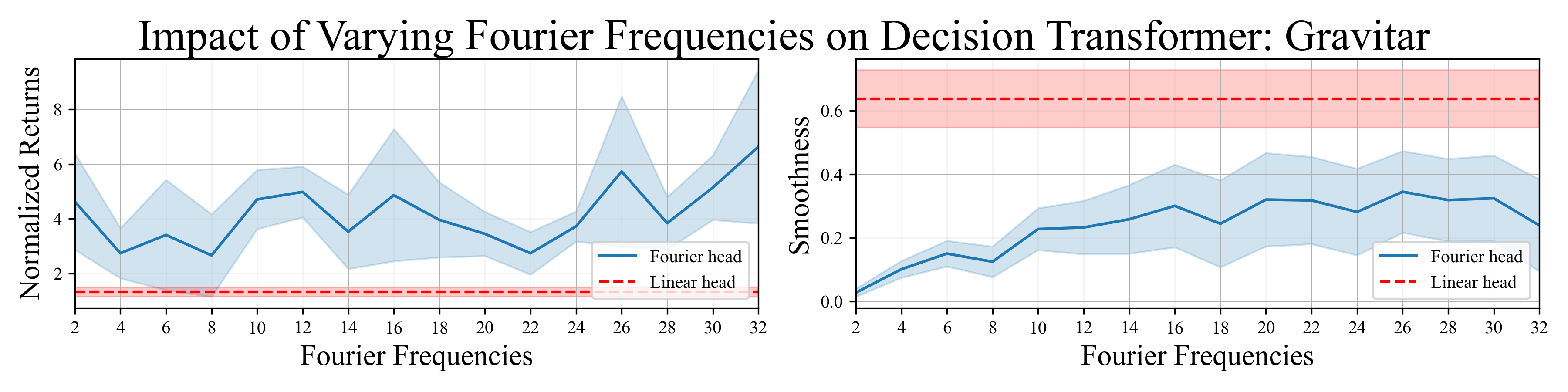}
\end{center}
\caption{We present empirical results for how the quantity of Fourier frequencies impacts returns and smoothness for additional imitation learning games.
For normalized returns, higher is better; for smoothness, lower is better.
We can see that for the BankHeist, DoubleDunk, and Gravitar games, the \tabblue{Fourier} agent consistently achieves higher normalized returns than the \tabred{linear} baseline agent, while still learning smoother next-action distributions.}
\label{fig:decision_transformer_varying_frequencies_more_games}
\end{figure}

\subsection{Chronos}\label{app:chronos_exp_details}

In Figure~\ref{fig:chronos_PMFs_example} we present a learned next-token PMF from a linear Chronos model, and a next-token PMF from a Chronos model which uses the linear head.
The Fourier head is about 4x smoother.
In Table~\ref{tab:Chronos_large_scale_metrics_ablation} we present results from an ablation study on the quantity of Fourier frequencies, choice of regularization, and binning strategy.
We followed the original Chronos implementation, keeping all hyperparameters the same. 
In particular, we trained for 200k steps, on the same model size as the original implementation (the T5 model with approximately 20M parameters) and this takes about 48 hours on 8 GPUs.
See Table~\ref{tab:all-datasets} for the datasets we used to train and evaluate Chronos.

\begin{figure}[h]
\begin{center}
    \includegraphics[scale=0.45]{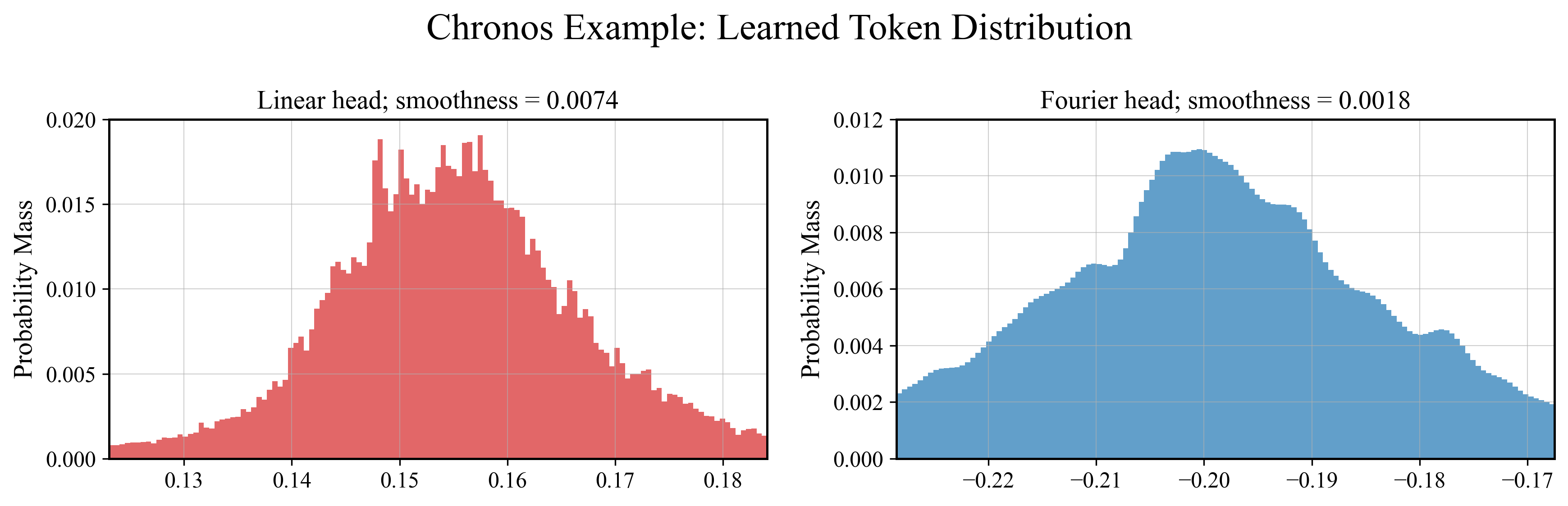}
\end{center}
\caption{We present the next token value distribution for a single forecasted timestep on the Tourism Monthly dataset.
We observe that the \tabblue{Fourier head}'s learned conditional distribution is smoother, fitting signal more robustly, whereas the \tabred{linear head} overfits to the noise, and is therefore more jagged.
We note that the $x$-axis represents the bins in the latent space $[-1,1]$; the $x$-axis values for the \tabblue{Fourier head} are lower because the \tabred{linear head} uses uniform binning, and the \tabblue{Fourier head} uses mixed precision binning.}
\label{fig:chronos_PMFs_example}
\end{figure}

\begin{figure}[h]
\begin{center}
    Fourier Head Ablation Study: Chronos Dataset Size\par\medskip
    \includegraphics[scale=0.8]{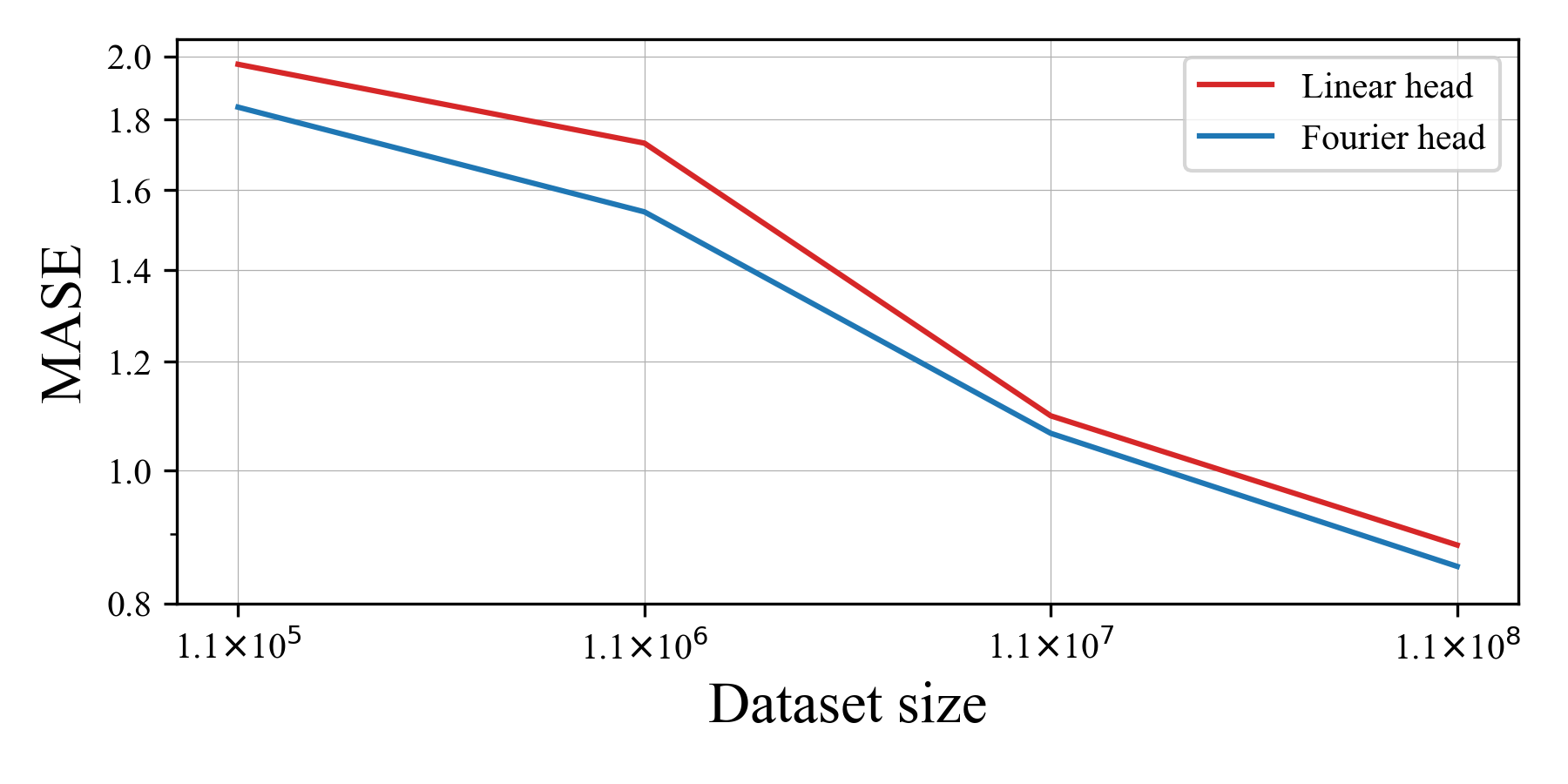}
\end{center}
\caption{In this ablation study, we analyze whether dataset size has any effect on the relative performance of the linear head and the Fourier head for the probabilistic time series task.
Our results show that, across dataset sizes, the \tabblue{Fourier head} yields more accurate forecasts than the \tabred{linear head}.
For the dataset sizes $1.1\times 10^5,1.1\times 10^6$, and $1.1\times 10^7$, we report the average MASE across four seeds; for the dataset size $1.1\times 10^8$ we report the MASE from Table~\ref{tab:Chronos_large_scale_metrics}.
We generate the plot following \citep{kaplan2020scaling} and observe a similar power-law scaling behavior for both methods, with the Fourier head consistently outperforming the linear head.}
\label{fig:chronos_ablation_varying_dataset_size}
\end{figure}

\begin{figure}[h]
\begin{center}
    Fourier Head Ablation Study: Chronos Model Size\par\medskip
    \includegraphics[scale=0.8]{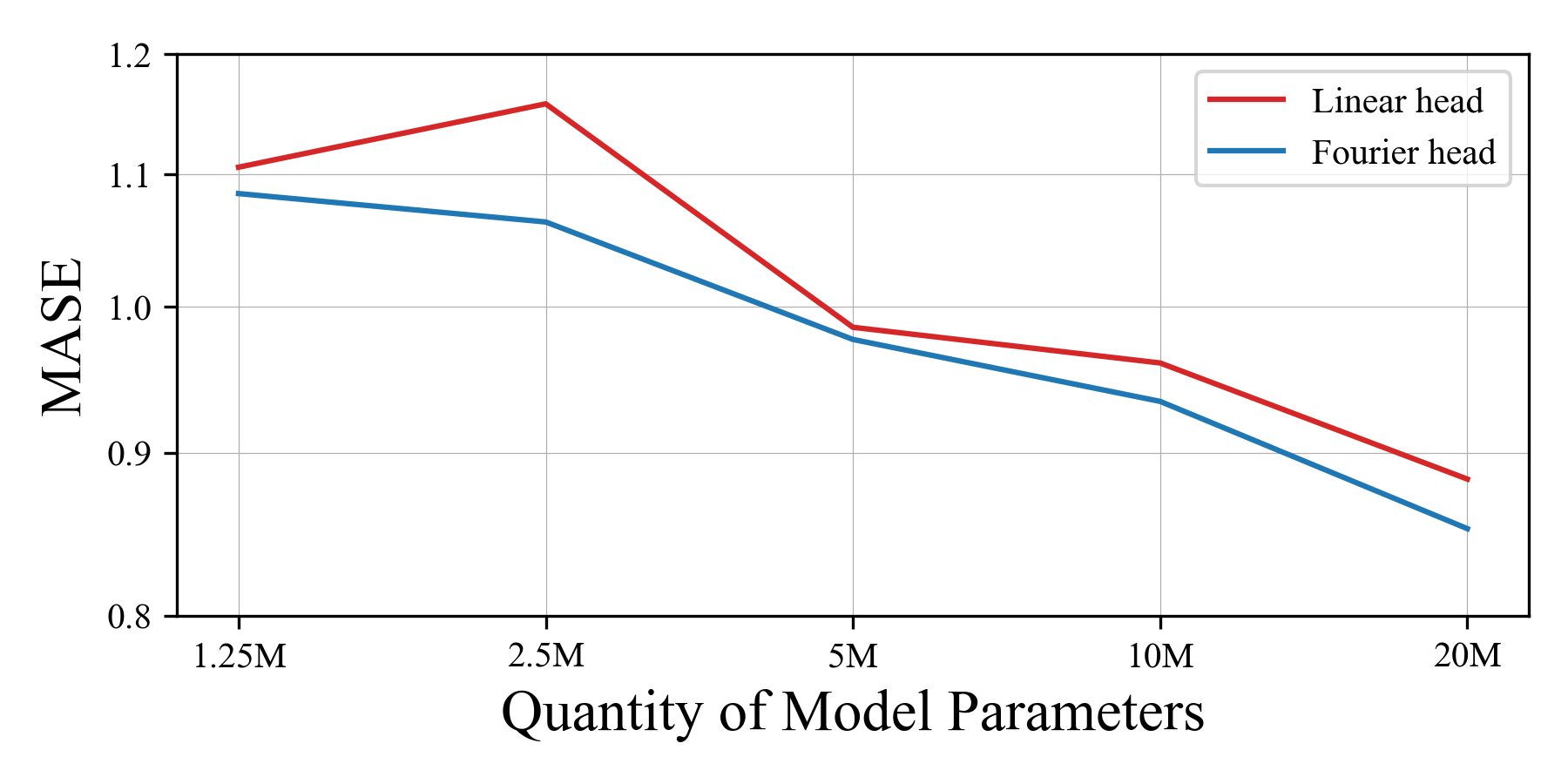}
\end{center}
\caption{In this ablation study, we analyze whether model size has any effect on the relative performance of the linear head and the Fourier head for the probabilistic time series forecasting task.
Our results show that, across model sizes, the \tabblue{Fourier head} yields more accurate forecasts than the \tabred{linear head}.
For the model sizes $1.25$M, $2.5$M, $5$M, and $10$M, we report the average MASE across three seeds; for the model size $20$M we report the MASE from Table~\ref{tab:Chronos_large_scale_metrics}.
We generate the plot following \citep{kaplan2020scaling} and observe a similar power-law scaling behavior for both methods, with the Fourier head consistently outperforming the linear head.}
\label{fig:chronos_ablation_varying_model_size}
\end{figure}

\begin{table}[!h]
    \centering
        \scalebox{0.9}{
        \begin{tabular}{|l||r|r|r|r|}
        \hline
        Chronos Time Series Model & MASE $(\downarrow)$ & WQL $(\downarrow)$ & Smoothness $(\downarrow)$ \\ \hline\hline
        
        Linear      & 0.883 & 0.750 & 0.1689 $\pm$ 0.1087 \\ \hline
        Fourier-64  & 0.875 & 0.798 & \textbf{0.0032 $\pm$ 0.0012} \\ \hline
        Fourier-128 & 0.872 & 0.767 & 0.0068 $\pm$ 0.0035 \\ \hline
        Fourier-256 & 0.859 & 0.755 & 0.0139 $\pm$ 0.0087 \\ \hline
        \textbf{Fourier-550} & \textbf{0.852} & \textbf{0.749} & 0.0283 $\pm$ 0.0224 \\ \hline\hline
        Fourier-550 (no regularization) & 0.861 & 0.753 & 0.0286 $\pm$ 0.0219 \\ \hline
        Fourier-550 (uniform precision binning) & 0.873 & 0.747 & 0.0395 $\pm$ 0.0252 \\ \hline
        \end{tabular}
        }
    \caption{We present large-scale experiments on Chronos time series forecasting.
    Notably, every Fourier model outperforms the linear baseline on MASE and smoothness metrics.
    We can see that within the Fourier model class, decreasing the number of frequencies lets you trade off the continuity of the learned probability mass functions (smoothness) for the quality of the forecasts (MASE, WQL).
    In the bottom two rows, we present an ablation for our large-scale experiments on Chronos time series forecasting.
    The best overall performing Fourier-550 model uses Fourier regularization and mixed precision binning, which are both techniques informed by Fourier analysis.
    We observe that both of these interventions improve the MASE, but have minimal effect on the WQL.
    Note that the choice of binning strategy doesn't affect the performance of the linear baseline.}
    \label{tab:Chronos_large_scale_metrics_ablation}
\end{table}

\setlength{\tabcolsep}{2pt}
\begin{table}[h]
\centering
\caption{
All datasets that are used for our time series forecasting experiments.
We built our time series forecasting experiments on top of Chronos \citep{ansari2024chronos}, and this table is mostly copied from their paper.
The datasets are partitioned according to how they are used for training and evaluation of models:
\emph{pretraining-only} data is only used for  training; \emph{evaluation} data is \emph{not} used in training  models, but only for evaluation (final $H$ observations).
All of our evaluation datasets came from the zero-shot evaluation set from Chronos.
}
\label{tab:all-datasets}
\rowcolors{2}{gray!25}{white}
\renewcommand\theadfont{}
\fontsize{10}{11}\selectfont

\begin{tabular}{lllrrrrr}
\toprule
\multirow{2}{*}{\textbf{Dataset}} & \multirow{2}{*}{\textbf{Domain}} & \multirow{2}{*}{\textbf{Freq.}} & \multirow{2}{*}{\textbf{\# Series}} &    \multicolumn{3}{c}{\textbf{Series Length}}     & \multicolumn{1}{c}{\textbf{Prediction}}\\
\cmidrule(lr){5-7}
\rowcolor{white} &        &           &            &      \textbf{min}     &  \textbf{avg}     &     \textbf{max}     & \multicolumn{1}{c}{\textbf{Length ($H$)}} \\
\midrule
\rowcolor{white}\textbf{Pretraining} & & & & & & &  \\
\midrule
\href{https://www.kaggle.com/datasets/volpatto/temperature-timeseries-for-some-brazilian-cities}{Brazilian Cities Temperature} & nature & M & 12 & 492 & 757 & 1320 & - \\
\href{https://ecobici.cdmx.gob.mx/en/open-data/}{Mexico City Bikes} & transport & 1H & 494 & 780 & 78313 & 104449 & - \\
\href{https://www.nrel.gov/grid/solar-power-data.html}{Solar (5 Min.)} & energy & 5min & 5166 & 105120 & 105120 & 105120 & - \\
\href{https://www.nrel.gov/grid/solar-power-data.html}{Solar (Hourly)} & energy & 1H & 5166 & 8760 & 8760 & 8760 & - \\
\href{https://www.kaggle.com/datasets/nicholasjhana/energy-consumption-generation-prices-and-weather}{Spanish Energy and Weather} & energy & 1H & 66 & 35064 & 35064 & 35064 & - \\
\href{https://github.com/mbohlkeschneider/gluon-ts/tree/mv_release/datasets}{Taxi (Hourly)} & transport & 1H & 2428 & 734 & 739 & 744 & - \\
\href{https://cdiac.ess-dive.lbl.gov/ftp/ushcn_daily/}{USHCN} & nature & 1D & 6090 & 5906 & 38653 & 59283 & - \\
\href{https://github.com/pangeo-data/WeatherBench}{Weatherbench (Daily)} & nature & 1D & 225280 & 14609 & 14609 & 14610 & - \\
\href{https://github.com/pangeo-data/WeatherBench}{Weatherbench (Hourly)} & nature & 1H & 225280 & 350633 & 350639 & 350640 & - \\
\href{https://github.com/pangeo-data/WeatherBench}{Weatherbench (Weekly)} & nature & 1W & 225280 & 2087 & 2087 & 2087 & - \\
\href{https://wikimedia.org/api/rest_v1/}{Wiki Daily (100k)} & web & 1D & 100000 & 2741 & 2741 & 2741 & - \\
\href{https://zenodo.org/record/4654909}{Wind Farms (Daily)} & energy & 1D & 337 & 71 & 354 & 366 & - \\
\href{https://zenodo.org/record/4654909}{Wind Farms (Hourly)} & energy & 1H & 337 & 1715 & 8514 & 8784 & - \\
\midrule
\rowcolor{white}\textbf{Evaluation} & & & & & & & \\
\midrule
\href{https://zenodo.org/record/4659727}{Australian Electricity} & energy & 30min & 5 & 230736 & 231052 & 232272 & 48 \\
\href{https://zenodo.org/records/4656042}{CIF 2016} & banking & 1M & 72 & 28 & 98 & 120 & 12 \\
\href{https://zenodo.org/record/4656022}{Car Parts} & retail & 1M & 2674 & 51 & 51 & 51 & 12 \\
\href{https://zenodo.org/record/4656014}{Hospital} & healthcare & 1M & 767 & 84 & 84 & 84 & 12 \\
\href{https://zenodo.org/records/4656159}{M1 (Monthly)} & various & 1M & 617 & 48 & 90 & 150 & 18 \\
\href{https://zenodo.org/records/4656154}{M1 (Quarterly)} & various & 3M & 203 & 18 & 48 & 114 & 8 \\
\href{https://zenodo.org/records/4656193}{M1 (Yearly)} & various & 1Y & 181 & 15 & 24 & 58 & 6 \\
\href{https://zenodo.org/records/4656298}{M3 (Monthly)} & various & 1M & 1428 & 66 & 117 & 144 & 18 \\
\href{https://zenodo.org/records/4656262}{M3 (Quarterly)} & various & 3M & 756 & 24 & 48 & 72 & 8 \\
\href{https://zenodo.org/records/4656222}{M3 (Yearly)} & various & 1Y & 645 & 20 & 28 & 47 & 6 \\
\href{https://github.com/Mcompetitions/M4-methods}{M4 (Quarterly)} & various & 3M & 24000 & 24 & 100 & 874 & 8 \\
\href{https://github.com/Mcompetitions/M4-methods}{M4 (Yearly)} & various & 1Y & 23000 & 19 & 37 & 841 & 6 \\
\href{https://github.com/Nixtla/datasetsforecast/blob/main/datasetsforecast/m5.py}{M5} & retail & 1D & 30490 & 124 & 1562 & 1969 & 28 \\
\href{http://www.neural-forecasting-competition.com/downloads/NN5/datasets/}{NN5 (Daily)} & finance & 1D & 111 & 791 & 791 & 791 & 56 \\
\href{https://zenodo.org/records/4656125}{NN5 (Weekly)} & finance & 1W & 111 & 113 & 113 & 113 & 8 \\
\href{https://zenodo.org/record/4656096}{Tourism (Monthly)} & various & 1M & 366 & 91 & 298 & 333 & 24 \\
\href{https://zenodo.org/record/4656093}{Tourism (Quarterly)} & various & 1Q & 427 & 30 & 99 & 130 & 8 \\
\href{https://zenodo.org/record/4656103}{Tourism (Yearly)} & various & 1Y & 518 & 11 & 24 & 47 & 4 \\
\href{https://zenodo.org/record/4656132}{Traffic} & transport & 1H & 862 & 17544 & 17544 & 17544 & 24 \\
\href{https://zenodo.org/record/4654822}{Weather} & nature & 1D & 3010 & 1332 & 14296 & 65981 & 30 \\
\bottomrule
\end{tabular}
\end{table}

\end{document}